\newif\ifpx
\DeclareMathOperator*{\En}{\mathbb{E}}
 \newcommand{\R}{\mathbb{R}}
\newcommand{\bigO}{\mathcal{O}}
\newcommand{\opt}{\text{\textsc{Opt}} }
 \theoremstyle{plain}
\newtheorem{nono-theorem}{Theorem}[]
\theoremstyle{plain}
\newtheorem{theorem}{Theorem}[section]
\newtheorem{lemma}[theorem]{Lemma}
\newtheorem{remark}[theorem]{Remark}
\newtheorem{prop}[theorem]{Proposition}
\theoremstyle{definition}
\newtheorem{definition}[theorem]{Definition}
\newcommand{\Exp}{\mathbb{E}}
\newcommand{\states}{\mathcal{S}}
\newcommand{\actions}{\mathcal{A}}
\newcommand{\sa}{(s,a)}
\newcommand{\boldpi}{{\pi}}
\newcommand{\bonus}{b
}
\newcommand{\kh}{_{k;h}}
\newcommand{\truemodel}{\mathcal{M}^{\star}}
\newcommand{\model}{\mathcal{M}}
\newcommand{\resources}{\mathcal{D}}
\newcommand{\vs}{{s}}
\newcommand{\va}{a}
\newcommand{\boldpist}{\boldpi^{\star}}
\newcommand{\cmdp}{\textsc{cMDP}}
\newcommand{\bellman}{\textsc{Bell}}
\newcommand{\xhdr}[1]{\vspace{1mm} \noindent{\bf #1}}
\definecolor{cerulean}{rgb}{0.10, 0.58, 0.75}
\newcommand{\added}[1]{\color{red}#1\color{black}}
\title{
Constrained episodic reinforcement learning in concave-convex and knapsack settings
}
\author{%
  Kiant\'e Brantley\\
  \textrm{University of Maryland}\\
  \texttt{kdbrant@cs.umd.edu}\\
  \And
  Miroslav Dud\'ik\\
  \textrm{Microsoft Research} \\
  \texttt{mdudik@microsoft.com} \\
  \And
  Thodoris Lykouris\\
  \textrm{Microsoft Research}\\
  \texttt{thlykour@microsoft.com}\\
  \And
  Sobhan Miryoosefi\\
  \textrm{Princeton University} \\
  \texttt{miryoosefi@cs.princeton.edu} \\
  \And
  Max Simchowitz\\
  \textrm{UC Berkeley}\\
  \texttt{msimchow@berkeley.edu}\\
  \And
  Aleksandrs Slivkins\\
  \textrm{Microsoft Research}\\
  \texttt{slivkins@microsoft.com}\\
  \And
  Wen Sun\\
  \textrm{Cornell University}\\
  \texttt{ws455@cornell.edu}\\
}
\begin{document}


\date{}
\maketitle
 \begin{abstract}

\mdedit{We propose an algorithm for tabular episodic reinforcement learning (RL) with constraints. We provide a modular analysis with strong theoretical guarantees for two general settings. First is the convex-concave setting: maximization of a concave reward function subject to constraints that expected values of some vector quantities (such as the use of unsafe actions) lie in a convex set. Second is the knapsack setting: \wsedit{maximization of reward subject to the constraint that the total consumption of any of the specified resources does not exceed specified levels during the whole learning process}. Previous work in constrained RL is limited to linear expectation constraints (a special case of convex-concave setting), or focuses on feasibility question, or on single-episode settings. Our experiments demonstrate that the proposed algorithm significantly outperforms these approaches in constrained episodic benchmarks.}\looseness=-1


 \end{abstract}




\section{Introduction}
\label{sec:intro}
Standard reinforcement learning (RL) approaches seek to maximize a scalar reward~\citep{SuttonBa98, SuttonBa18, Schulman2015TRPO, mnih2015human},
but in many settings this is insufficient, because the desired properties of the agent behavior are better described using constraints. For example, an autonomous
vehicle should not only get to the destination, but should also respect safety, fuel efficiency, and human comfort constraints along the way~\citep{Hoang2019BPLUC}; a robot should not only fulfill its task, but should also control its wear and tear, for example, by limiting the torque exerted on
its motors~\citep{tessler2018reward}.
Moreover, in many settings, we wish to satisfy such constraints already during \emph{training} and not only
during the \emph{deployment}. For example, a power grid, an autonomous vehicle, or a real robotic hardware should
avoid costly failures, where the hardware is damaged or humans are harmed, already during training~\citep{leike2017ai,Ray2019}. Constraints are
also key in additional sequential decision\tledit{-}making applications, such as
dynamic pricing with limited supply (\mdedit{e.g.}, \citealp{BZ09,DynPricing-ec12}), scheduling of resources on a computer cluster~\citep{Mao2016RLSystems},
and imitation learning, where the goal is to stay close to an expert behavior \citep{SyedSchapire2008,ziebart2008maximum,sun2019provably}.

In this paper we study \emph{constrained episodic reinforcement learning}, which encompasses all of these applications. An important characteristic of our approach, distinguishing it from previous work~(\mdedit{e.g.}, \citealp{altman-constrainedMDP,achiam2017constrained,tessler2018reward,MiryoosefiBrDaDuSc19,Ray2019}), is our focus on \emph{efficient exploration}, leading to reduced sample complexity. Notably, the modularity of our approach enables extensions to more complex settings such as (i) maximizing concave objectives under convex constraints, and (ii) reinforcement learning under hard constraints, where the learner has to stop when some constraint is violated (e.g., a car runs out of gas). For these extensions, which we refer to as \textit{concave-convex setting} and \textit{knapsack setting}, we provide the first regret guarantees in the episodic setting (see related work below for a detailed comparison). Moreover, our guarantees are \emph{anytime}, meaning that the constraint violations are bounded at any point during learning, even if the learning process is interrupted. This is important for those applications where the system continues to learn after it is deployed.

Our approach relies on the principle of \emph{optimism under uncertainty} to efficiently explore.
Our learning algorithms optimize their actions with respect to a model based on the empirical statistics, while optimistically overestimating rewards and underestimating the resource consumption (i.e., overestimating the distance from the constraint). This idea was previously introduced in multi-armed bandits \citep{AgrawalDevanurEC14}; extending it to episodic reinforcement learning poses additional challenges since the policy space is exponential in the episode horizon. Circumventing these challenges, we provide a modular way to analyze this approach in the basic setting where both rewards and constraints are linear (Section~\ref{sec:basic_setting}) and then transfer this result to the more complicated concave-convex and knapsack settings (Sections~\ref{sec:concave-convex}~and~\ref{sec:knapsacks}). We empirically compare our approach with the only previous works that can handle
convex constraints and show that our algorithmic innovations lead to significant empirical improvements (Section \ref{sec:empirical}).

\paragraph{Related work.} Sample-efficient exploration in constrained episodic reinforcement learning has only recently started to receive attention. Most previous works on episodic reinforcement learning focus on unconstrained settings \citep{jaksch2010near,AzarOsMu17,dann2017unifying}. A notable exception is the work of \cite{Cheung19} \wsedit{and \cite{tarbouriech2019active}}. \wsedit{Both of these works consider vectorial feedback and aggregate reward functions}, and provide theoretical guarantees for the reinforcement learning setting with a single episode, but require a strong reachability \wsedit{or communication} assumption, which is not needed in the episodic setting studied here. Also, compared to \cite{Cheung19},  our results for the knapsack setting allow for a significantly smaller budget, as we illustrate in Section~\ref{sec:knapsacks}. Moreover, our approach is based on a tighter bonus, which leads to a superior empirical performance (see Section~\ref{sec:empirical}). Recently, there have also been several concurrent and independent works on sample-efficient exploration for reinforcement learning with constraints  \citep{singh2020learning,efroni2020exploration,qiu2020upper,ding2020provably,zheng2020constrained}. Unlike our work, all of these approaches focus on linear reward objective and linear constraints and do not handle the concave-convex and knapsack settings that we consider.

Constrained reinforcement learning has also been studied in settings that do not focus on sample-efficient exploration~\citep{achiam2017constrained,tessler2018reward,MiryoosefiBrDaDuSc19}. Among these, only \cite{MiryoosefiBrDaDuSc19} handle convex constraints, albeit without a reward objective (they solve the feasibility problem). Since these works do not focus on sample-efficient exploration, their performance drastically deteriorates when the task requires exploration (as we show in Section~\ref{sec:empirical}).

Sample-efficient exploration under constraints has been studied in multi-armed bandits, starting \mdedit{with} a line of work on dynamic pricing with limited supply  \citep{BZ09,BesbesZeevi-OR11,DynPricing-ec12,Wang-OR14}.
A general setting for bandits with global knapsack constraints
(\emph{bandits with knapsacks}) was defined and solved by \mdedit{\citet{BwK-focs13} (see also Ch.~10 of \citealp{slivkins-MABbook})}.
Within this literature, the closest to ours is the work of \cite{AgrawalDevanurEC14}, who study bandits with concave objectives and convex constraints. Our work is directly inspired by theirs and lifts their techniques to the more general episodic reinforcement learning setting.

\section{Model and preliminaries}
\label{sec:model}


In episodic reinforcement learning, a learner repeatedly interacts with an environment across $K$ episodes. The environment includes the state space $\states$, the action space $\actions$, the episode horizon $H$, and the {initial state $s_0$}.%
\footnote{A fixed and known initial state is without loss of generality. In general, there is a fixed but unknown distribution $\rho$ from which the initial state is drawn before each episode. We modify the MDP by adding a new state {$s_{0}$} as initial state, such that the next state is sampled from $\rho$ for any action. Then $\rho$ is ``included'' within the transition probabilities. The extra state $s_0$ does not contribute any reward and does not consume any resources.}
To capture constrained settings, the environment includes a set $\resources$ of $d$ resources where each $i\in\resources$ has a capacity constraint $\xi(i)\in\R^+$. The above are fixed and known to the learner.

\xhdr{Constrained Markov \mdedit{decision process}.} We work with MDPs that have resource consumption in addition to rewards. Formally, a \emph{constrained} MDP ($\cmdp$) is a triple $\model=(p,r,\bm{c})$ that describes transition probabilities $p:\states\times\actions\rightarrow \Delta(\states)$, rewards $r:\states\times\actions\rightarrow [0,1]$, and resource consumption $\bm{c}:\states\times\actions\rightarrow [0,1]^d$. For convenience, we denote $c(s,a,i) = c_i(s,a)$. We allow stochastic rewards and consumptions, in which case $r$ and $\bm{c}$ refer to the conditional expectations, conditioned on $s$ and $a$ (our definitions and algorithms are based on this conditional expectation rather than the full conditional distribution).

\mdedit{We use the above definition to describe two kinds of $\cmdp$s. The} \emph{true} $\cmdp$ $\truemodel=(p^{\star},r^{\star},\bm{c}^{\star})$ is fixed but \emph{unknown} to the learner.  Selecting action $a$ at state $s$ results in rewards and consumptions drawn from (possibly correlated) distributions with means $r^{\star}(s,a)$ and $\bm{c}^{\star}(s,a)$ and supports in $[0,1]$ and $[0,1]^d$ respectively. Next states are generated from transition probabilities $p^{\star}(s,a)$.
\mdedit{The second kind of $\cmdp$ arises in our algorithm, which is model-based and at episode $k$ uses a $\cmdp$ $\model^{(k)}$.}

\xhdr{Episodic reinforcement learning protocol.}  At episode $k\in[K]$, the learner commits to a policy $\boldpi_k=(\boldpi_{k,h})_{h=1}^H$ where $\boldpi_{k,h}: \states \rightarrow \Delta(\actions)$ specifies how to select actions at step $h$ for every state. The learner starts from state {$\vs_{k,1} = s_0$}. At step $h=1,\ldots,H$, she selects an action $a_{k,h}\sim \pi_{k,h}(s_{k,h})$. The learner earns reward $r_{k,h}$ and suffers consumption $\bm{c}_{k,h}$, both drawn from the true $\cmdp$ $\truemodel$ on state-action pair $(s_{k,h},a_{k,h})$ as described above, and transitions to state $s_{k,h+1}\sim p^{\star}(s_{k,h},a_{k,h})$.

\xhdr{Objectives.} In the basic setting (Section~\ref{sec:basic_setting}), the learner wishes to maximize reward while respecting the consumption constraints in expectation by competing favorably against the following benchmark:
\begin{equation}
    \max_{\boldpi} \Exp^{\boldpi, p^{\star}}\Big[\sum_{h=1}^H r^{\star}\big(\vs_h,\va_h\big)\Big]
    \qquad
    \text{s.t.}
    \qquad
    \forall i\in\resources: \Exp^{\boldpi, p^{\star}}\Big[\sum_{h=1}^H c^{\star}\big(\vs_h,\va_h, i\big)\Big]\leq \xi(i),\label{eq:objective}
\end{equation}
where $\Exp^{\boldpi, p}$ denotes the expectation over the run of policy $\boldpi$ according to transitions $p$, and $\vs_h,\va_h$ are the induced random state-action pairs. We denote by $\boldpist$ the policy that maximizes this objective.

For the basic setting, we track two performance measures: \emph{reward regret} compares the learner's total reward to the benchmark and \emph{consumption regret} bounds excess in resource consumption:
\begin{align}
\label{eq:regret_def}
    &\textsc{RewReg}(k)\coloneqq \Exp^{\boldpist, p^{\star}}\Big[\sum_{h=1}^{H}r^{\star}\big(\vs_h,\va_h\big)\Big] -\frac{1}{k}\sum_{t=1}^{k}\Exp^{\boldpi_t, p^{\star}}\Big[\sum_{h=1}^{H}r^{\star}\big(\vs_h,\va_h\big)\Big], \\
    &\textsc{ConsReg}(k)\coloneqq \max_{i\in \resources} \Big(\frac{1}{k}\sum_{t=1}^{k}\Exp^{\boldpi_t, p^{\star}}\Big[\sum_{h=1}^{H}c^{\star}\big(\vs_h,\va_h,i\big)\Big]-\xi(i)\Big). \label{eq:regret_def_constraint}
\end{align}
Our guarantees are \emph{anytime}, i.e., they hold at any episode $k$ and not only after the last episode.

We also consider two extensions. In Section~\ref{sec:concave-convex}, we consider a concave reward objective and convex consumption constraints. In Section~\ref{sec:knapsacks}, we require consumption constraints to be satisfied \mdedit{with high probability under a cumulative budget across all $K$ episodes, rather than in expectation in a single episode.}\looseness=-1


\xhdr{Tabular MDPs.} We assume that the state space $\states$ and the action space $\actions$ are finite (tabular setting). We construct standard empirical estimates separately for each state-action pair $(s,a)$, using the learner's observations up to and not including a given episode $k$. Eqs.~(\ref{eq:empirical-N}--\ref{eq:empirical-c}) define sample counts, empirical transition probabilities, empirical rewards, and empirical resource consumption.%
\footnote{The $\max$ operator in Eq. \eqref{eq:empirical-N} is to avoid dividing by $0$.}
\begin{align}
         N_{k}(s,a)
            &=\max\bigg\{1,\;\sum_{t\in[k-1],\,h\in[H]} \mathbf{1}\{\vs_{t,h}=s, \va_{t,h}=a\}\bigg\},
            \label{eq:empirical-N} \\
         \widehat{p}_k(s'|s,a)
            &=\frac{1}{N_k(s,a)}\;
            \sum_{t\in[k-1],\,h\in[H]} \mathbf{1}\{\vs_{t,h}=s,\va_{t,h}=a,\vs_{t,h+1}=s'\},
            \label{eq:empirical-p} \\
         \widehat{r}_k(s,a)
          &= \frac{1}{N_k(s,a)}\;
            \sum_{t\in[k-1],\,h\in[H]}  r_{t,h}\cdot \mathbf{1}\{\vs_{t,h}=s,\va_{t,h}=a\},
            \label{eq:empirical-r} \\
          \widehat{c}_k(s,a,i)
          &= \frac{1}{N_k(s,a)}\;
            \sum_{t\in[k-1],\,h\in[H]}  c_{t,h,i}\cdot \mathbf{1}\{\vs_{t,h}=s,\va_{t,h}=a\} \quad \forall i\in\resources.
            \label{eq:empirical-c}
         \end{align}

\xhdr{Preliminaries for theoretical analysis.}
\mdedit{The \emph{$Q$-function}}
is a standard object in RL that tracks the learner's expected performance if she starts from state $s\in\states$ at step $h$, selects action $a\in\actions$, and then follows a policy $\boldpi$ under a model with transitions $p$ for the remainder of the episode. We parameterize it by the \emph{objective function} $m:\states\times\actions \to [0,1]$, which can be either a reward, i.e., $m(s,a)=r(s,a)$, or consumption of some resource $i\in\resources$, i.e., $m(s,a)=c(s,a,i)$. (For the unconstrained setting, the objective is the reward.)
The performance of the policy in a particular step $h$ is evaluated by the value function $V$ which corresponds to the expected $Q$-function of the selected action (where the expectation is taken over the possibly randomized action selection of $\boldpi$). \mdedit{The $Q$ and value functions can be both} recursively defined by dynamic programming:
\begin{align*}Q_m^{\boldpi,p}(s,a,h)&=m(s,a)+\sum_{s'\in\states}p(s'|s,a)V_{m}^{\boldpi,p}(s',h+1),
\\
V_m^{\boldpi,p}(s,h)&=\Exp_{a\sim \boldpi(\cdot |s)}\Big[Q_m^{\boldpi,p}(s,a,h)\Big]\quad \text{and} \quad V_{m}^{\boldpi, p}(s,H+1)=0.
\end{align*}
By slight abuse of notation, for $m\in\{r\}\cup \{c_i\}_{i\in\resources}$, we denote by $m^{\star}\in\{r^{\star}\}\cup \{c_i^{\star}\}_{i\in\resources}$ the corresponding objectives with respect to the rewards and consumptions of the true $\cmdp$ $\truemodel$. For objectives $m^{\star}$ and transitions $p^{\star}$, the above are the \emph{Bellman equations} of the system \citep{Bellman1957}.

Estimating the $Q$-function based on the model parameters $p$ and $m$ rather than the ground truth parameters $p^{\star}$ and $m^{\star}$ introduces errors. These errors are localized across stages by the notion of \emph{Bellman error} which contrasts the performance of policy $\boldpi$ starting from stage $h$ under the model parameters to a benchmark that behaves according to the model parameters starting from the next stage $h+1$ but uses the true parameters of the system in stage $h$. More formally, for objective $m$:
\begin{align}\label{eq:Bellman-errors-defn}
&\bellman_{m}^{\boldpi,p}(s,a,h)=
Q_{m}^{\boldpi,p}(s,a,h)-\Big(m^{\star}(s,a)
+\sum_{s'\in\states}p^{\star}(s'|s,a)V_{m}^{\boldpi,p}(s',h+1)\Big).
\end{align}
Note that when the $\cmdp$ is $\truemodel$ ($m=m^{\star}$, $p=p^{\star}$), there is no mismatch and $\bellman_{m^{\star}}^{\boldpi,p^{\star}}=0$. 

\section{Warm-up algorithm and analysis in the basic setting}
\label{sec:basic_setting}


In this section, we introduce a simple algorithm that allows to simultaneously bound reward and consumption regrets for the basic setting introduced in the previous section. Even in this basic setting, we provide the first sample-efficient guarantees in constrained episodic reinforcement learning.%
\footnote{We refer the reader to the related work (in Section \ref{sec:intro}) for discussion on concurrent and independent papers. Unlike our results, these papers do not extend to either concave-convex or knapsack settings.} The modular analysis of the guarantees also allows us to subsequently extend (in Sections~\ref{sec:concave-convex} and \ref{sec:knapsacks}) the algorithm and guarantees to the more general concave-convex and knapsack settings.

\xhdr{Our algorithm.} At episode $k$, we construct an estimated $\cmdp$ $\model^{(k)}=\big(p^{(k)},r^{(k)},\bm{c}^{(k)}\big)$ based on the observations collected so far. The estimates are \emph{bonus-enhanced} (formalized below) to encourage more targeted exploration. Our algorithm \textsc{ConRL} selects a policy $\boldpi_k$ by solving the following constrained optimization problem which we refer to as $\textsc{BasicConPlanner}(p^{(k)},r^{(k)},\bm{c}^{(k)})$:
\begin{align*}
    &\max_{\boldpi} \Exp^{\boldpi, p^{(k)}}\Big[\sum_{h=1}^H r^{(k)}\big(\vs_h,\va_h\big)\Big] \qquad \text{s.t.} \qquad &\forall i\in\resources: \Exp^{\boldpi, p^{(k)}}\Big[\sum_{h=1}^H c^{(k)}\big(\vs_h,\va_h, i\big)\Big]\leq \xi(i).
\end{align*}
The above optimization problem is similar to the objective \eqref{eq:objective} but uses the estimated model instead of the (unknown to the learner) true model. We also note that this optimization problem can be optimally solved as it is a linear program on the occupation measures \citep{puterman2014markov}, i.e., setting as variables the probability of each state-action pair and imposing flow conservation constraints with respect to the transitions. This program is described in Appendix~\ref{app:subsec_basicConPlanner}.

\xhdr{Bonus-enhanced model.}  A standard approach to implement the principle of optimism under uncertainty is to introduce, at each episode $k$, a \emph{bonus term} $\smash{\widehat{\bonus}_k(s,a)}\vphantom{r^{(k)}}$ that favors under-explored actions. Specifically, we add this bonus to the empirical rewards \eqref{eq:empirical-r}, and subtract it from the consumptions~\eqref{eq:empirical-c}:
    $r^{(k)}(s,a) = \widehat{r}_k(s,a)+\smash{\widehat{\bonus}_k(s,a)}$
  and
    $c^{(k)}(s,a,i) = \widehat{c}_k(s,a,i)-\smash{\widehat{\bonus}_k(s,a)}$
  for each resource $i$.

\renewcommand{\thefootnote}{\added{3.5}}
\wsedit{Similar to unconstrained analogues \citep{AzarOsMu17,dann2017unifying}, we define the bonus as:}%
\footnote[1]{\added{The NeurIPS 2020 version includes a small bug, leading to an incorrect dependence on~$H$ in Theorem~\ref{thm:tabular}. This version fixes it
by adjusting Eq.~\eqref{eq:bonus}, Theorem~\ref{thm:tabular} and the relevant proofs. Changes in the main text are noted in red.
Changes in the appendix are limited to Appendices~\ref{app:valid_bonus},~\ref{app:covering_bellman_error}, and~\ref{app:guarantee_basic} and the statement of Lemma~\ref{lemma:expectation_to_realization}.}}
\begin{align}\label{eq:bonus}
\widehat{\bonus}_k(s,a)=\added{\min\left\{2 H,} \; H\sqrt{\frac{2\ln\bigl(8SAH(d+1)k^2/\delta\bigr)}{N_k(s,a)}}\added{\right\}},
\end{align}
where $\delta>0$ is the desired failure probability of the algorithm and $N_k(s,a)$ is the number of times $(s,a)$ pair is visited, c.f.~\eqref{eq:empirical-N}, $S=|\states|$, and $A=|\actions|$. Thus, under-explored actions have a larger bonus, and therefore appear more appealing to the planner. For estimated transition probabilities, we just use the empirical averages    \eqref{eq:empirical-p}: $p^{(k)}(s'|s,a)=\widehat{p}(s'|s,a)$.
\renewcommand{\thefootnote}{\arabic{footnote}}

\newcommand{\Epi}{\Exp^{\boldpi}}

\xhdr{Valid bonus and Bellman-error decomposition.} For a bonus-enhanced model to achieve effective exploration, the resulting bonuses need to be \emph{valid}, i.e., they should ensure that the estimated rewards overestimate the true rewards
\mdedit{and the estimated consumptions underestimate the true consumptions}.

\begin{definition}\label{defn:valid_bonus}
A bonus $\bonus_k:\states\times\actions\rightarrow\R$ is valid if, $\forall s\in\states,a\in\actions, h\in[H], m\in\{r\}\cup\{c_i\}_{i\in\resources}$:
\begin{align*}
    &\Big|\Big(\widehat{m}_k(s,a)-m^{\star}(s,a)\Big)
    +\sum_{s'\in\states}\Big(\widehat{p}_k(s'|s,a)-p^{\star}(s'|s,a)\Big)V_{m^{\star}}^{\pi^{\star},p^{\star}}(s',h+1)
    \Big|
\leq \bonus_k(s,a).    \end{align*}
\end{definition}
By classical concentration bounds (Appendix \ref{app:valid_bonus}), the bonus $\widehat{\bonus}_k$ of Eq. \eqref{eq:bonus} satisfies this condition:
\begin{lemma}\label{lem:valid_bonus}
With probability $1-\delta$, the bonus $\widehat{\bonus}_k(s,a)$ is valid for all episodes $k$ simultaneously.
\end{lemma}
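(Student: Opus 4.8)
The plan is to bound the left‑hand side of the validity condition by splitting it into a \emph{reward/consumption‑estimation} error $\big|\widehat{m}_k(s,a)-m^{\star}(s,a)\big|$ and a \emph{transition‑estimation} error $\big|\sum_{s'}(\widehat{p}_k(s'|s,a)-p^{\star}(s'|s,a))\,V_{m^{\star}}^{\pi^{\star},p^{\star}}(s',h+1)\big|$, and to control each by a Hoeffding/Azuma concentration inequality. The structural observation that makes this work is that $V_{m^{\star}}^{\pi^{\star},p^{\star}}(\cdot,h+1)$ is a \emph{fixed, data‑independent} function, determined entirely by the true $\cmdp$ $\truemodel$ and the benchmark policy $\pi^{\star}$; in particular it does not depend on the learner's observations, which is precisely what distinguishes it from the algorithm's own (data‑dependent) value functions and lets us avoid a covering argument. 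Writing $\widehat{p}_k(\cdot|s,a)$ and $\widehat{m}_k(s,a)$ as empirical averages over the $N_k(s,a)$ visits to $(s,a)$, each of the two errors is then the deviation of an empirical mean of a bounded martingale‑difference sequence from its conditional mean: the first involves a quantity in $[0,1]$, the second involves $V_{m^{\star}}^{\pi^{\star},p^{\star}}(\cdot,h+1)\in[0,H]$ (a sum of at most $H$ per‑step terms in $[0,1]$). Equivalently, one can combine the realized reward/consumption at a visit with $V_{m^{\star}}^{\pi^{\star},p^{\star}}$ evaluated at the realized next state into a single quantity bounded in $[0,1+H]$ and apply one Azuma--Hoeffding bound.

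By Hoeffding's inequality, with probability $1-\delta'$ the first error is at most $\sqrt{\ln(2/\delta')/(2N_k(s,a))}$ and the second at most $H\sqrt{\ln(2/\delta')/(2N_k(s,a))}$; summing and using $1+H\le 2H$ yields a bound of $H\sqrt{2\ln(2/\delta')/N_k(s,a)}$, which matches the form of the bonus in \eqref{eq:bonus} once we set $\delta'=\delta/\big(4SAH(d+1)k^2\big)$, so that $2/\delta'=8SAH(d+1)k^2/\delta$. To obtain the statement for all $s,a$, all $h\in[H]$, all objectives $m\in\{r\}\cup\{c_i\}_{i\in\resources}$, and all episodes $k$ simultaneously, I would take a union bound over the $S\cdot A$ state‑action pairs, the $H$ steps, the $d+1$ objectives, and --- to handle the fact that the visitation count $N_k(s,a)$ is itself random --- over the possible values of that count; the factor $k^2$ inside the logarithm is what lets the residual sum $\sum_k k^{-2}<\infty$ keep the total failure probability below $\delta$. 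Finally, the cap $\min\{2H,\cdot\}$ in \eqref{eq:bonus} only enlarges the bonus and is never binding for validity, since the left‑hand side of Definition~\ref{defn:valid_bonus} is deterministically at most $1+H\le 2H$.

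The one place that requires genuine care --- everything else being a routine application of standard concentration --- is the union bound over the random count: the centered samples collected at a fixed $(s,a)$ across episodes form a martingale‑difference sequence with respect to the natural filtration, and to apply a concentration bound at the \emph{data‑dependent} index $N_k(s,a)$ one must either invoke a time‑uniform (anytime) inequality or explicitly union‑bound over the count, paying a logarithmic factor. Tracking these factors together with the numerical constants so that they collapse into the single logarithmic term of \eqref{eq:bonus} is the main obstacle; the rest follows immediately from the boundedness of rewards, consumptions, and the fixed benchmark value function.
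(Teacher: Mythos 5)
Your proposal is correct and follows essentially the same route as the paper's proof: a Hoeffding/anytime concentration bound applied at each fixed $(s,a,h,m)$, exploiting that $V_{m^{\star}}^{\pi^{\star},p^{\star}}$ is data-independent, followed by a union bound over states, actions, steps, the $d+1$ objectives, and episodes with $\delta'=\delta/\bigl(4SAH(d+1)k^2\bigr)$, and the observation that the left-hand side is deterministically at most $1+H\le 2H$ so the cap is harmless. The only cosmetic difference is that you split the estimation error into two terms (or equivalently combine them, as the paper does into a single quantity bounded by $H$ per sample), which affects nothing beyond constants absorbed by the same logarithmic term.
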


Our algorithm solves the \textsc{BasicConPlanner} optimization problem based on a bonus-enhanced model. When the bonuses are valid, we can upper bound the per-episode regret by the expected sum of Bellman errors across steps. This is the first part in classical unconstrained analyses and the following proposition extends this decomposition to constrained episodic reinforcement learning.
The proof uses the so-called simulation lemma \citep{Kearns2002} and is provided in Appendix~\ref{app:simulation_lemma}.

\begin{prop} \label{prop:regret_decomposition} If $\widehat{\bonus}_k(s,a)$ is valid for all episodes $k$ simultaneously then the per-episode reward and consumption regrets can be upper bounded by the expected sum of Bellman errors \eqref{eq:Bellman-errors-defn}:
\begin{align}
\Exp^{\boldpist, p^{\star}}\Big[\sum_{h=1}^Hr^{\star}\big(\vs_h,\va_h\big)\big]-\Exp^{\boldpi_k, p^{\star}}\Big[\sum_{h=1}^Hr^{\star}\big(\vs_h,\va_h\big)\Big]
&\leq \Exp^{\boldpi_k}\Big[\sum_{h=1}^H \Big|\bellman_{r^{(k)}}^{\boldpi_k,p^{(k)}}\big(\vs_h,\va_h,h\big)\Big|\Big]
\label{eq:reward_decomposition}
\\
\forall i\in\resources:\qquad \Exp^{\boldpi_k, p^{\star}}\Big[\sum_{h=1}^Hc^{\star}\big(\vs_h,\va_h,i\big)\Big]-\xi(i)
&\leq \Exp^{\boldpi_k}\Big[\sum_{h=1}^H \Big|\bellman_{c_i^{(k)}}^{\boldpi_k,p^{(k)}}\big(\vs_h,\va_h,h\big)\Big|\Big]. \label{eq:consumption_decomposition}
\end{align}
\end{prop}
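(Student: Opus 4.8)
The plan is to combine two ingredients: a \emph{value-difference (simulation) lemma} that rewrites the gap between the model value of a policy and its true value as an expected sum of Bellman errors, and an \emph{optimism} argument showing that, when $\widehat{\bonus}_k$ is valid, $\boldpist$ is itself feasible for $\textsc{BasicConPlanner}(p^{(k)},r^{(k)},\bm c^{(k)})$ while its bonus-enhanced reward value overestimates the true optimum.

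First I would establish the value-difference identity: for any objective $m$, any policy $\boldpi$, and any transitions $p$,
\[
V_m^{\boldpi,p}(s_0,1)-V_{m^\star}^{\boldpi,p^\star}(s_0,1)=\Exp^{\boldpi,p^\star}\Big[\textstyle\sum_{h=1}^H \bellman_m^{\boldpi,p}(\vs_h,\va_h,h)\Big],
\]
where the expectation is over the trajectory of $\boldpi$ under the \emph{true} transitions $p^\star$ — this is precisely the object written $\Exp^{\boldpi}[\cdot]$ in the statement. It follows by backward induction on $h$: setting $\Delta(s,h)=V_m^{\boldpi,p}(s,h)-V_{m^\star}^{\boldpi,p^\star}(s,h)$, expand $\Delta(s,h)=\Exp_{a\sim\boldpi(\cdot|s)}[Q_m^{\boldpi,p}(s,a,h)-Q_{m^\star}^{\boldpi,p^\star}(s,a,h)]$, add and subtract $\sum_{s'}p^\star(s'|s,a)V_m^{\boldpi,p}(s',h+1)$, recognize the definition \eqref{eq:Bellman-errors-defn} of $\bellman_m^{\boldpi,p}$, and obtain $\Delta(s,h)=\Exp_{a\sim\boldpi}[\bellman_m^{\boldpi,p}(s,a,h)]+\Exp_{a\sim\boldpi,\,s'\sim p^\star(\cdot|s,a)}[\Delta(s',h+1)]$; unrolling from $h=1$ with $\Delta(\cdot,H+1)=0$ gives the identity (this is the step attributed to \citet{Kearns2002}).

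Next I would prove optimism by a second backward induction on $h$: for all $s,h$, $V_{r^{(k)}}^{\boldpist,p^{(k)}}(s,h)\ge V_{r^\star}^{\boldpist,p^\star}(s,h)$ and $V_{c_i^{(k)}}^{\boldpist,p^{(k)}}(s,h)\le V_{c_i^\star}^{\boldpist,p^\star}(s,h)$ for each $i$. In the inductive step one uses that $p^{(k)}=\widehat p_k$ is a genuine distribution, so the induction hypothesis passes through $\sum_{s'}p^{(k)}(s'|s,a)(\cdot)$ preserving the inequality, together with the validity inequality of Definition~\ref{defn:valid_bonus} \emph{applied to the value function $V_{m^\star}^{\boldpist,p^\star}$}: for the reward, $\widehat r_k+\widehat\bonus_k+\sum_{s'}\widehat p_k(s'|s,a)V_{r^\star}^{\boldpist,p^\star}(\cdot,h+1)\ge r^\star+\sum_{s'}p^\star(s'|s,a)V_{r^\star}^{\boldpist,p^\star}(\cdot,h+1)$, and symmetrically with the bonus subtracted for each consumption. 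At $h=1,\,s=s_0$, feasibility of $\boldpist$ in \eqref{eq:objective} gives $V_{c_i^{(k)}}^{\boldpist,p^{(k)}}(s_0,1)\le V_{c_i^\star}^{\boldpist,p^\star}(s_0,1)\le\xi(i)$, so $\boldpist$ is feasible for $\textsc{BasicConPlanner}$, and $V_{r^{(k)}}^{\boldpist,p^{(k)}}(s_0,1)\ge V_{r^\star}^{\boldpist,p^\star}(s_0,1)$. Then, since $\boldpi_k$ is optimal for $\textsc{BasicConPlanner}$, $V_{r^{(k)}}^{\boldpi_k,p^{(k)}}(s_0,1)\ge V_{r^{(k)}}^{\boldpist,p^{(k)}}(s_0,1)\ge V_{r^\star}^{\boldpist,p^\star}(s_0,1)$; subtracting $V_{r^\star}^{\boldpi_k,p^\star}(s_0,1)$ and applying the value-difference identity with $m=r^{(k)},\boldpi=\boldpi_k,p=p^{(k)}$, then bounding the signed sum by the sum of absolute values, yields \eqref{eq:reward_decomposition}. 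For consumption, feasibility of $\boldpi_k$ gives $V_{c_i^{(k)}}^{\boldpi_k,p^{(k)}}(s_0,1)\le\xi(i)$, hence $\Exp^{\boldpi_k,p^\star}[\sum_h c_i^\star]-\xi(i)\le V_{c_i^\star}^{\boldpi_k,p^\star}(s_0,1)-V_{c_i^{(k)}}^{\boldpi_k,p^{(k)}}(s_0,1)=-\Exp^{\boldpi_k}[\sum_h\bellman_{c_i^{(k)}}^{\boldpi_k,p^{(k)}}(\vs_h,\va_h,h)]\le\Exp^{\boldpi_k}[\sum_h|\bellman_{c_i^{(k)}}^{\boldpi_k,p^{(k)}}(\vs_h,\va_h,h)|]$, which is \eqref{eq:consumption_decomposition}. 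I expect the only real subtlety to be the optimism step: since validity is asserted only against the single value function $V_{m^\star}^{\boldpist,p^\star}$, the over/under-estimation chain closes solely for $\boldpi=\boldpist$, which is exactly why $\boldpist$ (not an arbitrary policy) must serve as the feasible competitor; one should also note that a possibly negative bonus-enhanced consumption $c^{(k)}$ leaves the value/Bellman recursions well defined, so nothing else changes.
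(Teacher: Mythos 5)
Your proposal is correct and follows essentially the same route as the paper: the value-difference identity is the paper's simulation lemma (Lemma~\ref{lem:simulation}), the backward-induction optimism argument using validity against $V_{m^\star}^{\boldpist,p^\star}$ is the paper's Lemma~\ref{lem:valid_implies_optimism}, and the feasibility of $\boldpist$ in $\textsc{BasicConPlanner}$ plus optimality of $\boldpi_k$ is exactly how the paper derives \eqref{eq:reward_decomposition} and \eqref{eq:consumption_decomposition}. Your closing remark on why optimism only needs to hold for $\boldpist$ matches the paper's use of Definition~\ref{defn:valid_bonus}.
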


\xhdr{Final guarantee.} One difficulty with directly bounding the Bellman error is that the value function is not independent of the draws forming $r^{(k)}(s,a)$, $\bm{c}^{(k)}(s,a)$, and $p^{(k)}(s'|s,a)$. Hence we cannot apply Hoeffding inequality directly. While  \cite{AzarOsMu17} propose a trick to get an \mdedit{$\bigO(\sqrt{S})$} bound on Bellman error in unconstrained settings, the trick relies on the crucial property of Bellman optimality: for an unconstrained MDP, its optimal policy $\pi^\star$ satisfies the condition, $\smash{V^{\pi^\star}_{r^{\star}}}(s,h) \geq V^{\pi}_{r^{\star}}(s,h)$ for all $s, h, \pi$ (i.e., $\pi^\star$ is optimal at any state).  However, when constraints exist, the optimal policy does not satisfy the Bellman optimality property. Indeed, we can only guarantee optimality with respect to the initial state distribution, i.e., $V^{\pi^\star}_{r^\star}(s_0,1) \geq V^{\pi}_{r^\star}(s_0,1)$ for any $\pi$, but not everywhere else. This illustrates a fundamental difference between constrained MDPs and unconstrained MDPs. Thus we cannot directly apply the trick from \cite{AzarOsMu17}.  Instead we follow an alternative approach of bounding the value function via an $\epsilon$-net over the possible values. This analysis leads to a guarantee that is weaker by a factor of $\sqrt{S}$ than the unconstrained results. The proof is provided in Appendix~\ref{app:guarantee_basic}.~\looseness=-1

\begin{theorem}\label{thm:tabular}
\mdedit{There exists an absolute constant $c\in\mathbb{R}^+$ such that,} with probability at least $1-3\delta$, reward and consumption regrets are both upper bounded by:
$$
\tfrac{c}{\sqrt{k}}\cdot \added{H^{2.5}} S \sqrt{A}\cdot\sqrt{\ln(k)\ln\big(SAH(d+1)k/\delta\big)}+\tfrac{c}{k}\cdot S^{3/2}A\added{H^3}\sqrt{\ln\big(2SAH(d+1)k/\delta\big)}.
$$
\end{theorem}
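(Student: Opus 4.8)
The plan is to feed the regret decomposition of Proposition~\ref{prop:regret_decomposition} into a high-probability bound on the expected sum of Bellman errors, and then average over episodes. By the definitions \eqref{eq:regret_def}--\eqref{eq:regret_def_constraint}, $\textsc{RewReg}(k)$ and $\textsc{ConsReg}(k)$ are each at most $\tfrac1k\sum_{t=1}^k$ of the left-hand sides of \eqref{eq:reward_decomposition}--\eqref{eq:consumption_decomposition}, so on the event of probability $\ge1-\delta$ from Lemma~\ref{lem:valid_bonus} that $\widehat{\bonus}_k$ is valid for all $k$, it suffices to control $\tfrac1k\sum_{t=1}^k \Exp^{\boldpi_t}\!\big[\sum_{h=1}^H\big|\bellman_{m^{(t)}}^{\boldpi_t,p^{(t)}}(\vs_h,\va_h,h)\big|\big]$ for each objective $m\in\{r\}\cup\{c_i\}_{i\in\resources}$. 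I carry this out for a generic such $m$, since the argument is identical in all cases.

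\textbf{Peeling off the bonus.} Since $p^{(k)}=\widehat p_k$ and $m^{(k)}=\widehat m_k\pm\widehat{\bonus}_k$, expanding \eqref{eq:Bellman-errors-defn} gives $\bellman_{m^{(k)}}^{\boldpi_k,\widehat p_k}(s,a,h)=\big(\widehat m_k(s,a)-m^\star(s,a)\big)\pm\widehat{\bonus}_k(s,a)+\big\langle\widehat p_k(\cdot|s,a)-p^\star(\cdot|s,a),\,V^{(k)}_m(\cdot,h+1)\big\rangle$, with $V^{(k)}_m:=V_{m^{(k)}}^{\boldpi_k,\widehat p_k}$. Adding and subtracting $V_{m^\star}^{\pi^\star,p^\star}(\cdot,h+1)$ inside the inner product, the piece involving $V_{m^\star}^{\pi^\star,p^\star}$ together with $\widehat m_k-m^\star$ is exactly what Definition~\ref{defn:valid_bonus} bounds by $\widehat{\bonus}_k(s,a)$; hence $\big|\bellman_{m^{(k)}}^{\boldpi_k,\widehat p_k}(s,a,h)\big|\le 2\widehat{\bonus}_k(s,a)+\big|\big\langle\widehat p_k(\cdot|s,a)-p^\star(\cdot|s,a),\,V^{(k)}_m(\cdot,h+1)-V_{m^\star}^{\pi^\star,p^\star}(\cdot,h+1)\big\rangle\big|$.

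\textbf{Covering-number bound on the transition term (the crux).} Because the bonus is capped at $2H$ in \eqref{eq:bonus}, the enhanced objective satisfies $m^{(k)}\in[-2H,1+2H]$, so $\|V^{(k)}_m(\cdot,h)\|_\infty\le 3H^2$ for every $h$, while $\|V_{m^\star}^{\pi^\star,p^\star}\|_\infty\le H$. The obstacle flagged before the theorem is that $V^{(k)}_m$ depends on the very samples defining $\widehat p_k$, so Hoeffding cannot be applied to the inner product directly. I would instead fix an $\epsilon$-net $\mathcal N$, in $\|\cdot\|_\infty$, of $\{f:\states\to\R \text{ with } \|f\|_\infty\le 3H^2\}$, with $\ln|\mathcal N|=O(S\ln(H/\epsilon))$. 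For each fixed $g\in\mathcal N$, each $(s,a)$, each $h\in[H]$, each $m$, and each $k$, the quantity $\langle\widehat p_k(\cdot|s,a)-p^\star(\cdot|s,a),\,g-V_{m^\star}^{\pi^\star,p^\star}(\cdot,h+1)\rangle$ is a centered average of $N_k(s,a)$ bounded i.i.d.\ terms, so Hoeffding plus a union bound over all these choices (the episode index absorbed into the $k^2$ in the logarithm, as in \eqref{eq:bonus}) gives, with probability $\ge1-\delta$, $\big|\langle\widehat p_k(\cdot|s,a)-p^\star(\cdot|s,a),\,g-V_{m^\star}^{\pi^\star,p^\star}(\cdot,h+1)\rangle\big|=O\!\Big(H^2\sqrt{\tfrac{S\ln(H/\epsilon)+\ln(SAH(d+1)k/\delta)}{N_k(s,a)}}\Big)$ simultaneously. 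Replacing $V^{(k)}_m(\cdot,h+1)$ by its nearest net point costs an extra $\|\widehat p_k-p^\star\|_1\cdot\epsilon\le2\epsilon$; taking $\epsilon\asymp1/k$ makes this residual $O(1/k)$ per step while turning $\ln(H/\epsilon)$ into $O(\ln(Hk))$. This is precisely the step responsible for the extra $\sqrt S$ relative to the unconstrained analysis of \cite{AzarOsMu17}, and getting the covering number, the anytime union bound, and the $O(H^2)$ range of $V^{(k)}_m$ all right simultaneously is the main obstacle.

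\textbf{Summing over steps and episodes.} Combining the two displays, on an event of probability $\ge1-2\delta$ every per-step Bellman error is at most $2\widehat{\bonus}_k(s,a)+O\!\big(H^2\sqrt{(S\ln k)(\ln(SAH(d+1)k/\delta))/N_k(s,a)}\big)+O(1/k)$, and since $\widehat{\bonus}_k(s,a)\le H\sqrt{2\ln(8SAH(d+1)k^2/\delta)/N_k(s,a)}$ the bonus term is dominated by the transition term. It then remains to bound $\tfrac1k\sum_{t=1}^k\Exp^{\boldpi_t}\!\big[\sum_{h=1}^H N_t(\vs_h,\va_h)^{-1/2}\big]$. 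Using Lemma~\ref{lemma:expectation_to_realization} to pass, with probability $\ge1-\delta$ and for all $k$, from expected occupancies to realized trajectories up to an additive burn-in term of order $H\ln(SAH(d+1)k/\delta)$ per state--action pair, and then the standard pigeonhole estimate $\sum_{t\le k,\,h\le H}N_t(\vs_{t,h},\va_{t,h})^{-1/2}\le\sum_{(s,a)}2\sqrt{N_{k+1}(s,a)}\le2\sqrt{SA\sum_{(s,a)}N_{k+1}(s,a)}\le O(\sqrt{SAHk}+SA)$ (Cauchy--Schwarz together with $\sum_{(s,a)}N_{k+1}(s,a)\le Hk+SA$), one obtains a leading term $\tfrac{c}{\sqrt k}H^{2.5}S\sqrt A\,\sqrt{\ln k\,\ln(SAH(d+1)k/\delta)}$ from multiplying the $O(H^2\sqrt S)$ per-step prefactor by $O(\sqrt{SAHk})$, while the $SA$-many burn-in contributions and the $\epsilon$-net residual, each multiplied by the same $O(H^2\sqrt S)$ prefactor, collect into the lower-order term $\tfrac ck S^{3/2}AH^3\sqrt{\ln(2SAH(d+1)k/\delta)}$. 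A union bound over the three failure events --- validity (Lemma~\ref{lem:valid_bonus}), the covering-number concentration, and the expectation-to-realization step (Lemma~\ref{lemma:expectation_to_realization}) --- yields the claimed probability $1-3\delta$.
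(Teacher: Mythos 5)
Your proposal is correct and follows essentially the same route as the paper: validity of the bonus plus the regret decomposition of Proposition~\ref{prop:regret_decomposition}, an $\epsilon$-net over value functions of range $O(H^2)$ to break the dependence between $V^{(k)}_m$ and the samples defining $\widehat p_k$ (yielding the extra $\sqrt S$), a pigeonhole/burn-in count over realized visits, and Lemma~\ref{lemma:expectation_to_realization} to convert realized to expected Bellman-error sums, with the same $3\delta$ union bound. The only imprecision is attribution: the per-pair $O(H)$ burn-in you invoke is needed because $N_t(s,a)$ is frozen within an episode (so your pigeonhole inequality as literally written requires that correction), not because of Lemma~\ref{lemma:expectation_to_realization}, whose martingale deviation of order $H^{2.5}\sqrt{k\ln(k/\delta)}$ is instead absorbed into the leading term --- but since you include a burn-in of exactly the right magnitude, the final accounting matches the theorem.
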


\xhdr{Comparison to single-episode results.} In single-episode setting, \cite{Cheung19} achieves $\smash{\sqrt{S}}$ dependency under the further assumption that the
transitions are sparse, i.e., $\| p^\star(s,a) \|_0 \ll S$ for all $\sa$. We do not make such assumptions on the sparsity of the MDP and we note that the regret bound of \cite{Cheung19} scales \mdedit{linearly in $S$} when $\| p^\star(s,a) \|_0 =\Theta( S)$.
\mdedit{Also, the single-episode setting requires a strong reachability assumption, not present in the episodic setting.}

\begin{remark}
\tledit{The aforementioned regret bound can be turned into a PAC bound of $\smash{\tilde{\mathcal{O}}\big(\frac{S^2A\added{H^5}}{\epsilon^2}\Big)}$ by taking the uniform mixture of policies $\pi_1,\pi_2,\ldots,\pi_k$.}
\end{remark}
\section{Concave-convex setting}
\label{sec:concave-convex}

We now extend the algorithm and guarantees derived for the basic setting to when the objective is concave function of the accumulated reward and the constraints \mdedit{are expressed as} a convex function of the cumulative consumptions. Our approach is modular, seamlessly building on the basic setting.

\xhdr{Setting and objective.} Formally, there is a concave reward-objective function $f:\R\rightarrow\R$ and a convex consumption-objective function $g:\R^d\rightarrow\R$; the only assumption is that these functions are $L$-Lipschitz for some constant $L$, i.e., $| f(x) - f(y)| \leq L |x - y|$ for any $x,y\in\mathbb{R}$, and $| g(x) - g(y)| \leq L\| x - y\|_1$ for any $x,y\in\mathbb{R}^d$. Analogous to
\eqref{eq:objective}, the learner wishes to compete against the following benchmark which can be viewed as a reinforcement learning variant of the benchmark used by \cite{AgrawalDevanurEC14} in multi-armed bandits:
\begin{align}\label{eq:objective_convex}
\max_{\boldpi}f\Big(\Exp^{\boldpi,p^{\star}}\Big[\sum_{h=1}^H r^{\star}\big(\vs_h,\va_h\big)\Big]\Big) \quad \text{s.t.} \quad g\Big(\Exp^{\boldpi,p^{\star}}\Big[\sum_{h=1}^H \bm{c}^{\star}\big(\vs_h,\va_h\big)\Big]\Big) \leq 0.
\end{align}
The reward and consumption regrets are therefore adapted to:
\begin{align*}
    &\textsc{ConvexRewReg}(k)\coloneqq f\Big(\Exp^{\boldpist, p^{\star}}\Big[\sum_{h=1}^{H}r^{\star}\big(\vs_h,\va_h\big)\Big]\Big) -f\Big(\frac{1}{k}\sum_{t=1}^{k}\Exp^{\boldpi_t, p^{\star}}\Big[\sum_{h=1}^{H}r^{\star}\big(\vs_h,\va_h\big)\Big]\Big), \\
    &\textsc{ConvexConsReg}(k)\coloneqq g \Big(\frac{1}{k}\sum_{t=1}^{k}\Exp^{\boldpi_t, p^{\star}}\Big[\sum_{h=1}^{H}\bm{c}^{\star}\big(\vs_h,\va_h \big)\Big]\Big).
\end{align*}

\xhdr{Our algorithm.} As in the basic setting, we wish to create a bonus-enhanced model and optimize over it. To model the transition
probabilites, we use empirical estimates $\smash{p^{(k)}}=\widehat{p}_k$ of Eq.~\eqref{eq:empirical-p} as before. However, since reward and consumption objectives are no longer monotone in the accumulated rewards and consumption respectively, it does not make sense to simply add or subtract $\smash{\widehat{\bonus}_k}\vphantom{r^{(k)}}$ (defined in Eq.~\ref{eq:bonus}) as we did before. Instead we compute the policy $\boldpi_k$ of episode $k$ together with the model by solving the following optimization problem which we call $\textsc{ConvexConPlanner}$:
\begin{align*}
\adjustlimits\max_{\boldpi}\max_{\!\!r^{(k)}\in\left[\widehat{r}_k\pm\widehat{\bonus}_k\right]}
  f\Big(\Exp^{\boldpi,p^{(k)}}\Big[\sum_{h=1}^H r^{(k)}\big(\vs_h,\va_h\big)\Big]\Big)
  \ \text{s.t.}
\hspace{-0.12in}
  \min_{\bm{c}^{(k)}\in \left[\widehat{\bm{c}}_k\pm \widehat{\bonus}_k\cdot \bm{1}\right]}
  g\Big(\Exp^{\boldpi,p^{(k)}}\Big[\sum_{h=1}^H \bm{c}^{(k)}\big(\vs_h,\va_h\big)\Big]\Big) \leq 0.
\end{align*}
The above problem is convex in the occupation measures,%
\footnote{Under mild assumptions, this program can be solved in polynomial time similar to its bandit analogue of Lemma 4.3 in \citep{AgrawalDevanurEC14}. We note that in the basic setting, it reduces to just a linear program.}
i.e., the probability $\rho(s,a,h)$ that the learner is at state-action-step $(s,a,h)$ --- c.f.~Appendix~\ref{app:subsec_convexConPlanner} for further discussion.
\begin{align*}
\adjustlimits\max_{\rho}\max_{r\in\left[\widehat{r}_k\pm\widehat{\bonus}_k\right]}
   f\Big(\sum_{s,a,h}\rho(s,a,h) r(s,a)\Big) \quad
&\text{s.t.}
\min_{\bm{c}\in \left[\widehat{\bm{c}}_k\pm \widehat{\bonus}_k\cdot \bm{1}\right]}
   g\Big(\sum_{s,a,h}\rho(s,a,h) \bm{c}(s,a)\Big) \leq 0
\\
\forall s',h:&\quad  \sum_{a} \rho(s',a,h+1)=\sum_{s, a }\rho(s,a,h)\widehat{p}_k(s'|s,a)\\
 \forall s,a,h:&\quad 0\leq \rho(s,a,h)\leq 1 \quad \text{and} \quad \sum_{s,a}\rho(s,a,h)=1.
\end{align*}

\xhdr{Guarantee for concave-convex setting.} To extend the guarantee of the basic setting to the concave-convex setting, we face an additional challenge: it is not immediately clear that the optimal policy $\boldpi^{\star}$ is feasible for the $\textsc{ConvexConPlanner}$ program because $\textsc{ConvexConPlanner}$ is defined with respect to the empirical transition probabilities $p^{(k)}$.%
\footnote{Note that in multi-armed bandit concave-convex setting \citep{AgrawalDevanurEC14}, proving feasibility of the best arm is straightforward as there are no transitions.}
\tledit{Moreover, when $H>1$, it is not straightforward to show that objective in the used model is always greater than the one in the true model as the  used model transitions  $p^{(k)}(s,a)$ can lead to different states than the ones encountered in the true model}.%
\footnote{\mdedit{Again, this is not an issue in multi-armed bandits.}}
We \tledit{deal with both of these issues by introducing} 
a novel application of \tledit{the} mean-value theorem to show that $\boldpi^{\star}$ is indeed a feasible solution of that program 
\tledit{and} create a similar regret decomposition to Proposition~\ref{prop:regret_decomposition} \wsedit{(see Proposition~\ref{lem:feasible_opt_convex} and more discussion in Appendix~\ref{app:feasible_mean_value})}\tledit{; this} 
allows us to plug in the results developed for the basic setting.  The full proof is provided in Appendix~\ref{app:convex}.

\begin{theorem}\label{thm:convex}
Let $L$ be the Lipschitz constant for $f$ and $g$ and let $\textsc{RewReg}$ and $\textsc{ConsReg}$ be the reward and consumption regrets for the basic setting (Theorem~\ref{thm:tabular}) \mdedit{with the failure probability $\delta$}. With probability $1-\delta$, our algorithm in the concave-convex setting has reward and consumption regret upper bounded by  $L\cdot\textsc{RewReg}$ and $Ld\cdot\textsc{ConsReg}$ respectively.
\end{theorem}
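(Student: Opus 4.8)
The plan is to reduce the concave-convex guarantee to the basic-setting guarantee (Theorem~\ref{thm:tabular}) via Lipschitzness, after establishing two structural facts: (i) the optimal policy $\boldpi^\star$ of \eqref{eq:objective_convex} is feasible for $\textsc{ConvexConPlanner}$ at every episode $k$ (on the good event), and (ii) the per-episode reward/consumption errors of $\boldpi_k$ measured through $f$ and $g$ are controlled by the same Bellman-error sums that appear in Proposition~\ref{prop:regret_decomposition}. First I would condition on the event (probability $\ge 1-\delta$, by Lemma~\ref{lem:valid_bonus} and the union bounds baked into Theorem~\ref{thm:tabular}) on which the bonus $\widehat{\bonus}_k$ is valid for all $k$. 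For feasibility of $\boldpi^\star$: the issue is that $\textsc{ConvexConPlanner}$ evaluates $g$ at $\Exp^{\boldpi^\star,p^{(k)}}[\sum_h \bm c^{(k)}(\vs_h,\va_h)]$ with the \emph{empirical} transitions $p^{(k)}=\widehat p_k$, whereas $\boldpi^\star$ is only known to satisfy $g(\Exp^{\boldpi^\star,p^\star}[\sum_h\bm c^\star])\le 0$. Here I would invoke the mean-value theorem as the paper hints: writing the coordinate-wise difference between $\Exp^{\boldpi^\star,p^{(k)}}[\sum_h \bm c^{(k)}]$ (minimized over $\bm c^{(k)}\in[\widehat{\bm c}_k\pm\widehat{\bonus}_k\bm 1]$) and $\Exp^{\boldpi^\star,p^\star}[\sum_h\bm c^\star]$ as a telescoping sum of Bellman-type terms, validity of the bonus forces each coordinate of the minimized consumption vector to be $\le$ the true expected consumption vector; since $g$ is monotone along the relevant direction only in the weak sense, I instead apply the mean-value theorem to $g$ between the two points and use the subgradient inequality together with the coordinatewise domination to conclude $\min_{\bm c^{(k)}} g(\Exp^{\boldpi^\star,p^{(k)}}[\sum_h\bm c^{(k)}])\le g(\Exp^{\boldpi^\star,p^\star}[\sum_h\bm c^\star])\le 0$. (This is exactly the content deferred to Proposition~\ref{lem:feasible_opt_convex}; I would cite it.) Hence $\boldpi^\star$ is feasible, so the optimal value of $\textsc{ConvexConPlanner}$ at episode $k$ is at least $f(\Exp^{\boldpi^\star,p^\star}[\sum_h r^\star])$.

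Next, with $\boldpi_k$ the maximizer returned at episode $k$ and $\widetilde r^{(k)}$, $\widetilde{\bm c}^{(k)}$ the accompanying optimal reward/consumption vectors in the boxes $[\widehat r_k\pm\widehat\bonus_k]$ and $[\widehat{\bm c}_k\pm\widehat\bonus_k\bm 1]$, I would show the two one-sided bounds
\[
\Exp^{\boldpi_k,p^\star}\Big[\sum_{h=1}^H r^\star(\vs_h,\va_h)\Big]\ \ge\ \Exp^{\boldpi_k,p^{(k)}}\Big[\sum_{h=1}^H \widetilde r^{(k)}(\vs_h,\va_h)\Big]-\Exp^{\boldpi_k}\Big[\sum_{h=1}^H\big|\bellman^{\boldpi_k,p^{(k)}}_{\widetilde r^{(k)}}(\vs_h,\va_h,h)\big|\Big],
\]
and coordinatewise the reverse inequality for consumption, i.e.\ $\Exp^{\boldpi_k,p^\star}[\sum_h \bm c^\star] \le \Exp^{\boldpi_k,p^{(k)}}[\sum_h \widetilde{\bm c}^{(k)}] + \Exp^{\boldpi_k}[\sum_h |\bellman^{\boldpi_k,p^{(k)}}_{\widetilde c_i^{(k)}}(\cdot)|\,]\,\bm e_i$ per $i$. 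These follow from the simulation-lemma telescoping identity (same computation as Proposition~\ref{prop:regret_decomposition}, just keeping the Bellman errors of the \emph{chosen} $\widetilde r^{(k)}$/$\widetilde{\bm c}^{(k)}$ rather than $\widehat r_k\pm\widehat\bonus_k$), and crucially the Bellman errors here are bounded by exactly the same quantities analyzed in the proof of Theorem~\ref{thm:tabular}, because $|\widetilde r^{(k)}-\widehat r_k|\le\widehat\bonus_k$ and $|\widetilde c_i^{(k)}-\widehat c_{k,i}|\le\widehat\bonus_k$, so validity still yields a Bellman error at most $2\widehat\bonus_k$ at each $(s,a)$. Thus $\frac1k\sum_{t\le k}\Exp^{\boldpi_t}[\sum_h|\bellman|]$ is bounded by $\textsc{RewReg}(k)$ (resp.\ $\textsc{ConsReg}(k)$) for each resource coordinate, with failure probability $\delta$.

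Finally I assemble the regret bound. For consumption: by the above, for each $i$, $\frac1k\sum_{t\le k}\Exp^{\boldpi_t,p^\star}[\sum_h c^\star_i] \le \frac1k\sum_{t\le k}\Exp^{\boldpi_t,p^{(k)}}[\sum_h \widetilde c_i^{(t)}] + \textsc{ConsReg}(k)$-type term. Using that $\boldpi_t$ is feasible for its own $\textsc{ConvexConPlanner}$, so $g(\Exp^{\boldpi_t,p^{(t)}}[\sum_h\widetilde{\bm c}^{(t)}])\le 0$, convexity of $g$ (Jensen over $t=1,\dots,k$) gives $g\big(\frac1k\sum_t\Exp^{\boldpi_t,p^{(t)}}[\sum_h\widetilde{\bm c}^{(t)}]\big)\le 0$; then $L$-Lipschitzness of $g$ in $\|\cdot\|_1$ and the coordinatewise bound (summing the $d$ per-resource slacks, each $\le\textsc{ConsReg}(k)$) yield $\textsc{ConvexConsReg}(k)\le L\cdot d\cdot\textsc{ConsReg}(k)$. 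For reward: feasibility of $\boldpi^\star$ gives $f(\Exp^{\boldpi^\star,p^\star}[\sum_h r^\star])\le \text{OPT}_k \le \Exp^{\boldpi_k,p^{(k)}}[\sum_h\widetilde r^{(k)}]$-image under $f$, so $f(\Exp^{\boldpi^\star,p^\star}[\sum_h r^\star]) - f(\frac1k\sum_t\Exp^{\boldpi_t,p^\star}[\sum_h r^\star]) \le f(\frac1k\sum_t\Exp^{\boldpi_t,p^{(t)}}[\sum_h\widetilde r^{(t)}]) - f(\frac1k\sum_t\Exp^{\boldpi_t,p^\star}[\sum_h r^\star])$ by Jensen/concavity on the left term, and then $L$-Lipschitzness of $f$ together with the per-episode scalar bound (averaged over $t$) bounds this by $L\cdot\textsc{RewReg}(k)$. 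I expect the main obstacle to be the feasibility argument for $\boldpi^\star$ (step (i)): getting the mean-value/subgradient argument to correctly handle the $\min$ over the consumption box jointly with the mismatch between $p^{(k)}$ and $p^\star$ across all $H$ steps is the one genuinely new ingredient beyond the basic setting; everything else is Jensen plus Lipschitz plus re-using Theorem~\ref{thm:tabular}'s Bellman-error analysis verbatim.
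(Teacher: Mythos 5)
Your overall architecture coincides with the paper's: establish feasibility of $\boldpi^\star$ in \textsc{ConvexConPlanner}, relate the planner's value of $\boldpi_k$ to its true value through the simulation lemma and the same Bellman-error sums as in Theorem~\ref{thm:tabular}, then assemble via Lipschitzness and Jensen (concavity of $f$, convexity of $g$), which is exactly where the $L$ and $Ld$ factors come from. However, the step you yourself identify as the genuinely new ingredient is sketched with an argument that fails. On the constraint side you note that the lower corner of the consumption box gives a planner-expected consumption that is coordinatewise below the true expected consumption, and then try to conclude $\min_{\bm{c}^{(k)}}g\bigl(\Exp^{\boldpist,p^{(k)}}[\sum_h\bm{c}^{(k)}]\bigr)\le g\bigl(\Exp^{\boldpist,p^{\star}}[\sum_h\bm{c}^{\star}]\bigr)$ by applying the mean-value theorem to $g$ between these two points together with a subgradient inequality. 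Since $g$ is not monotone, coordinatewise domination buys nothing: the gradient of $g$ at the intermediate point may have negative coordinates, and the subgradient inequality only lower-bounds $g$ at the corner, so this chain of reasoning does not produce the desired upper bound. The paper's device (Lemma~\ref{lem:feasible_opt_convex}) applies the mean-value/intermediate-value theorem not to $g$ but to the scalar map $\alpha\mapsto\Exp\bigl[V^{\boldpist,p^{(k)}}_{\widehat{c}_{i,k}+\alpha\widehat{\bonus}_k}(\vs_1,1)\bigr]$: validity of the bonus sandwiches the true value $\Exp\bigl[V^{\boldpist,p^{\star}}_{c_i^{\star}}(\vs_1,1)\bigr]$ between the endpoints $\alpha=-1$ and $\alpha=+1$, and continuity yields some $\alpha_i\in[-1,1]$ at which the planner's expected consumption of resource $i$ \emph{equals} the true one; the corresponding $\bm{c}^{(k)}$ lies in the box, so the minimum of $g$ over the box is at most $g\bigl(\Exp^{\boldpist,p^{\star}}[\sum_h\bm{c}^{\star}]\bigr)\le 0$.

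The same omission reappears on the reward side of your final assembly: you assert that feasibility of $\boldpi^\star$ alone gives $f\bigl(\Exp^{\boldpist,p^{\star}}[\sum_h r^{\star}]\bigr)\le\mathrm{OPT}_k$. Feasibility only gives $\mathrm{OPT}_k\ge\max_{r^{(k)}}f\bigl(\Exp^{\boldpist,p^{(k)}}[\sum_h r^{(k)}]\bigr)$, and because $f$ is not monotone you cannot pass from an overestimate of the expected reward to an inequality on $f$; you again need the intermediate-value argument to exhibit $r^{(k)}\in[\widehat{r}_k\pm\widehat{\bonus}_k]$ with $\Exp^{\boldpist,p^{(k)}}[\sum_h r^{(k)}]$ exactly equal to $\Exp^{\boldpist,p^{\star}}[\sum_h r^{\star}]$, which is precisely how the paper's Proposition~\ref{prop:decomposition_convex} establishes the analogue of optimism before invoking Lipschitzness. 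Once both applications of this device are supplied, the remainder of your proposal (reusing the Bellman-error bounds for the chosen $\widetilde r^{(k)},\widetilde{\bm c}^{(k)}$ from the boxes, Jensen over episodes, and the $d$ factor from the $\ell_1$ Lipschitz condition) is correct and matches the paper's proof.
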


The linear dependence on $d$ in the consumption regret above comes from the fact that we assume $g$ is Lipschitz under $\ell_1$ norm. 

\section{Knapsack setting} \label{ssec:knapsacks}
\label{sec:knapsacks}

Our last technical section extends the algorithm and guarantee of the basic setting to \mdedit{scenarios} where the constraints are hard which is in accordance with most of the literature on \emph{bandits with knapsacks}. The goal here is to achieve aggregate reward regret that is sublinear in the \mdedit{time horizon} (in our case, the number of episodes $K$), while also respecting budget constraints for as small budgets as possible. We derive guarantees in terms of \emph{reward regret}, as defined previously, and then argue that \mdedit{our guarantee} extends to the seemingly stronger benchmark of the best dynamic policy.

\xhdr{Setting and objective.}
Each resource $i\in\resources$ has an aggregate budget $B_i$ that the learner should not exceed over $K$ episodes. \mdedit{Unlike the basic setting, where we track the consumption regret, here we view this as a hard constraint.} As in most works on bandits with knapsacks, the algorithm is allowed to use a ``null action'' for an episode, i.e., an action that \mdedit{yields a zero} reward and consumption when selected at the beginning of an episode. The learner wishes to maximize her aggregate reward while respecting these hard constraints. \mdedit{We reduce this problem to a specific variant of the basic problem~\eqref{eq:objective} with $\xi(i)=\frac{B_i}{K}$. We modify the solution to~\eqref{eq:objective} to take the null action if any constraint is violated and call the resulting benchmark $\pi^\star$.} Note that $\pi^\star$ satisfies constraints in expectation. At the end of this section, we explain how our algorithm also competes against a benchmark that is required to respect constraints deterministically \mdedit{(i.e., with probability one across all episodes)}.

\xhdr{Our algorithm.}
In the basic setting of Section~\ref{sec:basic_setting}, we showed a reward regret guarantee and a consumption regret guarantee, \mdedit{proving that the average constraint violation is $\bigO(1/\sqrt{K})$. Now we seek a stronger guarantee: the learned policy needs to satisfy budget constraints with high probability.} Our algorithm  optimizes a mathematical program $\textsc{KnapsackConPlanner}$ \eqref{eq:approx_program_knapsack} that strengthens the consumption \mdedit{constraints:}
\begin{align}\label{eq:approx_program_knapsack}
    \max_{\boldpi} \Exp^{\boldpi, p^{(k)}}\Big[\sum_{h=1}^H r^{(k)}\big(\vs_h,\va_h\big)\Big] \quad \text{s.t.} \quad \forall i\in\resources: \Exp^{\boldpi, p^{(k)}}\Big[\sum_{h=1}^H c^{(k)}\big(\vs_h,\va_h, i\big)\Big]\leq \frac{(1-\epsilon)B_i}{K}.
\end{align}
In the above, $p^{(k)}$, $r^{(k)}$, $\bm{c}^{(k)}$ are exactly as in the basic setting and $\epsilon>0$ is instantiated in the theorem below. Note that the program~\eqref{eq:approx_program_knapsack} is feasible thanks to the existence of the null action. The following mixture policy induces a feasible solution: with probability $1-\epsilon$, we play the optimal policy $\boldpi^\star$ for the entire episode; with probability $\epsilon$, we play the null action for the entire episode. Note that the above program can again be cast as a linear program in the occupancy measure space --- c.f.~Appendix~\ref{app:subsec_knapsackConPlanner} for further discussion.

\xhdr{Guarantee for knapsack setting.} The guarantee of the basic setting on this tighter mathematical program seamlessly transfers to a reward guarantee that does not violate the hard constraints.

\begin{theorem}\label{thm:knapsacks}
Assume that $\min_i B_i\leq KH$, \mdedit{i.e., constraints are non-vacuous}. Let $\textsc{AggReg}(\delta)$ be a bound on the aggregate (across episodes) reward or consumption regret  for the soft-constraint setting (Theorem~\ref{thm:tabular}) \mdedit{with the failure probability $\delta$}. Let $\smash[b]{\epsilon=\frac{\textsc{AggReg}(\delta)}{\min_i B_i}}$.  If $\min_i B_i>\textsc{AggReg}(\delta)$ then, with probability $1-\delta$, the reward regret in the hard-constraint setting
is at most $\smash[b]{\frac{2H\textsc{AggReg}(\delta)}{\min_i B_i}}$ and constraints are not violated.
\end{theorem}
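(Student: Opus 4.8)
The plan is to run \textsc{ConRL} with the planner $\textsc{KnapsackConPlanner}$ of~\eqref{eq:approx_program_knapsack}, treating it as an instance of the basic setting with tightened thresholds $\xi(i)=(1-\epsilon)B_i/K$, and then convert the per-episode (soft) consumption regret of Theorem~\ref{thm:tabular} into a statement that the \emph{aggregate} consumption stays below $B_i$ with high probability. First I would verify feasibility: since the null action yields zero reward and zero consumption, the mixture policy that plays $\boldpi^\star$ with probability $1-\epsilon$ and the null action with probability $\epsilon$ consumes at most $(1-\epsilon)\,\xi(i)K/K\cdot\ldots$; more precisely, because $\boldpi^\star$ satisfies $\Exp^{\boldpi^\star,p^\star}[\sum_h c^\star(\vs_h,\va_h,i)]\le B_i/K$, the mixture consumes at most $(1-\epsilon)B_i/K$ in expectation, so it is feasible for~\eqref{eq:approx_program_knapsack}. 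This makes the program nonempty and also shows the optimal value of~\eqref{eq:approx_program_knapsack} is at least $(1-\epsilon)$ times the reward of $\boldpi^\star$; using $L$-Lipschitzness of the identity (here $f$ is the identity) the reward sacrificed by shrinking the budget is at most $\epsilon$ times the per-episode reward, which is at most $\epsilon H$. Summed over $K$ episodes this costs $\epsilon H K = \frac{\textsc{AggReg}(\delta)}{\min_i B_i}\cdot HK$; combined with the $\textsc{AggReg}(\delta)$ regret from the basic-setting guarantee and the assumption $\min_i B_i\le KH$, this is where the $\frac{2H\,\textsc{AggReg}(\delta)}{\min_i B_i}$ bound comes from.

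Next I would handle the hard constraints. Theorem~\ref{thm:tabular} applied to the tightened program gives, with probability $1-\delta$, an aggregate consumption regret bound: $\sum_{t=1}^K \Exp^{\boldpi_t,p^\star}[\sum_h c^\star(\vs_h,\va_h,i)] \le (1-\epsilon)B_i + \textsc{AggReg}(\delta)$ for every $i$ (the $(1-\epsilon)B_i$ term being $K$ times the tightened threshold $\xi(i)$). By the choice $\epsilon = \textsc{AggReg}(\delta)/\min_i B_i$ we have $\epsilon B_i \ge \textsc{AggReg}(\delta)$ for all $i$, so $(1-\epsilon)B_i + \textsc{AggReg}(\delta) \le B_i$, i.e.\ the \emph{expected} aggregate consumption is within budget. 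Because the guarantees of Theorem~\ref{thm:tabular} are anytime, the same holds for every prefix of episodes, which is what lets us stop safely: if at some episode $k$ the realized consumption of resource $i$ is about to exceed $B_i$, the algorithm switches to the null action for the remainder; the anytime bound ensures this happens only on the $1-\delta$ failure event or not at all, so on the good event constraints are never violated.

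One subtlety I would be careful about is the gap between expected aggregate consumption and \emph{realized} aggregate consumption — the theorem as stated controls the former, but the hard constraint is on the latter. The clean resolution, consistent with the bandits-with-knapsacks literature (cf.~\citealp{BwK-focs13,AgrawalDevanurEC14}), is that the margin $\epsilon B_i$ we carved out is exactly a high-probability concentration slack: the realized per-episode consumption is a $[0,H]$-bounded (indeed $[0,1]$ per step) random variable, and an Azuma–Hoeffding bound over the $K$ episodes shows the realized aggregate deviates from its expectation by $O(H\sqrt{K\ln(1/\delta)})$, which is absorbed into $\textsc{AggReg}(\delta)$ (it has the same order up to logs). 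Folding that concentration event into the failure probability keeps the total at $1-\delta$ after rescaling the constant in $\textsc{AggReg}$. The main obstacle, then, is not any single inequality but bookkeeping: making sure the null-action stopping rule, the anytime soft-regret bound, and the concentration of realized consumption are combined so that (i) the reward lost to both the budget-shrinkage and the occasional early stop is $O(H\,\textsc{AggReg}(\delta)/\min_i B_i)$, and (ii) all bad events together have probability at most $\delta$. The reward-loss accounting for early stopping is the delicate piece: one shows that on the good event, early stopping is triggered at most a negligible number of times (in fact zero, since expected consumption already fits with room to spare and deviations are controlled), so the reward regret is dominated by the budget-shrinkage term plus the basic-setting $\textsc{AggReg}(\delta)$, giving the claimed $\frac{2H\,\textsc{AggReg}(\delta)}{\min_i B_i}$.
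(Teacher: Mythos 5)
Your proposal matches the paper's proof in all essentials: feasibility of the $(1-\epsilon)$-mixture of $\boldpi^\star$ with the null action for the tightened program \eqref{eq:approx_program_knapsack}, a reward loss of $\epsilon H$ per episode plus $\textsc{AggReg}(\delta)$ combined via $\min_i B_i\leq KH$ into $\frac{2H\,\textsc{AggReg}(\delta)}{\min_i B_i}$, and the consumption bound $(1-\epsilon)B_i+\textsc{AggReg}(\delta)\leq B_i$ from the choice $\epsilon=\textsc{AggReg}(\delta)/\min_i B_i$ (with $\min_i B_i>\textsc{AggReg}(\delta)$ ensuring $\epsilon<1$). Your extra Azuma--Hoeffding step relating realized to expected aggregate consumption, and the never-triggered null-action stopping rule, are refinements of a point the paper folds implicitly into $\textsc{AggReg}(\delta)$ rather than a different route.
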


The above theorem implies that the aggregate reward regret is sublinear in $K$ as long as $\min_i B_i \gg H\textsc{AggReg}(\delta)$. The analysis in the above main theorem (provided in Appendix~\ref{app:knapsacks}) is \emph{modular} in the sense that it leverages the \textsc{ConRL}'s performance to solve \eqref{eq:approx_program_knapsack} in a black-box manner. Smaller $\textsc{AggReg}(\delta)$ from the basic soft-constraint setting immediately translates to smaller reward regret and smaller budget regime (i.e., $\min_i B_i$ can be smaller). In particular, using the $\textsc{AggReg}(\delta)$ bound of Theorem~\ref{thm:tabular}, the reward regret is sublinear as long as $\min_i B_i=\Omega(\sqrt{K})$.

In contrast, previous work of \cite{Cheung19} can only deal with larger budget regime, i.e., $\min_i B_i=\Omega(K^{2/3})$. Although the guarantees are not directly comparable as the latter is for the single-episode setting, which requires further reachability assumptions, the budget we can handle is significantly smaller and in the next section we show that our algorithm has superior empirical performance in episodic settings even when such assumptions are granted.

\xhdr{Dynamic policy benchmark.} The common benchmark used in \mdedit{bandits with knapsacks} is not the best stationary policy $\pi^\star$ that respects constraints in expectation but rather the best \emph{dynamic} policy (i.e., a  policy that makes decisions based on the history) that never violates hard constraints \emph{deterministically}. In Appendix~\ref{app:knapsacks}, we show that the optimal dynamic policy (formally defined there) has reward less than  policy $\boldpi^{\star}$ (informally, this is because $\pi^\star$ respects constraints in expectation while the dynamic policy has to satisfy constraints deterministically) and therefore the guarantee of Theorem~\ref{thm:knapsacks} also applies against the optimal dynamic policy.

\section{Empirical comparison to other concave-convex approaches}
\label{sec:empirical}
\begin{figure*}
\centering
    \includegraphics[width=0.9\textwidth]{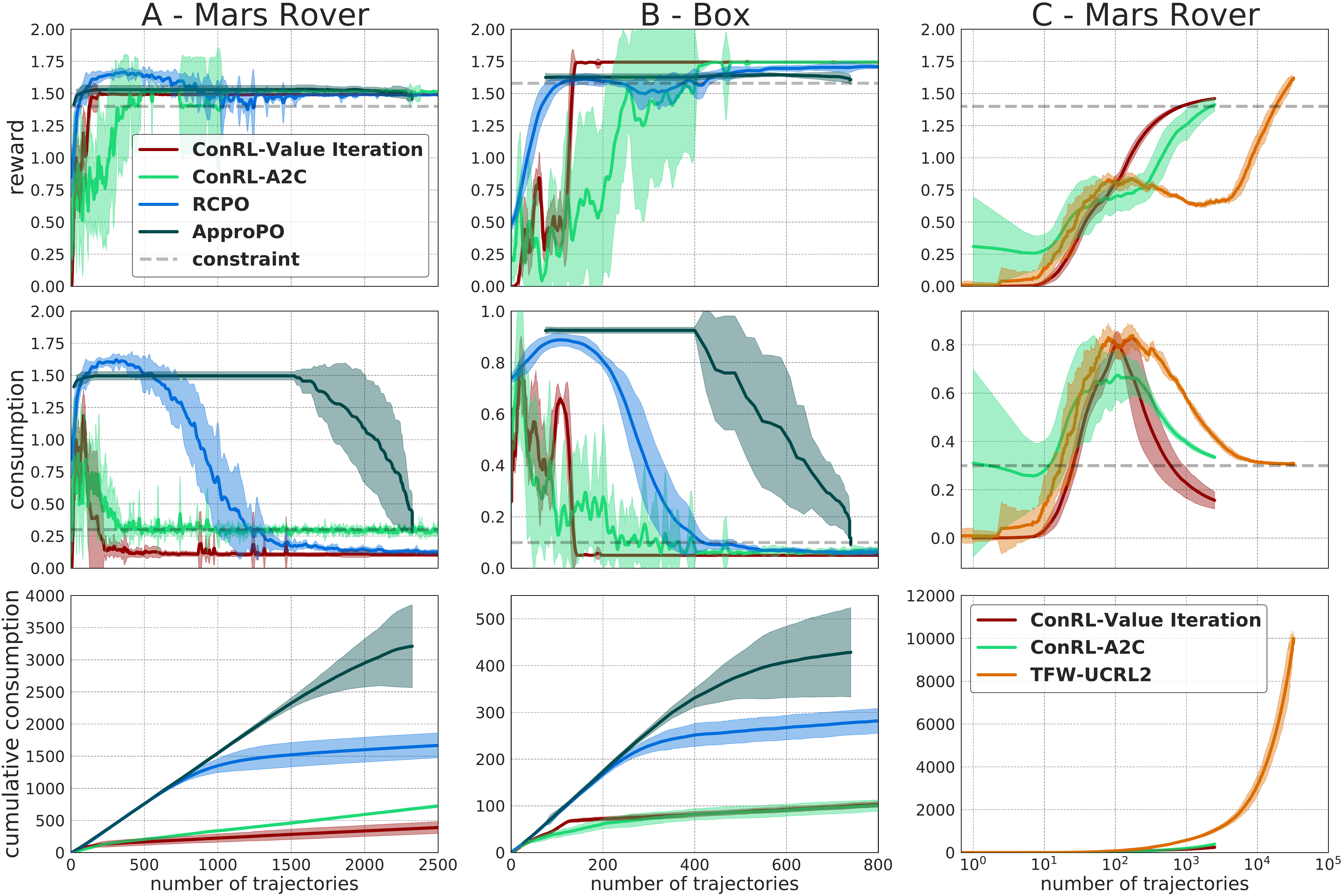}
    \caption{The performance of the algorithms as a function of the number of sample trajectories (trajectory $= 30$ samples);
    showing average and standard deviation over 10 runs. Dashed line in the second row is the upper bound on the consumption (for all algorithms), the dashed line in the first row is a lower bound on the reward (only required by $\textsc{ApproPO}$).}
    \label{fig:main}
    \vspace{-10pt}
\end{figure*}

In this section, we evaluate the performance of $\textsc{ConRL}$ against previous approaches.\footnote{Code is available at \url{https://github.com/miryoosefi/ConRL}} Although our \textsc{ConPlanner} (see Appendix~\ref{app:algorithm_details}) can be solved exactly using linear programming \citep{altman-constrainedMDP}, in our experiments, it suffices to use Lagrangian heuristic, denoted as \textsc{LagrConPlanner} (see Appendix \ref{app:lagr_conplanner}).
This Lagrangian heuristic only needs \mdedit{a planner for the \emph{unconstrained} RL task.  We consider two unconstrained RL algorithms as planners: value iteration and a model-based Advantage Actor-Critic (A2C) \citep{mnih2016asynchronous} (based on fictitious samples drawn from the model provided as an input). The resulting variants of \textsc{LagrConPlanner} are denoted \textsc{ConRL-Value Iteration} and \textsc{ConRL-A2C}}. We run our experiments on two grid-world environments \emph{Mars rover} \citep{tessler2018reward} and \emph{Box} \citep{leike2017ai}.%
\footnote{\tledit{We are not aware of any benchmarks for convex/knapsack constraints. For transparency, we compare against prior works handling concave-convex or knapsack settings on established benchmarks for the linear case.}}

\xhdr{Mars rover.} The agent must move from \mdedit{the initial} position to the goal without crashing into rocks.
If the agent reaches the goal or crashes into a rock it will stay in that cell for the remainder of the episode.
\mdedit{Reward is $1$ when the agent reaches the goal and $1/H$ afterwards.
Consumption is $1$ when the agent crashes into a rock and $1/H$ afterwards.}
The episode horizon $H$ is $30$ and the agent's action is perturbed with probability $0.1$ to a random action.

\xhdr{Box.} The agent must move \mdedit{a box from the initial position} to the goal while avoiding
\mdedit{corners} (cells adjacent to at least two walls).
If the agent reaches the goal it stays in that cell for the remainder of the episode.
Reward is $1$ when agent reaches the goal for the first time and $1/H$ afterwards; consumption is $1/H$ \mdedit{whenever the} box is in a corner. Horizon $H$ is $30$ and the agent's action is perturbed with probability $0.1$ to a random action.

We compare $\textsc{ConRL}$ to previous constrained approaches (derived for either episodic or single-episode settings) in Figure~\ref{fig:main}. \mdedit{We keep track of three metrics: episode-level reward and consumption (the first two rows) and cumulative consumption (the third row). Episode-level metrics are based on the most recent episode in the first two columns, i.e., we plot $\mathbb{E}^{\pi_k}[\sum_{h=1}^H r^{\star}_h]$ and $\mathbb{E}^{\pi_k}[\sum_{h=1}^H c^{\star}_h]$. In the third column, we plot the average across episodes so far, i.e., $\frac{1}{k}\sum_{t=1}^k\mathbb{E}^{\pi_t}[\sum_{h=1}^H r^{\star}_h]$ and $\frac{1}{k}\sum_{t=1}^k\mathbb{E}^{\pi_t}[\sum_{h=1}^H c^{\star}_h]$, and we use the log scale for the $x$-axis. The cumulative consumption is $\sum_{t=1}^k \sum_{h=1}^H c_{t,h}$ in all columns. See Appendix~\ref{app:experiment} for further details about experiments.}

\xhdr{Episodic setting.} We first compare our algorithms to two episodic RL approaches: $\textsc{ApproPO}$ \citep{MiryoosefiBrDaDuSc19} and $\textsc{RCPO}$ \citep{tessler2018reward}. We note that none of the previous approaches in this setting address sample-efficient exploration. In addition, most of them are limited to linear constraints, with the exception of $\textsc{ApproPO}$ \citep{MiryoosefiBrDaDuSc19}, which can handle general convex constraints.%
\footnote{In addition to that, trust region methods like CPO \citep{achiam2017constrained} address a more restrictive setting and require constraint satisfaction at each iteration; for this reason, they are not included in the experiments.}
\mdedit{Both $\textsc{ApproPO}$ and $\textsc{RCPO}$ (used as a baseline by \citealp{MiryoosefiBrDaDuSc19}) maintain and update a weight vector $\bm{\lambda}$, used to derive reward for an unconstrained RL algorithm, which we instantiate as A2C. $\textsc{ApproPO}$ focuses on the feasibility problem, so it requires to specify a lower bound on the reward, which we set to $0.3$ for Mars rover and $0.1$ for Box. In the first two columns of Figure \ref{fig:main} we see that both versions of $\textsc{ConRL}$ are able to solve the constrained RL task with a much smaller number of trajectories (see top two rows), and their overall consumption levels are substantially lower (the final row) than those of the previous approaches.}

\xhdr{Single-episode setting.}
Closest to our work is \textsc{TFW-UCRL2} \citep{Cheung19}, which is based on UCRL \citep{jaksch2010near}. However, that approach focuses on the single-episode setting and requires a strong reachability assumption. By connecting terminal states of our MDP to the intial state, we reduce our episodic setting to single-episode setting in which we can compare $\textsc{ConRL}$ against \textsc{TFW-UCRL2}. \mdedit{Results for Mars rover are depicted in last column of Figure~\ref{fig:main}.%
\footnote{Due to a larger state space, it was computationally infeasible to run  \textsc{TFW-UCRL2} in the Box environment.}
Again, both versions of \textsc{ConRL} find the solution with a much smaller number of trajectories (note the log scale on the $x$-axis) and their overall consumption levels are much lower than those of \textsc{TFW-UCRL2}}.
This suggests that \textsc{TFW-UCRL2} might be impractical in (at least some) episodic settings. 

 \section{Conclusions}
 \label{sec:conclusions}
 
\wsedit{In this paper we study two types of constraints in the framework of constrained tabular episodic reinforcement learning: concave rewards and convex constraints, and knapsacks constraints. Our algorithms achieve near-optimal regret in both settings, and experimentally we show that our approach outperforms prior works on constrained reinforcement learning.}

\wsedit{Regarding future work, it would be interesting to extend our framework to continuous state and action spaces. Potential directions include extensions to Lipschitz MDPs \citep{song2019efficient} and MDPs with linear parameterization \citep{jin2019provably} where optimism-based exploration algorithms exist under the classic reinforcement learning setting without constraints.}



\section*{Broader Impact}
Our work focuses on the theoretical foundations of reinforcement learning by addressing the important challenge of constrained optimization in reinforcement learning. We strongly believe that understanding the theoretical underpinnings of the main machine learning paradigms is essential and can guide principled and effective deployment of such methods.

Beyond its theoretical contribution, our work may help the design of reinforcement learning algorithms that go beyond classical digital applications of RL (board games and video games) and extend to settings with complex and often competing objectives. We believe that constraints constitute a fundamental limitation in extending RL beyond the digital world, as they exist in a wide variety of sequential decision-making applications (robotics, medical treatment, education, advertising). Our work provides a paradigm to design algorithms with efficient exploration despite the presence of constraints.\looseness=-1

That said, one needs to ensure that an algorithm offers acceptable quality in applications. Any exploration method that does not rely on off-policy samples will inevitably violate constraints \emph{sometimes} in order to learn. In some applications, this is totally acceptable: a car staying out of fuel in rare circumstances is not detrimental, an advertiser exhausting their budget some month is even less significant, a student dissatisfaction in an online test is unpleasant but probably acceptable. On the other hand, if the constraint violation involves critical issues like drug recommendation for severe diseases or decisions by self-driving cars that can cause physical harm to passengers then the algorithm needs to be carefully reviewed. It may be necessary to ``prime'' the algorithm with some data collected in advance (however costly it may be). One may need to make a judgement call on whether the ethical or societal standards are consistent with deploying an algorithm in a particular setting.\looseness=-1

To summarize, our work is theoretical in nature and makes significant progress on a problem at the heart of RL. It has the potential to guide deployment of constrained RL methods in many important applications and tackle a fundamental bottleneck in deploying RL beyond the digital world. However, 
an application needs to be carefully reviewed before deployment. 

\begin{ack}
The authors would like to thank Rob Schapire for useful discussions that helped in the initial stages of this work \added{and Yufeng Zhang whose careful reading of our proofs uncovered an error leading to an incorrect dependence on~$H$ in Theorem~\ref{thm:tabular}}. Part of the work was done when WS was at Microsoft Research NYC.\looseness=-1
\end{ack}



\bibliographystyle{apalike}
\bibliography{bibliog,extras}

\newpage
\appendix



\section*{Structure of the supplementary material.} The supplementary material consists of six sections:
\begin{itemize}
    \item Appendix~\ref{app:algorithm_details} provides the formal description of the algorithm and the instantiations of \textsc{ConPlanner} as well as how it can be expressed as a (linear/convex) mathematical program.
    \item Appendix~\ref{app:basic_setting} provides the proofs for the results of the basic setting presented in Section~\ref{sec:basic_setting}.
    \item Appendix~\ref{app:convex} provides the proofs and additional discussion for the results of the concave-convex setting presented in Section~\ref{sec:concave-convex}.
    \item Appendix~\ref{app:knapsacks} provides the proofs and additional discussion for the results of knapsack setting presented in Section~\ref{sec:knapsacks}.
    \item Appendix~\ref{app:experiment} provides further details regarding the experiments presented in Section~\ref{sec:empirical}.
    \item Appendix~\ref{app:auxil} provides auxiliary concentration lemmas useful for the derivation of our results.
\end{itemize}.

\section{Algorithm: Formal description and design choices}
\label{app:algorithm_details}
Our main algorithm, denoted by \textsc{ConRL}, is presented at Algorithm~\ref{alg:ConRL}. We instantiate \textsc{ConRL} for our different settings (i.e. basic setting, concave-convex, and knapsack) by using the appropriate $\textsc{ConPlanner}$ that we discuss in the remainder of this section.
\begin{algorithm}[H]
 \begin{algorithmic}[1]
 \FOR{Episode $k$ from $1$ to $K$}
     \STATE \textbf{Compute empirical estimates:} \\\begin{center}
         Compute $N_k$, $\widehat{p}_k$, $\widehat{r}_k$, and $\widehat{\bm{c}}_k$ based on equations (\ref{eq:empirical-N}-\ref{eq:empirical-c})
     \end{center}
     \STATE \textbf{Compute bonus:}\begin{center}
          Compute $\widehat{b}_k$ as equation (\ref{eq:bonus})
     \end{center}
     \STATE \textbf{Call constrained planner}: \begin{center}
         $\boldpi_k \leftarrow \textsc{ConPlanner}(\widehat{p}_k,\widehat{r}_k,\widehat{\bm{c}}_k, \widehat{b}_k)$
     \end{center}
     \STATE \textbf{Execute policy}: initial state  $\vs_{k,1} = s_0$\;
     \FOR{Stage $h$ from $1$ to $H$}
     \STATE Select $\va_{k,h} \sim \boldpi_k\Big(\vs_{k,h}\Big)$\;
     \STATE Observe	reward $r_{k,h}$, consumptions $\forall i\in \resources: c_{k,h,i}$, and new state  $\vs_{k,h+1}$\;
     \ENDFOR
 \ENDFOR
\caption{$\textsc{ConRL}$}
\label{alg:ConRL}
 \end{algorithmic}
\end{algorithm}

\subsection{Basic setting - \textsc{BasicConPlanner}}\label{app:subsec_basicConPlanner} We define the bonus-enhanced $\cmdp$, i.e. $\model^{(k)}=\big(p^{(k)},r^{(k)},\bm{c}^{(k)}\big)$, as
\begin{align*}
      &p^{(k)}(s' | s,a) = \widehat{p}_k(s' | s,a ) \quad \forall s,a,s'\\
      &r^{(k)}(s,a) = \widehat{r}_k(s,a)+\widehat{b}_k(s,a)\quad  \forall s,a \\
      &c^{(k)}(s,a,i) = \widehat{c}_k(s,a,i)-\widehat{b}_k(s,a)\quad \forall s,a , i \in \resources\\
\end{align*}
then we solve the following optimization problem 
\begin{align*}
    &\max_{\boldpi} \Exp^{\boldpi, p^{(k)}}\Big[\sum_{h=1}^H r^{(k)}\big(\vs_h,\va_h\big)\Big] \qquad \text{s.t.} \qquad &\forall i\in\resources: \Exp^{\boldpi, p^{(k)}}\Big[\sum_{h=1}^H c^{(k)}\big(\vs_h,\va_h, i\big)\Big]\leq \xi(i).
\end{align*} 
This optimization problem can be solved exactly since it is equivalent to the following linear program on occupation measures \citep{rosenberg2019online,altman-constrainedMDP}. Decision variables are $\rho(s,a,h)$, i.e. probability of agent being at state action pair $(s,a)$ at time step $h$.
\begin{equation}
\label{eq:LP_basicSetting}
\begin{aligned}
    \max_{\rho} \sum_{s,a,h} \rho(s,a,h)r^{(k)}(s,a) 
    &\quad \text{ s.t. } \sum_{s,a,h} \rho(s,a,h)c^{(k)}(s,a,i) \leq \xi(i) \quad \forall i \in \resources\\
    &\forall s',h \quad \sum_{a} \rho(s',a,h+1)=\sum_{s,a} \rho(s,a,h)p^{(k)}(s' | s,a) \\
    &\forall s,a,h \quad  0 \leq \rho(s,a,h) \leq 1 \quad   \quad \sum_{s,a} \rho(s,a,h)=1 
\end{aligned}
\end{equation}
\subsection{Concave-convex setting - \textsc{ConvexConPlanner}}\label{app:subsec_convexConPlanner} In this setting, unlike basic setting, objective and constraints are not linear. Therefore, due to lack of monotonicity, we cannot explicitly define the bonus-enhanced $\cmdp$ $\model^{(k)}=\big(p^{(k)},r^{(k)},\bm{c}^{(k)}\big)$. The bonus-enhanced $\cmdp$ is implicit in the following program that we solve (see \autoref{sec:concave-convex})
\begin{align*}
\max_{\boldpi}\hspace{-0.09in}\max_{r^{(k)}\in\big[\widehat{r}_k\pm\widehat{\bonus}_k\big]}f\Big(\Exp^{\boldpi,p^{{(k)}}}\Big[\sum_{h=1}^H r^{(k)}\big(\vs_h,\va_h\big)\Big]\Big) \ \text{s.t.}\hspace{-0.06in} \min_{\bm{c}^{(k)}\in \big[\widehat{\bm{c}}_k\pm \widehat{\bonus}_k\cdot \bm{1}\big]}g\Big(\Exp^{\boldpi,p^{(k)}}\Big[\sum_{h=1}^H \bm{c}^{(k)}\big(\vs_h,\va_h\big)\Big]\Big) \leq 0.
\end{align*}
Similar to before, expressing this program based on occupation measures provides a convex program.
\begin{equation}
\label{eq:CP_concaveConvexSetting}
\begin{aligned}
    \max_{\rho} \hspace{-0.09in} \max_{r\in \big[\widehat{r}_k\pm\widehat{\bonus}_k\big]} f\Big(\sum_{s,a,h}\rho(s,a,h)r(s,a)\Big)\quad &\text{s.t.}\  \min_{\bm{c}\in \big[{\widehat{\bm{c}}}_k\pm\widehat{\bonus}_k\cdot\boldsymbol{1}\big]} g\Big(\sum_{s,a,h}\rho(s,a,h)\bm{c}(s,a)\Big)\leq 0 \\ &{\forall s',h:\quad  \sum_{a} \rho(s',a,h+1)=\sum_{s, a }\rho(s,a,h)\widehat{p}_k(s'|s,a)}\\
 &\forall s,a,h:\quad 0\leq \rho(s,a,h)\leq 1 \quad \text{and} \quad \sum_{s,a}\rho(s,a,h)=1
\end{aligned} 
\end{equation}
The notations ${r\in \big[\widehat{r}_k\pm\widehat{\bonus}_k\big]}$ and ${\bm{c}\in \big[{\widehat{\bm{c}}}_k\pm\widehat{\bonus}_k\cdot\boldsymbol{1}\big]}$ are defined as
$$
    {r\in \big[\widehat{r}_k\pm\widehat{\bonus}_k\big]} \iff \forall s,a: \quad  r(s,a) \in [\widehat{r}_k(s,a)-\widehat{\bonus}_k(s,a),\widehat{r}_k(s,a)+\widehat{\bonus}_k(s,a)]
$$
$$
    {\bm{c}\in \big[{\widehat{\bm{c}}}_k\pm\widehat{\bonus}_k\cdot\boldsymbol{1}\big]} \iff  \forall i \in \resources, s, a: \quad c(s,a,i) \in [\widehat{c}_k(s,a,i)-\widehat{\bonus}_k(s,a),\widehat{c}_k(s,a,i)+\widehat{\bonus}_k(s,a)]
$$
Note that if $f$ and $g$ are linear, we end up with a linear program similar to (\ref{eq:LP_basicSetting})
\subsection{Knapsack setting - \textsc{KnapsackConPlanner}}\label{app:subsec_knapsackConPlanner}
We define the bonus-enhanced cMDP, i.e. $\model^{(k)}=\big(p^{(k)},r^{(k)},\bm{c}^{(k)}\big)$ similar to basic setting (\ref{app:subsec_basicConPlanner}). We also solve a similar optimization problem with tighter constraints:
\begin{align*}
    \max_{\boldpi} \Exp^{\boldpi, p^{(k)}}\Big[\sum_{h=1}^H r^{(k)}\big(\vs_h,\va_h\big)\Big] \quad \text{s.t.} \quad \forall i\in\resources: \Exp^{\boldpi, p^{(k)}}\Big[\sum_{h=1}^H c^{(k)}\big(\vs_h,\va_h, i\big)\Big]\leq \frac{(1-\epsilon)B_i}{K}.
\end{align*} This optimization problem can again be solved using the following linear program on occupation measures. Decision variables are $\rho(s,a,h)$, i.e. probability of agent being at state action pair $(s,a)$ at step $h$.
\begin{equation}
\label{eq:LP_knapsackSetting}
\begin{aligned}
    \max_{\rho} \sum_{s,a,h} \rho(s,a,h)r^{(k)}(s,a) 
    &\quad \text{ s.t. } \sum_{s,a,h} \rho(s,a,h)c^{(k)}(s,a,i) \leq \frac{(1-\epsilon)B_i}{K} \quad \forall i \in \resources\\
    &\forall s',h \quad \sum_{a} \rho(s',a,h+1)=\sum_{s,a} \rho(s,a,h)p^{(k)}(s' | s,a) \\
    &\forall s,a,h \quad  0 \leq \rho(s,a,h) \leq 1 \quad   \quad \sum_{s,a} \rho(s,a,h)=1 
\end{aligned}
\end{equation}

\section{Analysis: Basic setting (Section~\ref{sec:basic_setting})}
\label{app:basic_setting}

In this section, we prove the main guarantee for the basic setting. 

\subsection{Validity of bonus (Lemma~\ref{lem:valid_bonus})}\label{app:valid_bonus}
\wsedit{We first prove that $\widehat{\bonus}_k(s,a)= \min\left\{2 H, H\sqrt{\frac{2\ln\big(8SAH(d+1)k^2/\delta)}{N_k(s,a)}}\right\}$ } of Eq. \eqref{eq:bonus} is valid as in the Definition~\ref{defn:valid_bonus}.
\begin{proof}[Proof of Lemma~\ref{lem:valid_bonus}]
We focus on a single state-action pair $s,a$, stage $h$, and objective $m$. Since the support of $m$ is in $[0,1]$ and the one of the value is in $[0,H-1]$, by Hoeffding inequality  (see Lemma~\ref{lem:azuma_hoeffding_anytime}), it holds that, for all $k$, since $(s,a)$-pair is visited $N_k(s,a)$ times prior to episode $k$, with probability at least $1-\delta'$:
\begin{align*}\Big|\Big(\widehat{m}_k(s,a)-m^{\star}(s,a)\Big)+\sum_{s'\in\states}\Big(\widehat{p}_k(s'|s,a)-p^{\star}(s'|s,a)\Big)V
\Big|\leq H\sqrt{\frac{2\ln(2/\delta')}{N_k(s,a)}}.
\end{align*}
Also note that $\widehat{m}_k(s,a) \in [0,1], m^{\star}(s,a)\in [0,1]$, and $\|V\|_{\infty} \leq H$, the LHS of the above inequality must be less than $1 + H \leq 2H$.

As a result, the bonus $\widehat{\bonus}_k(s,a,\delta)$ satisfies this inequality for a particular state-action-step-objective with failure probability at most $\delta'=\frac{\delta}{4SAH(d+1)k^2}$ and is therefore valid (satisfying it for all states-actions-steps-objectives) with failure probability $\frac{\delta}{4k^2}$.
Union bounding across episodes, the probability of $\widehat{\bonus}_k(s,a,\delta)$ not being valid for some $k$ is at most $\sum_{k=1}^K \frac{\delta}{4k^2}\leq \delta$.
\end{proof}

\subsection{Valid bonus implies optimism}
The main reason to optimize a bonus-enhanced model with valid bonuses is because the latter render the model \emph{optimistic}, i.e., its estimated reward is an overestimate of the true reward. Similarly, in constrained settings, its estimated resource consumptions are underestimates of the true resource consumptions. This is formalized in the following definition.
\begin{definition}\label{defn:optimistic_model}
A $\cmdp$ $\model=(p,r,\bm{c})$ is \emph{optimistic} if its estimated reward (resp. consumption) value function for policy $\boldpist$ upper (resp. lower) bounds its corresponding value function under the ground truth:  \begin{align*}&\Exp\Big[V_r^{\boldpist,p}(\vs_1,1)\Big]\geq \Exp\Big[V_{r^{\star}}^{\boldpist,p^{\star}}(\vs_1,1)\Big]\quad \text{and}\quad \Exp\Big[V_{c_i}^{\boldpist,p}(\vs_1,1)\Big]\leq \Exp\Big[V_{c_i^{\star}}^{\boldpist,p^{\star}}(\vs_1,1)\Big] \forall i\in\resources.\end{align*}
\end{definition}
An important block of the analysis for the basic setting is to show that, when using a bonus-enhanced model with valid bonuses, the resulting $\cmdp$ is optimistic.

\begin{lemma}\label{lem:valid_implies_optimism}
If the bonus $\widehat{\bonus}_k(s,a)$ of Eq.~\eqref{eq:bonus} in episode $k$ is valid (Definition~\ref{defn:valid_bonus}) for the corresponding $\cmdp$ $\model^{(k)}=\big(p^{(k)},r^{(k)},\bm{c}^{(k)}\big)$ then $\model^{(k)}$ is optimistic.
\end{lemma}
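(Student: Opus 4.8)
The plan is to prove the two inequalities in Definition~\ref{defn:optimistic_model} by induction on the stage index $h$, going backwards from $h=H+1$ to $h=1$, showing the stronger statement that $V_{r^{(k)}}^{\boldpist,p^{(k)}}(s,h)\geq V_{r^\star}^{\boldpist,p^\star}(s,h)$ for \emph{every} state $s$ and stage $h$ (and the reversed inequality for each consumption $c_i$). The base case $h=H+1$ is trivial since both value functions are $0$. For the inductive step I would write out the one-step Bellman recursion for both sides: by definition $Q_{r^{(k)}}^{\boldpist,p^{(k)}}(s,a,h)=r^{(k)}(s,a)+\sum_{s'}p^{(k)}(s'|s,a)V_{r^{(k)}}^{\boldpist,p^{(k)}}(s',h+1)$ and similarly for the ground truth with $r^\star,p^\star$.

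The key step is to decompose the difference $Q_{r^{(k)}}^{\boldpist,p^{(k)}}(s,a,h)-Q_{r^\star}^{\boldpist,p^\star}(s,a,h)$ into two pieces. The first piece compares the model to the ground truth \emph{holding the next-stage value function fixed} at the model's value $V_{r^{(k)}}^{\boldpist,p^{(k)}}(\cdot,h+1)$; the second piece is $\sum_{s'}p^\star(s'|s,a)\bigl(V_{r^{(k)}}^{\boldpist,p^{(k)}}(s',h+1)-V_{r^\star}^{\boldpist,p^\star}(s',h+1)\bigr)$, which is nonnegative by the inductive hypothesis. For the first piece, I would add and subtract the term $\sum_{s'}\bigl(\widehat p_k(s'|s,a)-p^\star(s'|s,a)\bigr)V_{r^\star}^{\boldpist,p^\star}(s',h+1)$ so as to match the exact expression appearing in the validity definition (Definition~\ref{defn:valid_bonus}). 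After this bookkeeping, the first piece equals $\bigl(\widehat r_k(s,a)+\widehat\bonus_k(s,a)-r^\star(s,a)\bigr)+\sum_{s'}\bigl(\widehat p_k(s'|s,a)-p^\star(s'|s,a)\bigr)V_{r^\star}^{\boldpist,p^\star}(s',h+1)$ plus a cross term of the form $\sum_{s'}(\widehat p_k(s'|s,a)-p^\star(s'|s,a))\bigl(V_{r^{(k)}}^{\boldpist,p^{(k)}}(s',h+1)-V_{r^\star}^{\boldpist,p^\star}(s',h+1)\bigr)$; I would instead organize the split so that this cross term is not produced — i.e., pair $p^\star$ with the new value and $\widehat p_k$ with the old value, using $p^{(k)}=\widehat p_k$ — leaving exactly the validity expression (which is $\geq -\widehat\bonus_k(s,a)$) plus the bonus $\widehat\bonus_k(s,a)$ itself, hence $\geq 0$. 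Taking expectations over $a\sim\boldpist(\cdot|s)$ then gives $V_{r^{(k)}}^{\boldpist,p^{(k)}}(s,h)\geq V_{r^\star}^{\boldpist,p^\star}(s,h)$, completing the induction; finally evaluating at $s=\vs_1$, $h=1$ and taking the outer expectation over the initial state yields the claim.

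The consumption inequalities are entirely symmetric: since $c^{(k)}(s,a,i)=\widehat c_k(s,a,i)-\widehat\bonus_k(s,a)$, the validity bound $\bigl|(\widehat c_k(s,a,i)-c_i^\star(s,a))+\sum_{s'}(\widehat p_k-p^\star)(s'|s,a)V_{c_i^\star}^{\boldpist,p^\star}(s',h+1)\bigr|\leq\widehat\bonus_k(s,a)$ gives that the model quantity is at most the true quantity, and the same backward induction (with all inequalities reversed, again using monotonicity of the recursion in the next-stage value) propagates this to the value functions. The main obstacle is purely organizational: being careful with the add/subtract step so that the next-stage value functions that get multiplied by the transition-probability difference are the \emph{true} ones $V_{m^\star}^{\boldpist,p^\star}$ (as required by Definition~\ref{defn:valid_bonus}) rather than the model ones, and ensuring no stray cross term survives; this is exactly where the identity $p^{(k)}=\widehat p_k$ and the inductive hypothesis must be invoked in the right order.
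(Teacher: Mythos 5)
Your proposal is correct and follows essentially the same route as the paper: a backward induction over stages in which the difference of $Q$-functions is split so that the transition-probability difference $\widehat p_k-p^\star$ multiplies the \emph{true} next-stage value $V_{m^\star}^{\boldpist,p^\star}$ (exactly matching Definition~\ref{defn:valid_bonus}) and the remaining term $\sum_{s'}p^{(k)}(s'|s,a)\bigl(V_{m^{(k)}}^{\boldpist,p^{(k)}}(s',h+1)-V_{m^\star}^{\boldpist,p^\star}(s',h+1)\bigr)$ is handled by the inductive hypothesis, with signs reversed for the consumptions. The paper realizes your "reorganized split" by first lower-bounding the model's Bellman backup via the inductive hypothesis and then subtracting the true backup, which is the same decomposition you arrive at.
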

\begin{proof}
We first prove the optimism of the model for the reward objective. More concretely, we show by induction that for any state $s$, action $a$, and stage $h$, $Q^{\boldpist,p^{(k)}}_{r^{(k)}}(s,a,h)\geq Q^{\boldpist,p^{\star}}_{r^{\star}}(s,a,h)$; taking expectation on the state-action pair of the first state, the claim then follows. 

{Since the setting ends at episode $H$, $Q^{\boldpist,p^{(k)}}_{r^{(k)}}(s,a,H+1)= Q^{\boldpist,p^{\star}}_{r^{\star}}(s,a,H+1)=0$. 

We assume that the inductive hypothesis $Q^{\boldpist,p^{(k)}}_{r^{(k)}}(s,a,h+1)\geq Q^{\boldpist,p^{\star}}_{r^{\star}}(s,a,h+1)$ (and thus also $V^{\boldpist,p^{(k)}}_{r^{(k)}}(s,h+1)\geq V^{\boldpist,p^{\star}}_{r^{\star}}(s,h+1)$) holds, and proceed with the inductive step. The $Q$-functions in question are:}
\begin{align*}
Q^{\boldpist,p^{(k)}}_{r^{(k)}}(s,a,h)&=r^{(k)}(s,a)+\sum_{s'\in\states}p^{(k)}(s'|s,a)V^{\boldpist,p^{(k)}}_{r^{(k)}}(s',h+1)\\&\geq  r^{(k)}(s,a)+\sum_{s'\in\states}p^{(k)}(s'|s,a)V^{\boldpist,p^{\star}}_{r^{\star}}(s',h+1)\\
Q^{\boldpist,p^{\star}}_{r^{\star}}(s,a,h)&=r^{\star}(s,a)+\sum_{s'\in\states}p^{\star}(s'|s,a)V^{\boldpist,p^{\star}}_{r^{\star}}(s',h+1)
\end{align*}
Subtracting, we have:
\begin{align*}Q^{\boldpist,p^{(k)}}_{r^{(k)}}(s,a,h)-Q^{\boldpist,p^{\star}}_{r^{\star}}(s,a,h)&\geq\Big(\widehat{r}_k(s,a)+\widehat{\bonus}_k(s,a)-r^{\star}(s,a)\Big)\\&+\sum_{s'\in\states}\Big(\widehat{p}_k(s'|s,a)-p^{\star}(s'|s,a)\Big)V^{\boldpist,p^{\star}}_{r^{\star}}(s',h+1)\geq 0,
\end{align*}
where the last inequality holds since the bonuses are valid. 

The optimism of the model with respect to the consumption objectives follows the same steps altering the direction of the inequalities and setting the estimate as empirical mean minus the bonus. 
\end{proof}

We emphasize that our bonus in Eq~\eqref{eq:bonus} does not scale polynomially with respect to $|\mathcal{S}|$; despite that, as indicated by the above lemma, it suffices to prove optimism.

\subsection{Simulation lemma}\label{app:simulation_lemma}
To prove the Bellman-error regret decomposition, an essential piece is the so called \emph{simulation lemma} \citep{Kearns2002} which we adapt to constrained settings below:
\begin{lemma}[Simulation lemma]\label{lem:simulation}
For any policy $\boldpi$, any $\cmdp$ $\model=(p,r,\bm{c})$, and any objective $m\in \{r\}\cup \{c_i\}_{i\in\resources}$ with corresponding true objective $m^{\star}\in \{r^{\star}\}\cup \{c_i^{\star}\}_{i\in\resources},$, it holds that:
\begin{align}
    &\Exp^{\boldpi}\Big[V_m^{\boldpi,p}(\vs_1,1)\Big]-\Exp^{\boldpi}\Big[V_{m^{\star}}^{\boldpi,p^{\star}}(\vs_1,1)\Big]
    = \Epi\Big[\sum_{h=1}^H \bellman_m^{\boldpi,p}(\vs_h,\va_h,h)\Big].\label{eq:bound_values_bellman}
\end{align}
\end{lemma}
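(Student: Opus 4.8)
The plan is to prove the simulation lemma by unrolling the recursion that defines the value function and recognizing that the per-step discrepancy is exactly the Bellman error. First I would fix the policy $\boldpi$, the $\cmdp$ $\model=(p,r,\bm{c})$, and the objective $m$ with its true counterpart $m^{\star}$, and introduce the shorthand $\Delta(s,h) \defeq V_m^{\boldpi,p}(s,h) - V_{m^{\star}}^{\boldpi,p^{\star}}(s,h)$. The goal is then to show that $\Exp^{\boldpi}[\Delta(\vs_1,1)] = \Exp^{\boldpi}\big[\sum_{h=1}^H \bellman_m^{\boldpi,p}(\vs_h,\va_h,h)\big]$.

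The key computation is a telescoping argument over the horizon. For a fixed $h$, I would condition on the state $\vs_h$ reached under $\boldpi$ and $p^\star$, average over $\va_h \sim \boldpi(\cdot\mid \vs_h)$, and write
\begin{align*}
V_m^{\boldpi,p}(\vs_h,h) - V_{m^\star}^{\boldpi,p^\star}(\vs_h,h)
&= \Exp_{\va_h\sim\boldpi}\Big[Q_m^{\boldpi,p}(\vs_h,\va_h,h) - Q_{m^\star}^{\boldpi,p^\star}(\vs_h,\va_h,h)\Big].
\end{align*}
Inside the expectation, I add and subtract the ``hybrid'' quantity $m^\star(\vs_h,\va_h) + \sum_{s'} p^\star(s'\mid \vs_h,\va_h) V_m^{\boldpi,p}(s',h+1)$. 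The difference between $Q_m^{\boldpi,p}(\vs_h,\va_h,h)$ and this hybrid is precisely $\bellman_m^{\boldpi,p}(\vs_h,\va_h,h)$ by Definition~\eqref{eq:Bellman-errors-defn}; the difference between the hybrid and $Q_{m^\star}^{\boldpi,p^\star}(\vs_h,\va_h,h)$ is $\sum_{s'} p^\star(s'\mid \vs_h,\va_h)\big(V_m^{\boldpi,p}(s',h+1) - V_{m^\star}^{\boldpi,p^\star}(s',h+1)\big)$, i.e.\ the expectation of $\Delta(\vs_{h+1},h+1)$ over the true next-state distribution. Taking the outer expectation over the trajectory run under $\boldpi$ and $p^\star$ up to step $h$, this yields the one-step identity $\Exp^{\boldpi}[\Delta(\vs_h,h)] = \Exp^{\boldpi}[\bellman_m^{\boldpi,p}(\vs_h,\va_h,h)] + \Exp^{\boldpi}[\Delta(\vs_{h+1},h+1)]$.

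From there I would iterate this identity from $h=1$ to $h=H$, using the boundary conditions $V_m^{\boldpi,p}(s,H+1) = V_{m^\star}^{\boldpi,p^\star}(s,H+1) = 0$ so that $\Delta(\cdot,H+1)=0$, and the telescoping sum collapses to $\Exp^{\boldpi}[\Delta(\vs_1,1)] = \sum_{h=1}^H \Exp^{\boldpi}[\bellman_m^{\boldpi,p}(\vs_h,\va_h,h)]$, which is exactly \eqref{eq:bound_values_bellman}. The main thing to be careful about is bookkeeping with the expectations: the value functions themselves are defined via $p$-transitions, but the outer expectation $\Exp^{\boldpi}$ distributing over $\vs_1,\dots,\vs_H$ is taken with respect to the \emph{true} transitions $p^\star$ (and randomness in $\boldpi$), which is why the induction propagates cleanly through the $p^\star$-weighted term rather than the $p$-weighted one. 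No deep obstacle here --- the content is entirely in lining up the Bellman-error definition with the add-and-subtract step and in verifying the measure under which each expectation is taken.
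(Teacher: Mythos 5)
Your proposal is correct and follows essentially the same route as the paper: the ``add-and-subtract the hybrid quantity'' step is exactly the paper's rearrangement of the Bellman-error definition (combined with $\bellman_{m^\star}^{\boldpi,p^\star}=0$), and the subsequent telescoping over $h$ with $\Delta(\cdot,H+1)=0$ matches the paper's recursive unrolling. Your explicit remark that the outer expectation propagates through the $p^\star$-weighted term is the same bookkeeping point the paper relies on, just stated more carefully.
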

\begin{proof}
For all of $m\in\{r\}\cup \{c_i\}_{i\in\resources}$, rearranging the definitions of Bellman errors, we obtain: 
\begin{align*}Q_m^{\boldpi,p}(s,a,h)&=\Big(\bellman_m^{\boldpi,p}(s,a,h)+m^{\star}(s,a)\Big)+\sum_{s'\in\states}p^{\star}(s'|s,a) V_m^{\boldpi,p}(s',h+1)\\
Q_{m^{\star}}^{\boldpi,p^{\star}}(s,a,h)&=\Big(\bellman_{m^{\star}}^{\boldpi,p^{\star}}(s,a,h)+m^{\star}(s,a)\Big)+\sum_{s'\in\states}p^{\star}(s'|s,a) V_{m^*}^{\boldpi,p^*}(s',h+1)
\end{align*}
By definition of the Bellman error, the Bellman error with respect to the true model is equal to $0$. As a result, subtracting the two above equations, we obtain:
\begin{align*}
    Q_{m}^{\boldpi,p}(s,a,h)-Q_{m^{\star}}^{\boldpi,p^{\star}}(s,a,h)=\bellman_m^{\boldpi,p}(s,a,h) + \sum_{s'\in\states} p^{\star}(s'|s,a)\Big(V_m^{\boldpi,p}(s',h+1) -   V_{m^{\star}}^{\boldpi,p^{\star}}(s',h+1)\Big).
\end{align*}
Taking expectation over policy $\boldpi$ to select $a$, the initial state $s_1$, and setting $h=1$, we obtain:
\begin{align*}
    \Exp_{\vs_1}\Big[V_m^{\boldpi,p}\big(\vs(1),1\big)-V_{m^{\star}}^{\boldpi,p^{\star}}\big(\vs_1,1\big)\Big]&=\Exp^{\boldpi}\Big[\bellman_m^{\boldpi,p}\big(\vs_1,\va_1,1\big)\Big]+\Exp^{\boldpi}\Big[V_m^{\boldpi,p}\big(\vs_2,2\big) -   V_{m^{\star}}^{\boldpi,p^*}\big(\vs_2,2\big)\Big].
\end{align*}
Recursively bounding the second term of the RHS as above concludes the lemma.
\end{proof}

\subsection{Bellman-error regret decomposition (Proposition~\ref{prop:regret_decomposition})}

\begin{proof}[Proof of Proposition~\ref{prop:regret_decomposition}]
The consumption requirement \eqref{eq:consumption_decomposition} for resource $i$ follows by applying the simulation lemma (Lemma~\ref{lem:simulation}) on $\cmdp$ $\model^{(k)}$ and objective $m=c_i^{(k)}$ (with corresponding true objective $m^{\star}=c_{i}^{\star}$) and using that $\boldpi_k$ is feasible for $\textsc{ConPlanner}(p^{(k)},r^{(k)},\bm{c}^{(k)})$:
\begin{align*}
  \Exp^{\boldpi_k, p^{\star}}\Big[\sum_{h=1}^Hc^{\star}(\vs_h,\va_h,i)\Big]&=  
    \Exp^{\boldpi_k}\Big[V_{c_i^{\star}}^{\boldpi,p^{\star}}(\vs_1,1)\Big]=\Exp\Big[V_{c_i}^{\boldpi_k,p}(\vs_1,1)\Big]- \Exp^{\boldpi_k}\Big[\sum_{h=1}^H \bellman_{c_i^{(k)}}^{\boldpi_k,p^{(k)}}\big(\vs_h,\va_h,h)\Big]\\
    &\leq \xi(i)+\Exp^{\boldpi_k}\Big[\sum_{h=1}^H \Big| \bellman_{c_i^{(k)}}^{\boldpi_k,p^{(k)}}\big(\vs_h,\va_h,h\big)\Big|\Big]
\end{align*}
Regarding the reward requirement \eqref{eq:reward_decomposition}, what we wish to bound is:
\begin{align*}\Exp^{\boldpist, p^{\star}}\Big[\sum_{h=1}^Hr^{\star}(\vs_h,\va_h)\Big]-\Exp^{\boldpi_k, p^{\star}}\Big[\sum_{h=1}^Hr^{\star}(\vs_h,\va_h)\Big]= \Exp\Big[V_{r^{\star}}^{\boldpist,p^{\star}}(\vs_1,1)\Big]-\Exp\Big[V_{r^{\star}}^{\boldpi_k,p^{\star}}(\vs_1,1)\Big]
\end{align*}
the validity of the bonus implies that the model $\model^{(k)}$ is optimistic (Lemma~\ref{lem:valid_implies_optimism}), i.e., we have that $\Exp\Big[V_{r^{\star}}^{\boldpist,p^{\star}}(\vs_1,1)\Big]\leq \Exp\Big[V_{r^{(k)}}^{\boldpist,p^{(k)}}(\vs_1,1)\Big]$.
If $\boldpist$ is feasible for $\textsc{ConPlanner}(p^{(k)},r^{(k)},\bm{c}^{(k)})$ then, since $\boldpi_k$ is the maximizer for this program:
\begin{align}\Exp\Big[V_{r^{(k)}}^{\boldpist,p^{(k)}}(\vs_1,1)\Big]-\Exp\Big[V_{r^{\star}}^{\boldpi_k,p^{\star}}(\vs_1,1)\Big]&\leq \Exp\Big[V_{r^{(k)}}^{\boldpi_k,p^{(k)}}(\vs_1,1)\Big]-\Exp\Big[V_{r^{\star}}^{\boldpi_k,p^{\star}}(\vs_1,1)\Big]\label{eq:need_exactness}
\\&=  \Exp^{\boldpi_k}\Big[\sum_{h=1}^H \bellman_{r^{(k)}}^{\boldpi_k,p^{(k)}}\big(\vs_h,\va_h,h\big)\Big]\nonumber
\end{align}
where the last equality holds by applying the simulation lemma with $m=r$. Hence, this proves \eqref{eq:reward_decomposition}.

What is left to show is that $\boldpist$ is indeed feasible for $\textsc{ConPlanner}(p^{(k)},r^{(k)},\bm{c}^{(k)})$. Since $\model^{(k)}$ is optimistic and $\boldpist$ is feasible for the ground truth $\truemodel$, for all resources $i\in\resources$:
\begin{align*}\Exp\Big[V_{c_i^{(k)}}^{\boldpist,p^{(k)}}(\vs_1,1)\Big]\leq \Exp\Big[V_{c_i^{\star}}^{\boldpist,p^{\star}}(\vs_1,1)\Big]\leq \xi(i).
\end{align*}
This completes the proof of the proposition.
\end{proof}

\subsection{Bounding the Bellman error}
\label{app:covering_bellman_error}
We now provide an upper bound on the Bellman error which arises in the RHS of the regret decomposition (Proposition~\ref{prop:regret_decomposition}).
\begin{lemma}\label{lem:bellman_error_bound}
Let $\epsilon>0$. If the bonus $\widehat{\bonus}_k$ is valid for all episodes $k$ simultaneously then, with probability at least $1-\delta$:
for all objectives $m^{(k)}\in\{r^{(k)}\}\cup \{c_{i}^{(k)}\}_{i\in\resources}$, transitions $p=p^{(k)}$, and stages $h$, the Bellman error at episode $k$ is upper bounded by:
\begin{align*}
    &\Big|\bellman_{m^{(k)}}^{\boldpi_k ,p^{(k)}}(\vs,\va,h)\Big|
    \leq \wsedit{4H^2}\sqrt{\frac{2S\ln\big(16SAH^2(d+1)k^2/(\epsilon \delta))}{N_k(s,a)}}+\epsilon S.
\end{align*}
\end{lemma}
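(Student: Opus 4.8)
The plan is to unpack the definition of the Bellman error and split it into two pieces: the ``one-step'' error coming from $\widehat m_k - m^\star$ together with the bonus $\widehat\bonus_k$, and the ``transition'' error $\sum_{s'}(\widehat p_k(s'|s,a)-p^\star(s'|s,a))V_{m^{(k)}}^{\boldpi_k,p^{(k)}}(s',h+1)$. More precisely, from \eqref{eq:Bellman-errors-defn} we have
\[
\bellman_{m^{(k)}}^{\boldpi_k,p^{(k)}}(s,a,h)
=\Bigl(\widehat m_k(s,a)\pm\widehat\bonus_k(s,a)-m^\star(s,a)\Bigr)
+\sum_{s'\in\states}\bigl(\widehat p_k(s'|s,a)-p^\star(s'|s,a)\bigr)V_{m^{(k)}}^{\boldpi_k,p^{(k)}}(s',h+1),
\]
where the sign on the bonus depends on whether $m$ is the reward or a consumption. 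Since $\widehat\bonus_k$ is valid (Definition~\ref{defn:valid_bonus}), the quantity
$\bigl(\widehat m_k(s,a)-m^\star(s,a)\bigr)+\sum_{s'}\bigl(\widehat p_k-p^\star\bigr)V_{m^\star}^{\pi^\star,p^\star}(s',h+1)$
is bounded in magnitude by $\widehat\bonus_k(s,a)$; adding $\pm\widehat\bonus_k$ then gives a bound of $2\widehat\bonus_k(s,a)$ for the ``first-order'' part plus a leftover term
$\sum_{s'}(\widehat p_k-p^\star)\bigl(V_{m^{(k)}}^{\boldpi_k,p^{(k)}}(s',h+1)-V_{m^\star}^{\pi^\star,p^\star}(s',h+1)\bigr)$.
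The leftover term is the genuinely hard one, because the value function $V_{m^{(k)}}^{\boldpi_k,p^{(k)}}$ depends on the same samples that define $\widehat p_k$, so we cannot treat it as a fixed vector and apply Hoeffding directly.

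The main obstacle, then, is controlling $\sum_{s'}(\widehat p_k(s'|s,a)-p^\star(s'|s,a))W(s')$ where $W$ ranges over value functions that are not independent of $\widehat p_k$. I would handle this with an $\epsilon$-net / covering argument, exactly as the paragraph preceding Theorem~\ref{thm:tabular} advertises: the relevant value functions take values in $[0,H]^S$ (actually $[0,H-1]$), so build an $\epsilon$-net $\mathcal N$ of $[0,H]^S$ in $\ell_\infty$ of size $(H/\epsilon)^S$, i.e.\ $\ln|\mathcal N| = S\ln(H/\epsilon)$. For each fixed $W\in\mathcal N$, $s,a$, and $k$, Hoeffding/Azuma (Lemma~\ref{lem:azuma_hoeffding_anytime}) gives $|\sum_{s'}(\widehat p_k-p^\star)(s'|s,a)W(s')|\le H\sqrt{2\ln(2/\delta')/N_k(s,a)}$ with probability $1-\delta'$; take a union bound over the net, over all $(s,a)\in\states\times\actions$, over the $\le H(d+1)$ objectives (reward plus $d$ consumptions for each stage, or just $d+1$ objectives), and over episodes $k$ using $\sum_k \delta/k^2<\delta$, by choosing $\delta'$ of order $\epsilon\delta/(SAH^2(d+1)k^2\cdot(H/\epsilon)^S)$ — wait, more carefully, the net size enters $\delta'$ through a product, so $\ln(1/\delta')=S\ln(H/\epsilon)+\ln(SAH^2(d+1)k^2/(\epsilon\delta))$ up to constants, which is what produces the $\ln(16SAH^2(d+1)k^2/(\epsilon\delta))$ inside the square root after absorbing the $S\ln(H/\epsilon)$ into the explicit factor of $S$ outside. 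Then for the actual value function $V_{m^{(k)}}^{\boldpi_k,p^{(k)}}$, pick the nearest net point $W$; the approximation error is $\sum_{s'}|\widehat p_k-p^\star|(s'|s,a)\cdot\epsilon \le 2\epsilon$ in general, but since we want an additive $\epsilon S$ I would instead bound $\|\widehat p_k - p^\star\|_1\le 2$ crudely, or more likely the intended bound uses $\sum_{s'}(\widehat p_k+p^\star)(s'|s,a)\le 2$ so the net error is $\le 2\epsilon$; the stated $\epsilon S$ suggests they bound $\|W - W'\|_\infty\le\epsilon$ componentwise against $\|\widehat p_k - p^\star\|_1$, and then rescale $\epsilon\mapsto\epsilon$ so that the final additive term is $\epsilon S$ — I would simply carry the net granularity as a free parameter and match constants at the end.

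Finally I would assemble the bound: $|\bellman_{m^{(k)}}^{\boldpi_k,p^{(k)}}(s,a,h)| \le 2\widehat\bonus_k(s,a) + H\sqrt{2\ln(1/\delta')/N_k(s,a)} + (\text{net error})$. Using $\widehat\bonus_k(s,a)\le H\sqrt{2\ln(8SAH(d+1)k^2/\delta)/N_k(s,a)}$ from \eqref{eq:bonus} and the covering-Hoeffding term $\le H\sqrt{2(S\ln(H/\epsilon)+\ln(\cdots))/N_k(s,a)}$, both of these are at most a constant times $H\sqrt{S\ln(16SAH^2(d+1)k^2/(\epsilon\delta))/N_k(s,a)}$; one of them picks up the extra factor $H$ from the fact that $V^{\boldpi_k,p^{(k)}}$ is a value function under the \emph{bonus-enhanced} model whose per-step objective can be as large as $1+2H$, so $\|V_{m^{(k)}}^{\boldpi_k,p^{(k)}}\|_\infty = O(H^2)$ rather than $O(H)$ — this is exactly where the $H^2$ (hence the ``$4H^2$'') in the statement comes from, and it is also the source of the $H^2$ inside the logarithm. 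Collecting constants gives the claimed $4H^2\sqrt{2S\ln(16SAH^2(d+1)k^2/(\epsilon\delta))/N_k(s,a)}+\epsilon S$. The one subtlety to be careful about is that $\widehat\bonus_k$ being valid is an event of probability $1-\delta$ already accounted for in the hypothesis, so the new $1-\delta$ here refers only to the covering-Hoeffding union bound; the two are combined in the downstream application of Theorem~\ref{thm:tabular}.
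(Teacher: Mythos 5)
Your proposal is correct and follows essentially the same route as the paper: an $\epsilon$-net over the value functions of the bonus-enhanced model (whose range is $O(H^2)$, which is indeed the source of both the $4H^2$ prefactor and the $H^2$ inside the logarithm), a Hoeffding bound for each fixed net point union-bounded over the net, states, actions, the $d+1$ objectives, and episodes via $\sum_k \delta/k^2$, an $\epsilon S$ discretization error, and absorption of $\widehat{\bonus}_k$ into the leading term. The only cosmetic difference is in the decomposition: the paper applies a fresh Hoeffding bound to $\widehat m_k - m^{\star} + \sum_{s'}(\widehat p_k - p^{\star})\bar V$ at net points and treats the bonus in $m^{(k)}$ as a deterministic offset, whereas you invoke validity against $V_{m^{\star}}^{\pi^{\star},p^{\star}}$ and cover the difference of value functions — both yield the stated bound up to the same (loose) constants.
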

\begin{proof}[Proof of Lemma~\ref{lem:bellman_error_bound}]
\wsedit{Let $\Psi$ be an $\epsilon$-net in $[-2H^2,2H^2]^S$}. For a fixed value $\bar{V}\in \Psi$, similar to Lemma~\ref{lem:valid_bonus}, with probability $1-\delta'$, simultaneously for all states $s\in\states$, actions $a\in\actions$, steps $h\in[H]$, episodes $k\in[K]$, and objectives $m^{(k)}\in\{r^{(k)}\}\cup \{c_{i}^{(k)}\}_{i\in\resources}$, it holds that: 
\begin{align*}
    \Big|m^{(k)}(s,a)-m^{\star}(s,a)&+\sum_{s'\in\states}\Big(p(s'|s,a)-p^{\star}(s'|s,a)\Big)\bar{V}(s')\Big|\\
    &\leq \widehat{\bonus}_k(s,a)+\wsedit{2H^2}\sqrt{\frac{2\ln\big(8SAH(d+1)k^2/\delta')}{N_k(s,a)}}
\end{align*}
Since $\Psi$ is an $\epsilon$-net for $\bar{V}$, there are $(2H^2/\epsilon)^S$ potential values. In order to have the above hold simultaneously for all these values with probability $1-\delta$, we need to set $\delta'= \frac{\delta}{(2H^2/\epsilon)^S}$.

\wsedit{Since the value $\big(p^{(k)}(s'|s,a)-p^{\star}(s'|s,a)\big)V_{m^{(k)}}^{\boldpi_k,p}(s',h+1)$ is in $[-2H^2,2H^2]$} for all $s'$, it holds that there exists a value $V$ in the $\epsilon$-net with distance at most $\epsilon S$. As a result, since $\widehat{\bonus}_k(s,a)$ is valid for $k$:
\begin{align*}
    \Big|\bellman_{m^{(k)}}^{\boldpi_k,p^{(k)}}(s,a,h)\Big|&\leq \Big|m^{(k)}(s,a)-m^{\star}(s,a)+\sum_{s'\in\states}\Big(p^{(k)}(s'|s,a)-p^{\star}(s'|s,a)\Big)V(s')\Big|\\&+\Big|\sum_{s'\in\states}\Big(p^{(k)}(s'|s,a)-p^{\star}(s'|s,a)\Big)\Big({V(s')}-V_{m^{(k)}}^{\boldpi_k,p^{(k)}}(s',h+1)\Big)\Big|
    \\&\leq  \widehat{\bonus}_k(s,a)+ \wsedit{2H^2}\sqrt{\frac{2S\ln\big(16SAH^2(d+1)k^2/(\epsilon \delta))}{N_k(s,a)}}+\epsilon S.
\end{align*}

Upper bounding $\widehat{\bonus}_k(s,a)\leq \wsedit{2H^2}\sqrt{\frac{2S\ln\big(16SAH^2(d+1)k^2/(\epsilon \delta))}{N_k(s,a)}}$ completes the lemma.

\tlcomment{second term needs to be $H^2$; and we need to clip the bonus; acknowledge the person who emailed Yufeng Zhang (Northwestern University).}
\end{proof}

\subsection{Final guaraantee for the basic setting (Theorem~\ref{thm:tabular})}\label{app:guarantee_basic}
\begin{proof}
The failure probability of the algorithm is $\delta$ due to the validity of bonus $\widehat{\bonus}_k(s,a)$ (Lemma~\ref{lem:valid_bonus}) and another $\delta$ by the bound on Bellman error (Lemma~\ref{lem:bellman_error_bound}). When neither failure events occur (probability $1-2\delta$), Proposition~\ref{prop:regret_decomposition} upper bounds either of reward or consumption regret by $\En^{\boldpi_k}\Big[\Big|\bellman_{m^{(k)}}^{\boldpi_k ,p^{(k)}}(\vs_h,\va_h,h)\Big|\Big]$. By  Lemma~\ref{lem:bellman_error_bound}, the Bellman error at episode $t$, for $\epsilon>0$, is at most:
\begin{align*}
   \Big|\bellman_{m^{(t)}}^{\boldpi_t ,p^{(t)}}(\vs_{t,h},\va_{t,h},h)\Big|\leq  \wsedit{4H^2}\sqrt{\frac{2S\ln\big(16SAH^2(d+1)t^2/(\epsilon \delta))}{N_t(s,a)}}
    +\epsilon S
\end{align*}
Summing across all $h=1\ldots H$ and $t=1,\ldots, k$, the sum of Bellman errors is at most:
\begin{align*}
	\sum_{t=1}^k &\sum_{h=1}^H \left\lvert \bellman_{m^{(t)}}^{\boldpi_t, p^{(t)}}(\vs_{t,h}, \va_{t,h}, h) \right\rvert \\
    & \leq \sum_{t=1}^k\sum_{h=1}^H
   \Big( \wsedit{4H^2}\sqrt{\frac{2S\ln\big(16SAH^2(d+1)t^2/(\epsilon \delta))}{N_t(s,a)}}  +\epsilon S\Big)
   \\& \leq \sum_{s,a}\Big( \sum_{j=1}^{2H} \wsedit{4H^2}\sqrt{2S\ln\big(16SAH^2(d+1)k^2/(\epsilon\delta\big)}\\ &\qquad+\sum_{j=H+1}^{N_k(s,a)} \wsedit{4H^2}\sqrt{\frac{4S\ln\big(16SAH^2(d+1)k^2/(\epsilon \delta))}{j}}
    +\epsilon S\Big)
    \end{align*}
 The second inequality follows since a particular state-action pair may have the same visitations for $H$ times (as we only update this quantity at the end of the episode). To avoid incurring an additional dependence on $H$, we separate the first $H$ visitations of each state-action pair and treat the bound as if $j=1$ for them.~\footnote{The reason why we sum until $2H$ in the first term is since we want to consider all such visitations that occur in an episode that started with $N_k(s,a)<H$; the additional factor of $2$ in the second term comes since, $j/N_{t}(s,a)\leq 2$ if $N_{t}(s,a)\geq H$ and the $j$-th visitation happens within the same episode.} For the remaining visitations, $j$ and $N_{k}(s,a)$ are always within a factor of $2$ and this factor therefore appears within the square root.

We now bound the second term:    
    \begin{align*}
    &\sum_{s,a}\Big(\sum_{j=H+1}^{N_k(s,a)} \wsedit{4H^2}\sqrt{\frac{4S\ln\big(16SAH^2(d+1)k^2/(\epsilon \delta))}{j}}
    +\epsilon S\Big)
    \\&\leq \wsedit{4S A H^2} \sqrt{N_k(s,a)\ln\big(N_k(s,a)\big)\cdot 4S\ln\big(16SAH^2(d+1)k^2/(\epsilon \delta))}+\epsilon kHS\\
    &\leq \wsedit{4SAH^2}\sqrt{\frac{kH \cdot 4S\cdot \ln(k)\ln\big(16SAH^2(d+1)k^2/(\epsilon \delta)\big)}{SA}}+\epsilon kHS\\
    &\leq \wsedit{16}S\sqrt{A\wsedit{H^5}}\cdot\sqrt{k}\cdot\sqrt{\ln(k)\ln\big(2SAH(d+1)k/\delta\big)}+1.
\end{align*}
The last inequality holds by setting $\epsilon=\frac{1}{kHS}$.

The first term can be bounded by additive terms that depend only logarithmically on $k$:
\begin{align*}&\sum_{s,a}\Big( \sum_{j=1}^{2H} \wsedit{4H^2}\sqrt{2S\ln\big(16SAH^2(d+1)k^2/(\epsilon\delta\big)}\leq \wsedit{32 S^{3/2}AH^3}\sqrt{\ln(2SAH(d+1)k/\delta\big)}
\end{align*}
As a result:
\begin{align*}\sum_{t=1}^k \sum_{h=1}^H \left\lvert \bellman_{m^{(t)} }^{\boldpi_t, p^{(t)} }(\vs_{t,h}, \va_{t,h}, h) \right\rvert &\leq \wsedit{16 S\sqrt{AH^5}}\sqrt{k}\cdot\sqrt{\ln(k)\ln\big(2SAH(d+1)k/\delta\big)}+1\\&+\wsedit{32 S^{3/2}AH^3}\sqrt{\ln\big(2SAH(d+1)k/\delta\big)}
\end{align*}

Now we link the additive Bellman error to the expected sum of Bellman errors under the expectation of the policies $\{\boldpi_t\}$ (as needed by Proposition~\ref{prop:regret_decomposition}) via a simple martingale argument. From Lemma~\ref{lemma:expectation_to_realization}, with probability at least $1-\delta$, we have:
\begin{align*}
&\left\lvert  \sum_{t=1}^k \sum_{h=1}^H \left\lvert \bellman_{m^{(t)} }^{\boldpi_t, p^ {(t)} }(\vs_{t,h}, \va_{t,h}, h) \right\rvert - \sum_{t=1}^k \sum_{h=1}^H \Exp^{\boldpi_t}\left[\sum_{h=1}^H \left\lvert \bellman_{m^{(t)}}^{\boldpi_t, p^{(t)}}(\vs_h, \va_h, h)  \right\rvert \right] \right\rvert \\ &\qquad  \qquad\qquad\qquad\qquad \qquad \leq \wsedit{5 H^{2.5}}\sqrt{2\ln(4k^2 /\delta) k},
\end{align*} where we use the fact that $\left\lvert \bellman^{\boldpi, p}_{m} \right\rvert \leq \wsedit{5 H^2}$ due to of $Q^{\boldpi, p}_{m}\sa\in [0,\wsedit{2 H^2}]$, $m^\star\sa \in [0,1]$, and $V^{\boldpi, p}_m(s) \in [0, \wsedit{2H^2}]$. 
Combining the above, we conclude the proof.
\end{proof}




\section{Analysis: concave-convex setting (Section~\ref{sec:concave-convex})}
\label{app:convex}
In this section, we prove the main guarantee for the convex-concave setting. Since the regret decomposition of the basic setting (Proposition~\ref{prop:regret_decomposition}) does not hold direclty as $f$ and $g$ are not linear, we need to create an analogous regret decomposition (Proposition~\ref{prop:decomposition_convex}) for the convex-concave setting. This can be done by leveraging the Lipschitzness of the functions. 
Armed with this new regret decomposition, we can directly call the results we have for for the basic setting (e.g., upper bounds of Bellman errors) to conclude the regret analysis for the convex-concave setting. 
The first step leading to this regret decomposition is to show that $\boldpi^\star$ is a feasible solution of $\textsc{ConvexConPlanner}$.

\subsection{Feasibility of optimal policy in concave-convex setting (Lemma \ref{lem:feasible_opt_convex})}
\label{app:feasible_mean_value}
\begin{lemma}\label{lem:feasible_opt_convex}
If the bonus $\widehat{\bonus}_k$ is valid (in the sense of Definition~\ref{defn:valid_bonus}) then policy $\boldpi^{\star}$ that maximizes the objective of the convex-concave setting is feasible in $\textsc{ConvexConPlanner}$.
\end{lemma}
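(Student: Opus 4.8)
The plan is to show that the true optimal policy $\boldpi^\star$, which is feasible for the benchmark~\eqref{eq:objective_convex} defined with respect to the \emph{true} transitions $p^\star$, remains feasible for $\textsc{ConvexConPlanner}$, which is defined with respect to the \emph{empirical} transitions $p^{(k)}=\widehat{p}_k$. Concretely, feasibility in $\textsc{ConvexConPlanner}$ means that there exists a choice $\bm{c}^{(k)}\in[\widehat{\bm{c}}_k\pm\widehat{\bonus}_k\cdot\bm 1]$ such that $g\big(\Exp^{\boldpi^\star,p^{(k)}}[\sum_{h}\bm{c}^{(k)}(\vs_h,\va_h)]\big)\le 0$; since the inner $\min$ over $\bm{c}^{(k)}$ is what appears in the constraint, it suffices to exhibit \emph{one} such witness vector. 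The natural witness is the one obtained coordinate-wise by the same ``subtract the bonus'' recipe used in the basic setting, i.e.\ $c^{(k)}(s,a,i)=\widehat{c}_k(s,a,i)-\widehat{\bonus}_k(s,a)$; note this indeed lies in the allowed box.

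First I would fix any resource $i$ and compare the quantity $\Exp^{\boldpi^\star,p^{(k)}}[\sum_h c^{(k)}(\vs_h,\va_h,i)]$ against the true consumption $\Exp^{\boldpi^\star,p^\star}[\sum_h c_i^\star(\vs_h,\va_h)]$. This is exactly a per-resource instance of the simulation lemma (Lemma~\ref{lem:simulation}) applied to the $\cmdp$ with transitions $p^{(k)}$ and objective $m=c_i^{(k)}$: the difference equals $\Exp^{\boldpi^\star}[\sum_h \bellman_{c_i^{(k)}}^{\boldpi^\star,p^{(k)}}(\vs_h,\va_h,h)]$. I would then argue each Bellman error term is $\le 0$, which is the heart of the matter. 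Expanding the Bellman error, $\bellman_{c_i^{(k)}}^{\boldpi^\star,p^{(k)}}(s,a,h) = \big(\widehat{c}_k(s,a,i)-\widehat{\bonus}_k(s,a) - c_i^\star(s,a)\big) + \sum_{s'}\big(\widehat{p}_k(s'|s,a)-p^\star(s'|s,a)\big)V_{c_i^{(k)}}^{\boldpi^\star,p^{(k)}}(s',h+1)$. The obstacle is that validity of the bonus (Definition~\ref{defn:valid_bonus}) controls the transition-difference term only when it is weighted by the \emph{true} value function $V_{c_i^\star}^{\boldpi^\star,p^\star}$, not by $V_{c_i^{(k)}}^{\boldpi^\star,p^{(k)}}$.

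This is where the mean-value-theorem argument advertised in the text comes in, and it will be the main difficulty. One clean way to handle it: proceed by backward induction on $h$ to show simultaneously that (a) $V_{c_i^{(k)}}^{\boldpi^\star,p^{(k)}}(s,h)\le V_{c_i^\star}^{\boldpi^\star,p^\star}(s,h)$ for all $s,h$ (an ``optimism for consumption along $\boldpi^\star$'' statement, analogous to Lemma~\ref{lem:valid_implies_optimism} but now comparing value functions pointwise rather than only at $s_0$), and (b) hence each $\bellman_{c_i^{(k)}}^{\boldpi^\star,p^{(k)}}(s,a,h)\le 0$. For the inductive step, write the $Q$-difference, use the inductive hypothesis to replace $V^{(k)}$ by $V^\star$ inside $\sum_{s'}\widehat{p}_k(s'|s,a)(\cdot)$ at the cost of a nonpositive error, and then invoke bonus validity on the residual term $\sum_{s'}(\widehat{p}_k-p^\star)(s'|s,a)V_{c_i^\star}^{\boldpi^\star,p^\star}(s',h+1)$ to absorb it into $-\widehat{\bonus}_k(s,a)$; the mean-value theorem enters when bounding $|\widehat{c}_k - c_i^\star + \text{(transition term)}|$ as a single Lipschitz-type increment rather than two separately signed pieces, exactly as the validity definition is phrased. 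Summing (b) over $h$ via the simulation lemma gives $\Exp^{\boldpi^\star,p^{(k)}}[\sum_h c^{(k)}(\vs_h,\va_h,i)]\le \Exp^{\boldpi^\star,p^\star}[\sum_h c_i^\star(\vs_h,\va_h)]$ for every $i$, i.e.\ the witness consumption vector is coordinate-wise dominated by the true one. Finally, since $g$ is convex and, by the assumptions on the problem, monotone nondecreasing in each coordinate (or, more carefully, since the benchmark's feasibility $g(\text{true consumption})\le 0$ together with coordinate-wise domination and monotonicity of $g$ gives the conclusion — this monotonicity should be stated as part of the setting), we get $g\big(\Exp^{\boldpi^\star,p^{(k)}}[\sum_h \bm{c}^{(k)}(\vs_h,\va_h)]\big)\le g\big(\Exp^{\boldpi^\star,p^\star}[\sum_h \bm{c}^\star(\vs_h,\va_h)]\big)\le 0$, so $\boldpi^\star$ is feasible for $\textsc{ConvexConPlanner}$. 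I expect the delicate point to be making precise the mean-value/monotonicity interplay so that the per-coordinate domination survives application of the vector-valued convex function $g$, and ensuring the induction on value functions closes without circularity.
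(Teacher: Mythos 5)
Your intermediate claim is fine: with the witness $c^{(k)}=\widehat{c}_k-\widehat{\bonus}_k$, the induction of Lemma~\ref{lem:valid_implies_optimism} (run for the consumption objective under $\boldpist$) does give
$\Exp\bigl[V_{\widehat{c}_{i,k}-\widehat{\bonus}_k}^{\boldpist,p^{(k)}}(\vs_1,1)\bigr]\le \Exp\bigl[V_{c_i^\star}^{\boldpist,p^\star}(\vs_1,1)\bigr]$ for every resource $i$ (and, incidentally, the mean-value theorem plays no role in that induction, contrary to where you place it). The genuine gap is the last step: you need $g$ to be monotone nondecreasing coordinate-wise to convert coordinate-wise domination of the consumption vector into $g(\cdot)\le 0$, and you yourself flag that this "should be stated as part of the setting." It is not: the setting only assumes $g$ is convex and $L$-Lipschitz, and the whole point of the concave-convex extension is to allow non-monotone constraints (e.g.\ requiring the expected consumption vector to lie in a convex set, which may constrain it from below as well). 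Indeed, Section~\ref{sec:concave-convex} explicitly says that because the objectives are no longer monotone, simply adding or subtracting $\widehat{\bonus}_k$ "does not make sense," which is exactly the move your proof relies on. So your argument proves the lemma only in the monotone special case, not as stated.

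The paper's proof avoids monotonicity by using \emph{both} one-sided bounds,
$\Exp\bigl[V_{\widehat{c}_{i,k}-\widehat{\bonus}_k}^{\boldpist,p^{(k)}}(\vs_1,1)\bigr]\le \Exp\bigl[V_{c_i^\star}^{\boldpist,p^\star}(\vs_1,1)\bigr]\le \Exp\bigl[V_{\widehat{c}_{i,k}+\widehat{\bonus}_k}^{\boldpist,p^{(k)}}(\vs_1,1)\bigr]$,
and then applying the intermediate/mean-value theorem to the continuous (in fact affine) map $\alpha\mapsto \Exp\bigl[V_{\widehat{c}_{i,k}+\alpha\widehat{\bonus}_k}^{\boldpist,p^{(k)}}(\vs_1,1)\bigr]$ to find, for each resource $i$ separately, an $\alpha_i\in[-1,1]$ for which the expected consumption under $(\boldpist,p^{(k)})$ with the witness $\widehat{c}_{i,k}+\alpha_i\widehat{\bonus}_k$ \emph{exactly equals} the true expected consumption $\Exp\bigl[V_{c_i^\star}^{\boldpist,p^\star}(\vs_1,1)\bigr]$. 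Since the witness lies in the box $[\widehat{\bm{c}}_k\pm\widehat{\bonus}_k\cdot\bm{1}]$ and $g$ is then evaluated at precisely the true consumption vector, feasibility of $\boldpist$ for the true problem gives $g(\cdot)\le 0$ with no structural assumption on $g$ beyond what the setting provides. If you replace your "subtract the bonus and dominate" step with this exact-matching argument, the rest of your outline goes through.
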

\begin{proof}
Unlike the linear case, the feasibility of $\boldpi^{\star}$, requires more care. Applying the same dynamic programming arguments as in Lemma~\ref{lem:valid_implies_optimism}, it follows that:
\begin{align*}
    \forall i\in\resources:\qquad \Exp\Big[V_{\widehat{c}_{i,k}-\bonus_k}^{\boldpist,{p^{(k)}}}(\vs_1,1)\Big]\leq \Exp_{s}\Big[V_{c_i^{\star}}^{\boldpist,p^{\star}}(\vs_1,1)\Big]\leq \Exp\Big[V_{\widehat{c}_{i,k}+\bonus_k}^{\boldpist,{p^{(k)}}}(\vs_1,1)\Big].
\end{align*}
Letting $\widetilde{g}(\alpha)=\Exp\Big[V_{\widehat{c}_{i,k}+\alpha\bonus_k}^{\boldpist,{p^{(k)}}}(\vs(1),1)\Big]$, the above can be rewritten as:
\begin{align*}
    \forall i\in\resources:\qquad \widetilde{g}(-1)\leq \Exp\Big[V_{c_i^{\star}}^{\boldpist,p^{\star}}(\vs_1,1)\Big]\leq \widetilde{g}(1).
\end{align*}
Since $\widetilde{g}(\cdot)$ is the expected value over the same policy and under the same transitions, it is continuous with respect to its argument. As a result, applying mean-value theorem on each $i$ separately, there exists some $\alpha_i$ such that $\widetilde{g}(\alpha_i)=\Exp_{s}\Big[V_{c_i^{\star}}^{\boldpist,p^{\star}}(\vs_1,1)\Big]$. Due to the feasibility of $\boldpi^{\star}$ on the true transitions and consumptions, it holds that {$g\Big(\bm{\widetilde{g}}(\alpha_i)\Big)\leq 0$}. Hence, selecting estimates $\widehat{c}_{i,k}+\alpha_i\widehat{\bonus}_k$ creates a feasible solution for $\boldpi^{\star}$ under the estimated transitions of the {$\textsc{ConvexConPlanner}$} program. The final value of $\boldpi^{\star}$ at this program maximizes the objective retaining feasibility; hence the existence of one feasible selection of consumption estimates concludes the proof of the lemma.
\end{proof}

We conclude by remarking that proving optimism feasibility for the concave-convex setting in multiple-step RL setting is more challenging than that in single-step multi-arm bandit setting \cite{AgrawalDevanurEC14} since in bandits, there are no transitions. In the proof above, to show that $\pi^\star$ is feasible in $\textsc{ConvexConPlanner}$ which is defined with respect to $p^{(k)}$, we leverage the fact that $\widetilde{g}(\alpha)$ is continuous and a novel application of mean-value theorem to link $\pi^\star$'s performance in the optimistic model $\Exp\Big[V_{\widehat{c}_{i,k}+\alpha_i\bonus_k}^{\boldpist,{p^{(k)}}}(\vs_1,1)\Big]$ and $\pi^\star$'s performance under the real model $\Exp_{s}\Big[V_{c_i^{\star}}^{\boldpist,p^{\star}}(\vs_1,1)\Big]$.

\subsection{Regret decomposition for concave-convex setting}
Using the Lipschitz continuous assumption of $f$ and $g$, we can decompose the regret into a sum of Bellman errors as before,  but scaled by the Lipschitz constant this time.

\begin{prop}\label{prop:decomposition_convex} Let $L$ be the Lipschitz constant for $f$ and $g$. If $\widehat{\bonus}_k(s,a,\delta)$ is valid for all episodes $k$ simultaneously then the per-episode reward and consumption regrets can be upper bounded by:
\begin{align*}f\Big(\Exp^{\boldpist, p^{\star}}\Big[\sum_{h=1}^Hr^{\star}(\vs_h,\va_h)\Big]\Big)-f\Big(\Exp^{\boldpi_k,p^{\star}}\Big[\sum_{h=1}^Hr^{\star}(\vs_h,\va_h)\Big]\Big)
\leq L \cdot\Exp^{\boldpi_k}\Big[\sum_{h=1}^H \bellman_{r^{(k)}}^{\boldpi_k,p^{(k)}}\big(\vs_h,\va_h,h)\Big]\Big)\\
g\Big(\Exp^{\boldpi_k, p^{\star}}\Big[\sum_{h=1}^H\bm{c}^{\star}(\vs_h,\va_h,i)\Big]\Big)
\leq L\sum_{i\in\resources}\cdot\Exp^{\boldpi_k}\Big[\sum_{h=1}^H \Big|\bellman_{c_i^{(k)}}^{\boldpi_k,p^{(k)}}(\vs_h,\va_h,h)\Big|\Big]
\end{align*}
\end{prop}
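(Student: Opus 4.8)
The plan is to mimic the two steps in the proof of Proposition~\ref{prop:regret_decomposition} — first use feasibility of $\boldpist$ together with optimality of $\boldpi_k$ to pass from the \emph{true} reward/consumption to the \emph{model} reward/consumption, then convert the model-versus-truth gap into a sum of Bellman errors via the simulation lemma (Lemma~\ref{lem:simulation}) — making two adjustments forced by the nonlinearity of $f,g$. First, ``an optimistic model has a larger value'' is unavailable ($f,g$ need not be monotone), so it gets replaced everywhere by the $L$-Lipschitz hypothesis. Second, where the linear proof just plugged in the over-/under-estimates $\widehat{r}_k+\widehat{\bonus}_k$ and $\widehat{\bm{c}}_k-\widehat{\bonus}_k\,\bm{1}$, I instead need estimates \emph{inside the confidence box} that reproduce the true value \emph{exactly}; producing them is the only non-routine part and goes through an intermediate-value argument.

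For the reward inequality I would argue as follows. Let $v_k$ be the optimum of \textsc{ConvexConPlanner} at episode $k$ (attained, since its feasible set of occupation measures is compact and $f$ is continuous), realized at some pair $(\boldpi_k,r^{(k)})$ with $r^{(k)}\in[\widehat{r}_k\pm\widehat{\bonus}_k]$, so $v_k=f\!\big(V_{r^{(k)}}^{\boldpi_k,p^{(k)}}(s_0,1)\big)$. By Lemma~\ref{lem:feasible_opt_convex}, $\boldpist$ is feasible for \textsc{ConvexConPlanner}, hence $v_k\ge\max_{r\in[\widehat{r}_k\pm\widehat{\bonus}_k]}f\!\big(V_{r}^{\boldpist,p^{(k)}}(s_0,1)\big)$. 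Running the dynamic-programming induction of Lemma~\ref{lem:valid_implies_optimism} once with $+\widehat{\bonus}_k$ and once with $-\widehat{\bonus}_k$ (only validity of $\widehat{\bonus}_k$ is used, and it runs with either sign) gives the sandwich
\[
V_{\widehat{r}_k-\widehat{\bonus}_k}^{\boldpist,p^{(k)}}(s_0,1)\ \le\ V_{r^{\star}}^{\boldpist,p^{\star}}(s_0,1)\ \le\ V_{\widehat{r}_k+\widehat{\bonus}_k}^{\boldpist,p^{(k)}}(s_0,1).
\]
Since $\alpha\mapsto V_{\widehat{r}_k+\alpha\widehat{\bonus}_k}^{\boldpist,p^{(k)}}(s_0,1)=\Exp^{\boldpist,p^{(k)}}\!\big[\sum_{h}(\widehat{r}_k+\alpha\widehat{\bonus}_k)(\vs_h,\va_h)\big]$ is affine, hence continuous, in $\alpha$, the intermediate-value theorem supplies $\alpha^{\star}\in[-1,1]$ with $V_{\widehat{r}_k+\alpha^{\star}\widehat{\bonus}_k}^{\boldpist,p^{(k)}}(s_0,1)=V_{r^{\star}}^{\boldpist,p^{\star}}(s_0,1)$, and since this estimate lies in the box, $v_k\ge f\!\big(V_{r^{\star}}^{\boldpist,p^{\star}}(s_0,1)\big)$. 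Subtracting $f\!\big(V_{r^{\star}}^{\boldpi_k,p^{\star}}(s_0,1)\big)$, substituting $v_k=f\!\big(V_{r^{(k)}}^{\boldpi_k,p^{(k)}}(s_0,1)\big)$, and using $L$-Lipschitzness of $f$, the reward regret is at most $L\,\big|V_{r^{(k)}}^{\boldpi_k,p^{(k)}}(s_0,1)-V_{r^{\star}}^{\boldpi_k,p^{\star}}(s_0,1)\big|$; the simulation lemma (objective $m=r^{(k)}$, true objective $r^{\star}$) rewrites this as $L\,\big|\Exp^{\boldpi_k}[\sum_h\bellman_{r^{(k)}}^{\boldpi_k,p^{(k)}}(\vs_h,\va_h,h)]\big|\le L\,\Exp^{\boldpi_k}[\sum_h|\bellman_{r^{(k)}}^{\boldpi_k,p^{(k)}}(\vs_h,\va_h,h)|]$, exactly as in Proposition~\ref{prop:regret_decomposition}.

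For the consumption inequality I only need feasibility of $\boldpi_k$, which is immediate. Feasibility in \textsc{ConvexConPlanner} says the inner minimization is $\le 0$: there is $\bm{c}^{(k)}=(c_i^{(k)})_{i\in\resources}$ with $c_i^{(k)}\in[\widehat{c}_{i,k}\pm\widehat{\bonus}_k]$ and $g\!\big((V_{c_i^{(k)}}^{\boldpi_k,p^{(k)}}(s_0,1))_{i\in\resources}\big)\le 0$. Writing $\bm{V}^{\star}=(V_{c_i^{\star}}^{\boldpi_k,p^{\star}}(s_0,1))_{i\in\resources}$ and $\bm{V}^{(k)}=(V_{c_i^{(k)}}^{\boldpi_k,p^{(k)}}(s_0,1))_{i\in\resources}$, I would use $g(\bm{V}^{(k)})\le 0$ followed by the $\ell_1$-Lipschitzness of $g$:
\[
g(\bm{V}^{\star})\ \le\ g(\bm{V}^{\star})-g(\bm{V}^{(k)})\ \le\ L\,\|\bm{V}^{\star}-\bm{V}^{(k)}\|_1\ =\ L\sum_{i\in\resources}\big|V_{c_i^{\star}}^{\boldpi_k,p^{\star}}(s_0,1)-V_{c_i^{(k)}}^{\boldpi_k,p^{(k)}}(s_0,1)\big|,
\]
and then finish by applying the simulation lemma once per resource ($m=c_i^{(k)}$, true objective $c_i^{\star}$) and $|\Exp[\cdot]|\le\Exp[|\cdot|]$, which gives $g(\bm{V}^{\star})\le L\sum_{i\in\resources}\Exp^{\boldpi_k}[\sum_h|\bellman_{c_i^{(k)}}^{\boldpi_k,p^{(k)}}(\vs_h,\va_h,h)|]$. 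The simulation lemma's proof only uses the Bellman-error definition, so it applies verbatim to the boxed estimates $r^{(k)},c_i^{(k)}$ paired with true counterparts $r^{\star},c_i^{\star}$, and the clipping in Eq.~\eqref{eq:bonus} never enters the argument.

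The step I expect to be the main obstacle is establishing $v_k\ge f\!\big(V_{r^{\star}}^{\boldpist,p^{\star}}(s_0,1)\big)$ on the reward side: because $f$ is not monotone, optimism alone does not suffice and one must ``dial in'' a reward estimate within the confidence interval whose value under $p^{(k)}$ equals the true value \emph{exactly}. This is the reward counterpart of the mean-value argument used for the constraints in Lemma~\ref{lem:feasible_opt_convex}, and it relies on the two-sided DP sandwich displayed above together with the affine (hence continuous) dependence of $V_{\widehat{r}_k+\alpha\widehat{\bonus}_k}^{\boldpist,p^{(k)}}(s_0,1)$ on $\alpha$; the analogous (and already proven) feasibility fact for $\boldpi_k$'s constraint handles the consumption side. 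Everything after that is routine Lipschitz estimation and bookkeeping with the simulation lemma.
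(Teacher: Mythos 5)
Your proposal is correct and follows essentially the same route as the paper's proof: the two-sided sandwich $V_{\widehat{r}_k-\widehat{\bonus}_k}^{\boldpist,p^{(k)}}\le V_{r^{\star}}^{\boldpist,p^{\star}}\le V_{\widehat{r}_k+\widehat{\bonus}_k}^{\boldpist,p^{(k)}}$ plus the intermediate/mean-value argument to hit the true value exactly within the reward box, optimality of $\boldpi_k$ against the feasible $\boldpist$ (Lemma~\ref{lem:feasible_opt_convex}), then Lipschitzness and the simulation lemma, with the consumption side handled via feasibility of $\boldpi_k$ and $\ell_1$-Lipschitzness of $g$. Your explicit remark that the value is affine in $\alpha$ and your retention of absolute values on the Bellman errors are minor refinements of the same argument, not a different approach.
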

\begin{proof}
We first prove the reward requirement. Let $r(\boldpi)$ be the solution of the inner maximization program for policy $\boldpi$, and we define $r^{(k)}=r(\boldpi_{k})$. For notational convenience, we denote $V_m^{\boldpi,p}=\Exp^{\boldpi,p}\Big[V_m^{\boldpi,p}\Big]$ Since $r^{\star}(s,a)\in[\widehat{r}(s,a)-\widehat{\bonus}_k(s,a,\delta),\widehat{r}(s,a)+\widehat{\bonus}_k(s,a,\delta)]$ and the bonus $\widehat{\bonus}_k$ is valid, similar to Lemma~\ref{lem:valid_implies_optimism}, it holds: \begin{align}V_{r^{\star}}^{\boldpi^{\star},p^{\star}}\in \Big[V_{\widehat{r}-\bonus}^{\boldpi^{\star},{p^{(k)}}},V_{\widehat{r}+\bonus}^{\boldpi^{\star},{p^{(k)}}}\Big].\label{eq:mean_value_convex}\end{align}
As a result, by mean-value theorem, there exists $\alpha\in[-1,1]$ such that $V_{r^{\star}}^{\boldpi^{\star},p^{\star}}=V_{\widehat{r}+\alpha\bonus}^{\boldpi^{\star},{p^{(k)}}}$.
Since $\boldpi_k$ is the maximizer of $\textsc{ConvexConPlanner}$ and $\boldpi^{\star}$ is feasible for that program, it holds that:
\begin{align}
f\Big(V_{r(\boldpi_k)}^{\boldpi_k,{p^{(k)}}} \Big)&\geq f\Big(V_{r(\boldpi^{\star})}^{\boldpi^{\star},{p^{(k)}}}\Big)\geq f\Big(V_{\widehat{r}+\alpha\bonus}^{\boldpi^{\star},{p^{(k)}}}\Big){=} f\Big(V_{r^{\star}}^{\boldpi^{\star},p^{\star}}\Big),\label{eq:convex_optimism}
\end{align}
where the second-to-last inequality holds since $r(\boldpi^{\star})$ is the maximizer of the inner program for $\boldpi^{\star}$ and the equality holds by \eqref{eq:mean_value_convex}.

We are now ready to provide the equivalent of the regret decomposition:
\begin{align*}
    f(V_{r^{\star}}^{\boldpi^{\star},p^{\star}})-f(V_{r^{\star}}^{\boldpi_k,p^{\star}})&\leq f(V_{r(\boldpi_k)}^{\boldpi_k,{p^{(k)}}})-f(V_{r^{\star}}^{\boldpi_k,p^{\star}})\leq  L\cdot\Big|V_{r(\boldpi_k)}^{\boldpi_k,{p^{(k)}}}-V_{r^{\star}}^{\boldpi_k,p^{\star}}\Big|
    \\&\leq L\cdot \Exp^{\boldpi_k}\left({\sum_{h=1}^H \bellman_{r^{(k)}}^{\boldpi_k,p^{(k)}}\big(\vs_h,\va_h,h\big)}\right)
\end{align*}
where the first inequality holds by \eqref{eq:convex_optimism}. the second inequality by Lipschitzness and the last inequality holds by simulation lemma (Lemma~\ref{lem:simulation}). 

For the consumption requirement, since $\boldpi_k$ is feasible in $\textsc{ConvexConPlanner}$, denoting again by $\bm{c}(\boldpi)$ the consumption in the maximizer for policy $\boldpi$ in the inner mathematical program. Same as above we define $\bm{c}^{(k)}=\bm{c}(\boldpi_k)$. It holds that: 
\begin{align}g\Big(\Exp^{\boldpi_k,{p^{(k)}}}\Big[\sum_{h=1}^H\bm{c}_h(\boldpi_k)\Big]\Big)\leq 0 \
\end{align}
As a result,  
\begin{align*}
    g\Big(\Exp^{\boldpi_k,p^{\star}}\Big[\sum_{h=1}^H\bm{c}^{\star}_h\Big]\Big)-g\Big(\Exp^{\boldpi_k,{p^{(k)}}}\Big[\sum_{h=1}^H\bm{c}_h(\boldpi_k)\Big]\Big)&\leq L\left\|\Exp^{\boldpi_k,p^{\star}}\Big[\sum_{h=1}^H\bm{c}^{\star}_h\Big]-\Exp^{\boldpi_k,{p^{(k)}}}\Big[\sum_{h=1}^H\bm{c}_h(\boldpi_k)\Big]\right\|_1\\
    &{=} L\sum_{i\in \resources} \Big|\Exp^{\boldpi_k,p^{\star}}\Big[\sum_{h=1}^Hc^{\star}_h(i)
    \Big]-\Exp^{\boldpi_k,{p^{(k)}}}\Big[\sum_{h=1}^H c_{h}(\boldpi_k,i)
    \Big]
    \Big|\\
    &\leq L\cdot \sum_{i\in\resources}  \Exp^{\boldpi}\left({\sum_{h=1}^H \Big|\bellman_{c_i^{(k)}}^{\boldpi_k,p^{(k)}}\big(\vs_h,\va_h,h\big)\Big|} \right),
\end{align*} where again we applied Lipschitness and simulation lemma. 
\end{proof}

\subsection{Concave-convex theorem (Theorem \ref{thm:convex})}

\begin{proof}[Proof of Theorem~\ref{thm:convex}]
The proof follows similarly to the proof of Theorem~\ref{thm:tabular} by replacing Proposition~\ref{prop:regret_decomposition} with Proposition~\ref{prop:decomposition_convex}. The linear dependency on $d$ in the consumption regret comes from the fact that the Lipschitzness of $g$ is defined in L1 norm. 
\end{proof}

\section{Analysis: Knapsack setting (Section~\ref{sec:knapsacks})}\label{app:knapsacks}
In this section, we prove the guarantee for the hard-constraint setting. The goal is to show that over $K$ episodes, our algorithm has sublinear reward regret comparing to the best dynamic policy (formally defined in Appendix~\ref{app:dynamic}), while satisfying hard budget constraints with high probability.
\subsection{Theorem with hard constraints (Theorem~\ref{thm:knapsacks})}
\begin{proof}[Proof of Theorem~\ref{thm:knapsacks}] We denote by $\textsc{Opt}$ the expected total reward of $\boldpi^\star$. 
Consider now the policy $\widetilde{\pi}^\star$ that selects the null policy with probability $\epsilon$ and follows $\boldpi^{\star}$ otherwise. This policy is feasible for \eqref{eq:approx_program_knapsack}; as a result the expected reward  $\widetilde{\boldpi}^{\star}$ for \eqref{eq:approx_program_knapsack} is at least $(1-\epsilon)\opt$. 
Since the total reward is upper bounded by $KH$, it therefore holds that:
\begin{align}\label{eq:approx_to_dynamic_knapsacks}
    \sum_{k=1}^K\Exp^{{\widetilde{\boldpi}}^{\star}}\Big[\sum_{h=1}^H r^{\star}\big(\vs_h,\va_h\big)\Big]\geq (1-\epsilon)\opt\geq \opt- \epsilon KH
\end{align}
In the high-probability event where the regret guarantee of $\textsc{AggReg}(\delta)$ does not fail, the reward of the algorithm is at least:
\begin{align}\label{eq:algorithm_to_approx_knapsacks}
    \sum_{k=1}^K \sum_{h=1}^H r_{k,h}\geq  \sum_{k=1}^K\Exp^{{\widetilde{\boldpi}}^{\star}}\Big[\sum_{h=1}^H r^{\star}(\vs_h,\va_h)\Big] - \textsc{AggReg}(\delta),
\end{align} 
Combining \eqref{eq:approx_to_dynamic_knapsacks} and \eqref{eq:algorithm_to_approx_knapsacks}, with probability $1-\delta$, the reward regret with respect to $\boldpi^\star$ is at most:
\begin{align}\label{eq:reward_regret_knapsack}
    \textsc{RewReg(K)} \leq \frac{1}{K}\textsc{AggReg}(\delta)+\epsilon H
\end{align}

We now focus on the consumption. Since we optimize \eqref{eq:approx_program_knapsack}, for any resource $i\in\resources$, when the regret guarantee  $\textsc{AggReg}(\delta)$  against $\widetilde{\boldpi}^\star$ does not fail and given that ${\widetilde{\boldpi}}^{\star}$ is feasible for \eqref{eq:approx_program_knapsack}, it holds that:
\begin{align*}
    \sum_{k=1}^K\sum_{h=1}^H c_{k,h,i}\leq \sum_{k=1}^K\Exp^{{\widetilde{\boldpi}}^{\star}}\Big[\sum_{h=1}^H c\big(\vs_h,\va_h,i\big)\Big]+\textsc{AggReg}(\delta)\leq (1-\epsilon)B_i+\textsc{AggReg}(\delta)
\end{align*}
Hence, when the regret guarantee $\textsc{AggReg}(\delta)$ does not fail, the consumption is less than $B_i$ for all $i$ as long as $\epsilon\geq \frac{\textsc{AggReg}(\delta)}{\min_i B_i}$. Moreover $\epsilon$ is a probability as a result it should also be less than $1$ which holds when $\min_i B_i\geq \textsc{AggReg}(\delta)$.
Applying on \eqref{eq:reward_regret_knapsack} and assuming without loss of generality that $KH>\min_i B_i$ (otherwise the setting is essentially unconstrained), the reward regret is at most $$ \textsc{RewReg}(K)\leq \frac{2H\textsc{AggReg}(\delta)}{\min_i B_i}.$$
\end{proof}

\subsection{Dynamic policy benchmark}
\label{app:dynamic}
We call a policy \emph{dynamic} if it 
maps the entire history to a distribution over the action space. Specifically we denote history $\mathcal{H}_{k,h}$ as the history that contains all the information from the beginning of the first episode to the end of the step $h-1$ at the $k$-th episode plus the state at step $h$ in episode k. At any episode k and step $h$, a dynamic policy $\widetilde{\boldpi}(\cdot | \mathcal{H}\kh) \in \Delta(\mathcal{A})$  maps history $\mathcal{H}\kh$ to a distribution over action space. We denote ${\Pi}_{\text{dynamic}}$ as the set of all dynamic policies that satisfies the budget constraints deterministically, i.e., for any $\widetilde{\boldpi}\in {\Pi}_{\text{dynamic}}$, when executed for $K$ episodes in the MDP, we have $\sum_{k=1}^K \sum_{h=1}^H c_i(\vs_{k,h}, \va_{k,h}) \leq B_i$ for all $i\in\resources$, deterministically. Ideally we want to compare against the best dynamic policy that maximizes the expected total reward $\max_{\widetilde\boldpi\in{\Pi}_{\text{dynamic}}}\Exp^{\widetilde\boldpi}\left[\sum_{k=1}^K \sum_{h=1}^K r_{k,h}\right]$. We denote such an optimal dynamic policy as $\widetilde\boldpi^\star$ and its expected total reward across K episodes as 
\begin{align*}
\textsc{Opt} :=\max_{\widetilde\boldpi\in{\Pi}_{\text{dynamic}}}\Exp^{\widetilde\boldpi}\left[\sum_{k=1}^K \sum_{h=1}^K r_{k,h}\right].
\end{align*}

The lemma below shows that indeed the stationary Markovian policy $\boldpi^\star$ actually achieves no smaller expected total reward across K episodes than that of the best dynamic policy.

\begin{lemma}\label{lem:optimal_dynamic_policy_knapsacks}
The reward of the policy $\boldpi^{\star}$ maximizing program \eqref{eq:objective} with $\xi(i)=\frac{B_i}{K}$ is at least as large as the per-episode reward of the optimal dynamic policy that is subject to hard constraints instead:
\begin{align*} \Exp^{\boldpi^\star}\Big[\sum_{h=1}^Hr^{\star}\big(\vs_h,\va_h\big)\Big] \geq \frac{1}{K}\max_{\widetilde\boldpi\in\Pi_{\text{dynamic}}}\Exp^{\widetilde\boldpi}\Big[ \sum_{k=1}^K \sum_{h=1}^{H} r(\vs_{k,h}, \va_{k,h})\Big] = \frac{\textsc{Opt}}{K}.
\end{align*}
\end{lemma}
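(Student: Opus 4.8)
The plan is to construct, from the optimal dynamic policy $\widetilde\boldpi^\star$, a stationary Markovian policy $\boldpi$ that is feasible for program \eqref{eq:objective} with $\xi(i)=B_i/K$ and earns per-episode reward exactly $\textsc{Opt}/K$; optimality of $\boldpi^\star$ for that program then gives the inequality. The construction goes through occupancy measures: even though $\widetilde\boldpi^\star$ is history-dependent, averaging its per-episode state-action marginals over the $K$ episodes yields a valid single-episode occupancy measure.

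First I would fix $\widetilde\boldpi^\star\in\Pi_{\text{dynamic}}$ achieving $\textsc{Opt}$, and for each step $h\in[H]$ and pair $(s,a)$ set $\rho_k(s,a,h)=\Pr^{\widetilde\boldpi^\star}[\vs_{k,h}=s,\va_{k,h}=a]$ and $\bar\rho(s,a,h)=\tfrac1K\sum_{k=1}^K\rho_k(s,a,h)$. The key point is that each $\rho_k$ satisfies the flow-conservation constraints of the occupancy LP \eqref{eq:LP_basicSetting} under transitions $p^\star$ and initial state $s_0$: since transitions depend only on the current state-action pair (not on the history), $\sum_a\rho_k(s',a,h+1)=\Pr^{\widetilde\boldpi^\star}[\vs_{k,h+1}=s']=\sum_{s,a}\rho_k(s,a,h)p^\star(s'|s,a)$, and since every episode starts at $s_0$, $\sum_a\rho_k(s,a,1)=\1\{s=s_0\}$. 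The set of such occupancy measures is a polytope, hence convex, so the average $\bar\rho$ is again a valid occupancy measure, and by the standard occupancy-measure correspondence it is induced by a stationary Markovian policy $\boldpi$ (with $\boldpi_h(a\mid s)\propto\bar\rho(s,a,h)$, defined arbitrarily where the marginal vanishes).

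Next I would transfer the guarantees. Because $\widetilde\boldpi^\star$ respects the budget deterministically, $\sum_{k=1}^K\sum_{h=1}^H c^\star(\vs_{k,h},\va_{k,h},i)\le B_i$ with probability one; taking expectations, dividing by $K$, and using linearity of the occupancy identity gives, for every $i\in\resources$,
\[
\Exp^{\boldpi,p^\star}\Big[\sum_{h=1}^H c^\star(\vs_h,\va_h,i)\Big]=\sum_{s,a,h}\bar\rho(s,a,h)\,c^\star(s,a,i)=\frac1K\,\Exp^{\widetilde\boldpi^\star}\Big[\sum_{k=1}^K\sum_{h=1}^H c^\star(\vs_{k,h},\va_{k,h},i)\Big]\le\frac{B_i}{K}=\xi(i),
\]
so $\boldpi$ is feasible for \eqref{eq:objective}. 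The same identity applied to rewards yields $\Exp^{\boldpi,p^\star}[\sum_{h=1}^H r^\star(\vs_h,\va_h)]=\tfrac1K\Exp^{\widetilde\boldpi^\star}[\sum_{k=1}^K\sum_{h=1}^H r_{k,h}]=\textsc{Opt}/K$. Since $\boldpi^\star$ maximizes \eqref{eq:objective} over feasible policies and $\boldpi$ is feasible, $\Exp^{\boldpi^\star}[\sum_h r^\star(\vs_h,\va_h)]\ge\Exp^{\boldpi}[\sum_h r^\star(\vs_h,\va_h)]=\textsc{Opt}/K$, as claimed.

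The main (and essentially only) obstacle is the first step: justifying that averaging the marginals of a history-dependent policy over episodes produces a genuine single-episode occupancy measure. This hinges on the Markov property of the transition kernel — which makes the one-step flow identity hold for each $\rho_k$ irrespective of history dependence — together with the fact that all episodes share the same deterministic initial state $s_0$, so every $\rho_k$ (and hence $\bar\rho$) meets the same initial-flow constraint; convexity of the occupancy polytope then finishes it. The remaining steps are bookkeeping via linearity of expectation. It is also worth noting that the null action is merely an ordinary element of $\actions$ added to the MDP, so its use by $\widetilde\boldpi^\star$ or by the constructed $\boldpi$ causes no difficulty.
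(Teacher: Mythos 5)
Your proposal is correct and follows essentially the same route as the paper's own proof: average the per-episode state-action marginals of the optimal dynamic policy (valid because the Markovian transitions make each episode's marginals satisfy the flow constraints), realize the average as a stationary Markovian policy, observe that deterministic budget satisfaction implies expected consumption at most $B_i/K$, and conclude by optimality of $\boldpi^\star$. The only cosmetic difference is that you invoke convexity of the occupancy polytope explicitly, while the paper verifies the flow constraints for the averaged measure directly.
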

\begin{proof}Denote $\widetilde\boldpi^\star$ as the optimal dynamic policy from $\Pi_{\text{dynamic}}$. Any policy induces a state-action distribution at episode $k$ and stage $h$, denoted as $\rho_{\widetilde\boldpi}(s, a; h, k)$, which stands for the probability of $\widetilde\boldpi$ visits state-action pair $\sa$ at stage $h$ in episode $k$.  Denote $\rho_{\widetilde\boldpi}(s,a ; h) = \sum_{k=1}^K \rho_{\widetilde\boldpi}(s, a; h, k) / K$ which stands for the probability of $\widetilde\boldpi$ visiting $\sa$ at stage $h$. We have:
\begin{align*}
\sum_{a} \rho_{\widetilde\boldpi}(s', a; h, k) = \sum_{s,a} \rho_{\widetilde\boldpi}(s, a; h-1, k) p^{\star}(s' | s,a), \forall s',
\end{align*} due to the Markovian transition $p^{\star}(s' | s,a)$, which implies that:
\begin{align*}
\sum_{a} \rho_{\widetilde\boldpi}(s', a; h) = \sum_{s,a} \rho_{\widetilde\boldpi}(s, a; h-1) p^{\star}(s' | s,a), \forall s'. 
\end{align*} Hence, $\rho_{\widetilde\pi}(s,a; h)$ satisfies the flow constraints, and hence induces a stationary Markovian policy:
\begin{align*}
\boldpi_{\widetilde\boldpi}(a |s) \propto \rho_{\widetilde\boldpi}(s,a; h) / \sum_{a}\rho_{\widetilde\boldpi}(s,a;h),
\end{align*} and $\boldpi_{\widetilde\boldpi}$ induces state-action visitation distribution that are exactly equal to  $\rho_{\widetilde\boldpi}(s,a; h)$. 

Note that $\widetilde\boldpi^\star$ satisfies the budget constraints deterministically, which means in expectation, it will satisfies the constraints as well, i.e., 
\begin{align*}
\sum_{k=1}^K \sum_{h=1}^H \sum_{\sa} \rho_{\widetilde\boldpi^\star}(s,a; h) c_i(s,a) \leq B_i, \quad \forall i\in\resources,
\end{align*} which implies that in expectation, for $\boldpi_{\widetilde\boldpi^\star}$, we have that for all $i\in\resources$:
\begin{align*}
\Exp^{\boldpi_{\widetilde\boldpi^\star}}\left[\sum_{h=1}^H c_i(s_h,a_h)\right] = \sum_{h=1}^H \sum_{\sa}\rho_{\boldpi_{\widetilde\boldpi^\star}}(s,a,h) c_i(s,a) =\sum_{k=1}^K \sum_{h=1}^H \sum_{\sa} \rho_{\widetilde\boldpi^\star}(s,a; h) c_i(s,a) / K \leq B_i/K.
\end{align*}This means that $\boldpi_{\widetilde\boldpi^\star}$ is a feasible solution of the hard-constraint program.

Similarly, we have that the expected per-episode total reward of $\widetilde\boldpi^\star$ is the same as the expected total reward of $\boldpi_{\widetilde\boldpi^\star}$:
\begin{align*}
\Exp^{\boldpi_{\widetilde\boldpi^\star}}\Big[ \sum_{h=1}^H r_h(s_h,a_h)\Big] = \frac{1}{K} \Exp^{\widetilde\boldpi^\star}\left[\sum_{k=1}^K \sum_{h=1}^H r_{k,h} \right].
\end{align*} 

Hence, due to the optimality of $\boldpi^\star$, we immediately have:
\begin{align*}
\Exp^{\boldpi^\star}\Big[\sum_{h=1}^H r_h\Big] \geq \Exp^{\boldpi_{\widetilde\boldpi^\star}}\Big[\sum_{h=1}^H r_h\Big] = \frac{1}{K} \Exp^{\widetilde\boldpi^\star}\Big[\sum_{k=1}^K \sum_{h=1}^H r_{k,h} \Big].
\end{align*}
\end{proof}

Since our approach incurs sublinear regret with respect to $\boldpi^\star$, it follows from the above lemma that it incurs sublinear regret with respect to $\textsc{Opt}$ -- the total reward across $K$ episodes from the best dynamic policy.

\section{Experimental details} \label{app:experiment}
 In the experiments, both $\textsc{ApproPO}$ and $\textsc{RCPO}$ use the same policy gradient algorithm, specifically, Advantage Actor-Critic (A2C) \cite{mnih2016asynchronous} as the learning algorithm. We implemented $\textsc{ConRL}$ using two version of \textsc{LagrConPlanner} (see algorithm~\ref{alg:lagrange_conplanner} below) as \textsc{ConPlanner} in which the planner is either value iteration (exact planner) or A2C (approximate planner similar to Dyna model-base RL \cite{dyna1991stutton}) using fictitious samples. All three algorithms have outer-loop learning rates which we tuned while hyperparameters used for A2C is same across all three methods. Here, we report the result for the best learning rate for each method.
 
\subsection{\textsc{LagrConPlanner}}
\label{app:lagr_conplanner}
Our theoretical results posit that $\textsc{ConPlanner}$ is solved optimally, which can be indeed achieved via linear programming (see \autoref{app:algorithm_details}). However in our experiments it suffices to 
use a general heuristic for $\textsc{ConPlanner}$. Our approach is to Lagrangify the constraints, and create a min-max mathematical program with the Lagrangean objective:
\begin{align*}
&\min_{\forall i\in\resources:\lambda(i)\leq 0} \max_{\boldpi}\Big(\Exp^{\boldpi,p^{(k)}}\Big[\sum_{h=1}^H r^{(k)}\big(\vs_h,\va_h\big)\Big]  +\sum_{i\in\resources}\lambda(i)\Big(\Exp^{\boldpi, p^{(k)}}\Big[\sum_{h=1}^H c^{(k)}\big(\vs_h,\va_h, i\big)\Big]- \xi(i)\Big).
\end{align*}
Define pseudo-reward $r_\lambda^{(k)}$ as 
\begin{align*}
    r_\lambda^{(k)}(s,a) = r^{(k)}(s,a) + \sum_{i \in D} \lambda(i)[c^{(k)}(s,a)-\xi(i)]
\end{align*}
With a fixed choice of Lagrange multipliers $\{\lambda(i)\}_{i\in\resources}$, this is an unconstrained \emph{planning} program which we refer to as $\textsc{Planner}(p^{(k)},r_\lambda^{(k)})$ and it can be solved by a planning oracle. 

We update Lagrange multipliers via projected gradient descent \cite{Zinkevich03}. The overhead of \textsc{ConPlanner} is  computational, as we do not require new samples. The full procedure is in Algorithm \ref{alg:lagrange_conplanner}.  The near-optimality of Algorithm~\ref{alg:lagrange_conplanner} can be proved by leveraging the fact that we are iteratively updating $\pi$ and $\lambda$ using no-regret online learning procedure (Best Response for $\pi$ and OGD for $\lambda$) (e.g., \cite{cesa2006prediction}). We omit the analysis for Algorithm~\ref{alg:lagrange_conplanner} as it is not the main focus of this work. 

\begin{algorithm}[ht]
 \caption{Lagrangean-based Constrained Planner (\textsc{LagrConPlanner})}
 \begin{algorithmic}[1]
 \label{alg:lagrange_conplanner}
 \STATE \textbf{hyper-parameters:} {learning rate $\eta$} 
 \STATE \textbf{Input:} Estimates $\widehat{p}_k$,  $\widehat{r}_k$, $\widehat{\bm{c}}_k$ and bonus $\hat{b}_k$
 \STATE \textbf{Compute bonus-enhanced model} $\model^{(k)}=\big(p^{(k)},r^{(k)},\bm{c}^{(k)})$\\
      $$p^{(k)}(s' | s,a) = \widehat{p}_k(s' | s,a ) \quad \forall s,a,s'$$\\
      $$r^{(k)}(s,a) = \widehat{r}_k(s,a)+\widehat{b}_k(s,a)\quad  \forall s,a $$\\
      $$c^{(k)}(s,a,i) = \widehat{c}_k(s,a,i)-\widehat{b}_k(s,a)\quad \forall s,a , i \in \resources$$
 \STATE Initialize Lagrange parameters $\lambda_1(i) \gets 0$ for $i\in \resources$
 \FOR{Iteration $k$ from $1$ to $N$ }
  \STATE Define 
  $$
    r_\lambda^{(k)}(s,a) = r^{(k)}(s,a) + \sum_{i \in D} \lambda(i)[c^{(k)}(s,a)-\xi(i)]
  $$
  \STATE $\pi_k \gets \textsc{Planner}(p^{(k)},r_\lambda^{(k)})$\
  \STATE $ \lambda_{k+1}(i) \gets \min\left\{0,  \lambda_k(i) - \eta  \Exp^{\pi_k,p^{(k)}}\left[\sum_{h=1}^H [c^{(k)}(\vs_h,\va_h,i)] - \xi(i)\right] \right\}\quad \forall i\in\resources$
 \ENDFOR
 \STATE \textbf{Return} mixture policy $\boldpi := \frac{1}{N} \sum_{k=1}^N \pi_k$\;
 \end{algorithmic}
\end{algorithm}
In our experiments, two versions of $\textsc{Planner}$ have been implemented: Value Iteration (exact planner) and A2C with fictitious samples (approximate planner)
\paragraph{Value Iteration as \textsc{Planner}} This program takes $p$ and $r$ as input. Finite horizon value iteration is simply solving the following acyclic dynamic program. 
$$
    Q(s,a,h)=\begin{cases}
    0 & h=H+1\\
    r(s,a) + \sum_{s'}\big[p(s'|s,a)\max_{a'}Q(s',a',h+1)\big] &h=1,\dots,H
    \end{cases}
$$
then the optimal policy for step $h$ is computed as
$$
    \boldpi_h(s)=\mathrm{argmax}_{a} Q(s,a,h)
$$
and the algorithm returns the $H$-step policy
$$
    \boldpi = (\boldpi)_{h=1}^{H}
$$
\paragraph{\textsc{A2C} with fictitious samples as \textsc{Planner}} \smedit{This program takes $p$ and $r$ as input, then, using model $p$ and $r$ it generates episodes and use those samples to train our A2C agent. Since we only call this subroutine with our estimated model ($p \leftarrow \hat{p}$ and $r \leftarrow \hat{r}$) those episodes are fictitious (not adding to sample complexity). The algorithm is given Algorithm~\ref{alg:a2c_fict} (Parameterized policy $\boldpi_\theta$ and value function estimate $V_\theta$)}
\begin{algorithm}[ht]
 \caption{A2C planner with fictitious samples}
 \begin{algorithmic}[1]
 \label{alg:a2c_fict}
 \STATE \textbf{hyper-parameters:} {learning rate $\eta$, $\alpha \in [0,1]$} 
 \STATE \textbf{Input:} transitions $p$,  reward function $r$
 \STATE Define A2C loss
 $$
   L(\theta) = \Exp^{\boldpi_\theta,p}[\sum_{h=1}^{H} -\log \boldpi_\theta(a_h|s_h)(R(h)-V_\theta(s_h)) +\alpha(R(h)-V_\theta(s_h))^2]
   $$
   $$
    R(h)=\sum_{h'=h}^{H} r(s_h,a_h)
   $$
 \STATE Initialize $\theta$ arbitrarily
 \FOR{Iteration $i$ from $1$ to $T$ }
  \STATE Emulate an episode by running $\boldpi_\theta$ on MDP with transitions $p$ and reward function $r$
  \STATE update $\theta \leftarrow \theta - \eta \nabla_\theta L(\theta)$
 \ENDFOR
 \STATE \textbf{Return} $\boldpi_\theta$
 \end{algorithmic}
\end{algorithm}

\subsection{Hyperparameter Tuning}
Both $\textsc{ConRL-A2C}$ and $\textsc{RCPO}$ used the Adam optimizer. For our method we performed a hyperparamter search on both domains over the following values in Table \ref{hyperparam_consider}; selected values are given in Table~\ref{hyperparam_select}. Note that reset row refers to when using the A2C planner during each call to the planner we tried the following options: (warm-start) reuse previous weights and reset the optimizer (warm -start), or (continue) continue learning using the previous weights (continue) and optimizer, or (none) reset the model weights and optimizer.

\begin{table}[ht]
  \caption{Considered Hyperparameters}
  \centering
  \label{hyperparam_consider}
  \begin{tabular}{ll}
    Hyperparameter & Values Considered \\
    \hline
    A2C learning rate & $10^{-2},10^{-3},10^{-4}$ \\
    lambda learning rate& $10^{0},\{1,2,5\} \times 10^{-1},2\times 10^{-2},10^{-3},2\times 10^{-3}$ \\
    reset & warm-start, continue, none  \\
    conplanner iterations & $10,20,30,50,100,150, 200, 250$ \\
    A2C Entropy coeff & $10^{-3}$ \\
    A2C Value loss coeff & $0.5$ \\
  \end{tabular}
\end{table}

\begin{table}[ht]
  \caption{Selected Hyperparameters}
  \centering
  \label{hyperparam_select}
  \begin{tabular}{lll}
    Hyperparameter &  Gridworld  & Box\\
    \hline
    A2C learning rate  & $10^{-3}$ & $10^{-3}$\\
    lambda learning rate & $2\times 10^{-1}$ & $10^{-2}$ \\
    reset & none & none \\
    conplanner iterations & $10$ & $10$\\
    A2C Entropy coeff  & $10^{-3}$ & $10^{-3}$\\
    A2C Value loss coeff & $0.5$ & $0.5$\\
  \end{tabular}
\end{table}

\smedit{\subsubsection{\textsc{TFW-UCRL2}}
We used the code provided by the author (with no algorithmic parameter changed). Moreover, \textsc{TFW-UCRL2} uses weights $(L_0,L_1,\dots,L_k)$ in the objective function $g(w)$ defined in Equation 1 in \cite{Cheung19}. We only tuned these weights to identify the one maximizing the reward while guaranteeing the constraint satisfaction (for a more fair comparison with the baseline). In our experiments, we have $k=2$ and you can see the performance of \textsc{TFW-UCRL2} for $L_0=1$ and $L_1 \in \{10^{-2},10^{-3},10^{-4},10^{-5}\}$ in Figure~\ref{fig:tuning_tfw}.}

\begin{figure*}[ht]
\centering
    \includegraphics[width=0.9\textwidth]{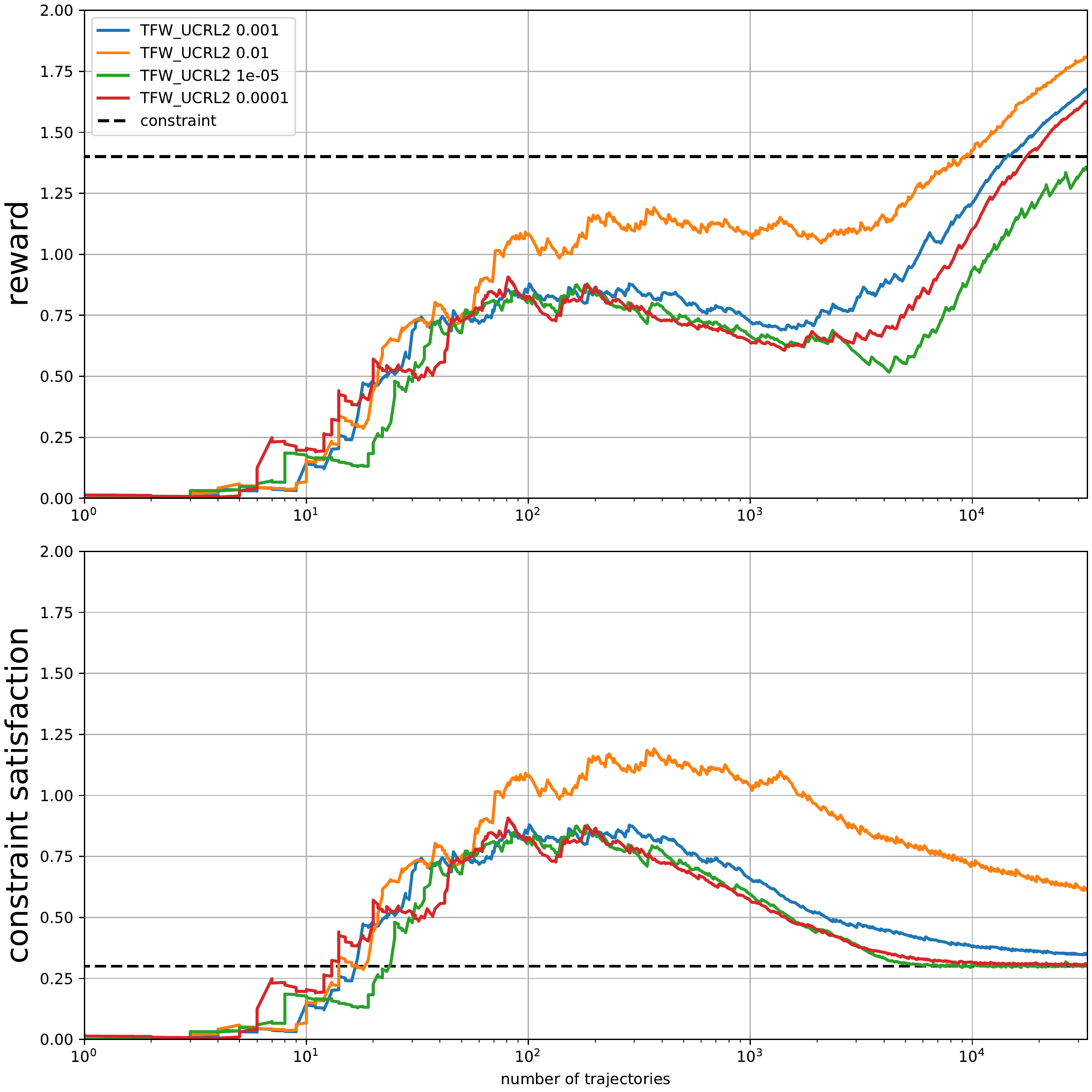}
    \caption{Performance of \textsc{TFW-UCRL2} with different choices of $L_1$ ($L_0=1$)}
    \label{fig:tuning_tfw}
    \vspace{-10pt}
\end{figure*}

\section{Concentration tools}\label{app:auxil}

This section contains general concentration inequalities that are not tied with the constrained RL setting considered in the paper.

\newcommand{\calF}{\mathcal{F}}

\begin{lemma}[Hoeffding]\label{lem:azuma_hoeffding}
    \vspace{.1in}
    Let $\{X_i\}_{i=1}^N$ be a set with each $X_i$ i.i.d sampled from some distribution and $\Exp[X_i] = 0$ for all $i$ and $\max_i | X_i| \leq b$. Then with probability at least $1-\delta$, it holds that:
        \begin{align*}
            \left\lvert \frac{1}{N}\sum_{i=1}^N X_i \right\rvert \leq b\sqrt{\frac{2\ln(2/\delta)}{N}}.
        \end{align*}
        \end{lemma}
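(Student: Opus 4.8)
The plan is to establish this by the standard Chernoff-bound (exponential-moment) argument applied to the centered sum $S_N \coloneqq \sum_{i=1}^N X_i$. First I would fix a parameter $\lambda>0$ and apply Markov's inequality to the nonnegative random variable $e^{\lambda S_N}$, giving $\Pr[S_N \ge t] \le e^{-\lambda t}\,\Exp[e^{\lambda S_N}]$ for any $t>0$. Using the i.i.d.\ assumption, the exponential moment factorizes as $\Exp[e^{\lambda S_N}] = \prod_{i=1}^N \Exp[e^{\lambda X_i}]$, so it suffices to control the single-variable moment generating function.

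The key step is Hoeffding's lemma: for a mean-zero random variable $X$ with $|X|\le b$, one has $\Exp[e^{\lambda X}] \le e^{\lambda^2 b^2/2}$. I would prove this by convexity of $x\mapsto e^{\lambda x}$ on $[-b,b]$: writing any $x\in[-b,b]$ as a convex combination of $-b$ and $b$ yields $e^{\lambda x}\le \frac{b-x}{2b}e^{-\lambda b}+\frac{b+x}{2b}e^{\lambda b}$; taking expectations and using $\Exp[X]=0$ gives $\Exp[e^{\lambda X}]\le \cosh(\lambda b)\le e^{\lambda^2 b^2/2}$, where the last inequality is a term-by-term comparison of the power series of $\cosh$ and of $t\mapsto e^{t^2/2}$.

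Combining the two steps gives $\Pr[S_N\ge t]\le \exp\!\big(-\lambda t + N\lambda^2 b^2/2\big)$ for all $\lambda>0$; choosing $\lambda = t/(Nb^2)$, the minimizer of the exponent, yields $\Pr[S_N\ge t]\le \exp\!\big(-t^2/(2Nb^2)\big)$. Setting this equal to $\delta/2$ and solving gives $t = b\sqrt{2N\ln(2/\delta)}$, i.e.\ $\Pr\big[\tfrac1N S_N \ge b\sqrt{2\ln(2/\delta)/N}\big]\le \delta/2$. Applying the identical argument to the variables $-X_i$, which are also mean-zero and bounded by $b$, bounds the lower tail $\Pr\big[\tfrac1N S_N \le -b\sqrt{2\ln(2/\delta)/N}\big]$ by $\delta/2$, and a union bound over the two one-sided events gives the two-sided bound with total failure probability $\delta$.

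As for difficulty, there is essentially no obstacle here — this is the textbook proof of Hoeffding's inequality. The only mildly technical ingredient is Hoeffding's lemma itself (the sub-Gaussian MGF bound), and even that could simply be cited as a standard fact; the remainder is Markov's inequality, independence, and optimizing a quadratic in the exponent.
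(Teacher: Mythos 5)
Your proof is correct: it is the classical Chernoff/Hoeffding argument (Markov's inequality on the exponential moment, Hoeffding's lemma $\Exp[e^{\lambda X}]\le e^{\lambda^2 b^2/2}$ for mean-zero $|X|\le b$, optimization of $\lambda$, and a union bound over the two tails), and the constants work out exactly to the stated bound $b\sqrt{2\ln(2/\delta)/N}$. The paper itself states this lemma as a standard concentration tool without proof, so your argument is precisely the standard derivation it implicitly relies on; nothing further is needed.
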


    \begin{lemma}[Anytime version of Hoeffding]\label{lem:azuma_hoeffding_anytime}
    \vspace{.1in}
    Let $\{X_i\}_{i=1}^{\infty}$ be a set with each $X_i$ i.i.d sampled from some distribution and $\Exp[X_i] =0$ for all i and $\max_i |X_i |\leq b$. Then
    with probability at least $1-\delta$, for any $N\in \mathbb{N}^+$, it holds that:
        \begin{align*}
            \left\lvert \frac{1}{N}\sum_{i=1}^N X_i \right\rvert \leq b\sqrt{\frac{2\ln(4 N^2/\delta)}{N}}.
        \end{align*}
    \end{lemma}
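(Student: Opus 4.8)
The plan is to bootstrap the anytime bound from the fixed-$N$ Hoeffding inequality (Lemma~\ref{lem:azuma_hoeffding}) via a union bound over all $N$, allocating a decaying share of the failure probability to each $N$. First I would fix $N \in \mathbb{N}^+$ and apply Lemma~\ref{lem:azuma_hoeffding} to the i.i.d.\ zero-mean variables $X_1,\dots,X_N$ with the per-$N$ failure probability $\delta_N \coloneqq \delta/(2N^2)$. This gives, with probability at least $1-\delta_N$,
\[
\left\lvert \frac{1}{N}\sum_{i=1}^N X_i \right\rvert \le b\sqrt{\frac{2\ln(2/\delta_N)}{N}} = b\sqrt{\frac{2\ln(4N^2/\delta)}{N}},
\]
which is precisely the claimed inequality for that value of $N$.

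Second, I would take a union bound over all $N \ge 1$. The total failure probability is at most
\[
\sum_{N=1}^{\infty} \delta_N = \frac{\delta}{2}\sum_{N=1}^{\infty} \frac{1}{N^2} = \frac{\delta}{2}\cdot\frac{\pi^2}{6} = \frac{\pi^2}{12}\,\delta \le \delta,
\]
using $\pi^2/12 < 1$. Hence, on the complementary event (probability at least $1-\delta$), the stated bound holds simultaneously for every $N \in \mathbb{N}^+$, which is exactly the conclusion.

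The only point requiring (minor) care is the choice of the budget $\delta_N$: it must be summable over $N$, so a polynomial decay like $N^{-2}$ is needed rather than the non-summable $N^{-1}$, and the leading constant must be tuned so that $\sum_N \delta_N \le \delta$ while $\ln(2/\delta_N)$ stays of order $\ln(N^2/\delta)$ to reproduce the stated form of the bound. The factor $1/2$ in $\delta_N = \delta/(2N^2)$ works because $\sum_{N \ge 1} N^{-2} = \pi^2/6 < 2$. No probabilistic ingredient beyond the already-invoked Lemma~\ref{lem:azuma_hoeffding} is used, so there is no real obstacle here — the whole argument is a careful union bound.
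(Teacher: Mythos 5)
Your proof is correct and follows essentially the same route as the paper: apply the fixed-$N$ Hoeffding bound with per-$N$ failure probability $\delta/(2N^2)$ and union bound over $N$, using $\sum_{N\ge 1} N^{-2} = \pi^2/6 < 2$. In fact your bookkeeping is slightly more careful than the paper's own (which states the per-$N$ allocation as $\delta/N^2$ but then sums $\delta/(2N^2)$); your choice $\delta_N = \delta/(2N^2)$ is the one that exactly reproduces the stated $\ln(4N^2/\delta)$ factor.
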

    \begin{proof}
    We first fix $N\in\mathbb{N}^+$ and apply standard Hoeffding (Lemma~\ref{lem:azuma_hoeffding}) with a failure probability $\delta/N^2$. Then we apply a union bound over  $\mathbb{N}^+$ and use the fact that $\sum_{N > 0} \frac{\delta}{2N^2} \leq \delta$ to conclude the lemma. 
    \end{proof}

The following lemma is used when bounding the final regret in the above analysis where we bound the difference between the cumulative Bellman error along the empirical trajectories and the cumulative Bellman error under the expectation of trajectories (the expectation is taken with respect to the policies generating these trajectories cross episodes).

\begin{lemma} Consider a sequence of episodes $k = 1$ to $K$, a sequence of policies $\{\boldpi_k\}_{k=1}^K$, and a sequence of functions $\{f_k\}_{k=1}^K$ with corresponding filtration $\{\mathcal{F}_k\}$ with $\boldpi_k\in\mathcal{F}_{k-1}$ and $f_k\in\mathcal{F}_{k-1}$. Each policy $\boldpi_k$ generates a sequence of trajectory $\{s\kh, s\kh\}_{h=1}^H$. Denote a function $f_k:\mathcal{S}\times\mathcal{A}\to [0, C]$, with $f_k\in\mathcal{F}_{k-1}$. With probability at least $1-\delta$, for any $K$, we have:
\begin{align*}
\left\lvert \sum_{i=1}^K \sum_{h=1}^H f_k(s\kh,a\kh) - \sum_{k=1}^K \Exp^{\boldpi_k}\left( \sum_{h=1}^H f_k(\vs(h), \va(h)) \right) \right\rvert \leq C \sqrt{ 2\ln(4K^2 /\delta) K H }.
\end{align*}
\label{lemma:expectation_to_realization}
\end{lemma}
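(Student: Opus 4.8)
The statement is a concentration inequality for a sum of martingale differences, so the natural approach is to build a martingale from the quantity $\sum_{h=1}^H f_k(s_{k;h},a_{k;h})$ and apply an Azuma--Hoeffding-type bound. First I would set up the filtration carefully: let $\mathcal{F}_{k-1}$ be the $\sigma$-algebra generated by everything observed through the end of episode $k-1$, so that both $\boldpi_k$ and $f_k$ are $\mathcal{F}_{k-1}$-measurable (this is exactly the hypothesis). For each episode $k$ define
\[
X_k \;=\; \sum_{h=1}^H f_k(s_{k;h},a_{k;h}) \;-\; \Exp^{\boldpi_k}\!\Big[\sum_{h=1}^H f_k(\vs_h,\va_h)\Big].
\]
Since the trajectory $\{s_{k;h},a_{k;h}\}_{h=1}^H$ in episode $k$ is generated by running the $\mathcal{F}_{k-1}$-measurable policy $\boldpi_k$ in the true MDP, the conditional expectation $\Exp[X_k \mid \mathcal{F}_{k-1}] = 0$; hence $\{X_k\}$ is a martingale difference sequence with respect to $\{\mathcal{F}_k\}$. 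Because $f_k$ takes values in $[0,C]$ and there are $H$ steps, each term $\sum_{h=1}^H f_k(s_{k;h},a_{k;h})$ lies in $[0,CH]$, and so does its conditional mean; therefore $|X_k| \le CH$ almost surely.

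Next I would apply the anytime Azuma--Hoeffding bound. The cleanest route is to mimic the proof of Lemma~\ref{lem:azuma_hoeffding_anytime}: fix $K\in\mathbb{N}^+$, apply the standard bounded-difference Azuma inequality to $\sum_{k=1}^K X_k$ with failure probability $\delta/K^2$, giving
\[
\Big|\sum_{k=1}^K X_k\Big| \le CH\sqrt{\frac{2\ln(4K^2/\delta)}{K}}\cdot\sqrt{K}\cdot\frac{1}{\sqrt{H}}\,?
\]
— more precisely, with increment bound $CH$ over $K$ steps Azuma gives $|\sum_{k=1}^K X_k|\le CH\sqrt{2K\ln(2K^2/\delta)}$, which is slightly weaker in the $H$-dependence than the claimed $C\sqrt{2\ln(4K^2/\delta)KH}$. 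To recover the stated $\sqrt{H}$ (rather than $H$) factor I would instead not lump each episode into a single increment but run the martingale over the $KH$ individual steps: let the per-step differences be $f_k(s_{k;h},a_{k;h}) - \Exp[f_k(s_{k;h},a_{k;h})\mid \text{history before step }(k,h)]$, each bounded by $C$ in absolute value, and note that the telescoped sum of their conditional expectations along episode $k$ equals $\Exp^{\boldpi_k}[\sum_h f_k(\vs_h,\va_h)]$. Then Azuma over $KH$ steps with increment bound $C$ yields $|\sum_k\sum_h(\cdots)| \le C\sqrt{2KH\ln(2/\delta')}$, and taking $\delta' = \delta/K^2$ with a union bound over $K\in\mathbb{N}^+$ (using $\sum_K \delta/(2K^2)\le\delta$, exactly as in Lemma~\ref{lem:azuma_hoeffding_anytime}) gives the claimed bound $C\sqrt{2\ln(4K^2/\delta)KH}$, valid for all $K$ simultaneously.

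**Main obstacle.** The only subtlety — and the step I would be most careful about — is verifying that after telescoping the step-level conditional expectations one genuinely recovers $\Exp^{\boldpi_k}[\sum_{h=1}^H f_k(\vs_h,\va_h)]$, i.e.\ that conditioning on the within-episode history and then summing reproduces the policy-value expectation. This is a standard tower-property computation (the law of total expectation applied stepwise along the trajectory generated by the fixed policy $\boldpi_k$), but it relies crucially on $\boldpi_k$ and $f_k$ being fixed given $\mathcal{F}_{k-1}$, so that no adaptivity within the episode spoils the martingale structure. Everything else (the bounded-increment verification and the anytime union bound over $K$) is routine and mirrors Lemma~\ref{lem:azuma_hoeffding_anytime} verbatim.
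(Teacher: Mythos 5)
Your overall route is the one the paper itself takes: pass from the episode-level quantity to per-step increments bounded by $C$, apply Hoeffding/Azuma over the $KH$ steps, and make the bound anytime via a union bound with failure probability $\delta/K^2$ as in Lemma~\ref{lem:azuma_hoeffding_anytime}. However, the step you flag as the ``main obstacle'' is a genuine gap, not a routine tower-property computation. The per-step differences $f_k(s_{k;h},a_{k;h})-\Exp\bigl[f_k(s_{k;h},a_{k;h})\mid\mathcal{H}_{k,h}\bigr]$ do form a martingale difference sequence with increments bounded by $C$, but their compensator $\sum_{h=1}^H\Exp\bigl[f_k(s_{k;h},a_{k;h})\mid\mathcal{H}_{k,h}\bigr]$ is a random variable depending on the realized within-episode prefix of the trajectory; it equals $\Exp^{\boldpi_k}\bigl[\sum_{h=1}^H f_k(\vs_h,\va_h)\bigr]$ only after a further expectation conditioned on $\mathcal{F}_{k-1}$, not pointwise. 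The leftover term $\sum_{h}\bigl(\Exp[f_k(s_{k;h},a_{k;h})\mid\mathcal{H}_{k,h}]-\Exp^{\boldpi_k}[f_k(\vs_h,\va_h)]\bigr)$ is itself of order $CH$ per episode and does not telescope away, so the step-level martingale controls a different quantity from the one appearing in the lemma.

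Moreover, the $\sqrt{H}$ rate cannot be rescued in general: consider an MDP whose first transition moves to one of two absorbing states with probability $1/2$ each, with $f_k\equiv C$ on one and $f_k\equiv 0$ on the other; then $X_k=\pm C(H-1)/2$ i.i.d.\ across episodes, and $\lvert\sum_{k\le K}X_k\rvert$ is of order $CH\sqrt{K}$ with constant probability, exceeding $C\sqrt{2\ln(4K^2/\delta)\,KH}$ once $H\gg\ln(K/\delta)$. So only your first, episode-level argument is airtight, and it yields the weaker bound $CH\sqrt{2K\ln(4K^2/\delta)}$. For what it is worth, the paper's own proof follows exactly the step-level route you sketch and relies on the same pointwise identity ($\Exp_{k;h}[v_{k;h}]=\Exp^{\boldpi_k}[f_k(s_{k;h},a_{k;h})]$), which fails for the same reason; your write-up therefore reproduces the paper's argument, gap included. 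The bound that the argument genuinely supports is the episode-level one (equivalently, the stated bound with $C$ replaced by $CH$), or alternatively the stated $C\sqrt{KH}$ bound but with the left-hand side replaced by the per-step-compensated sum $\sum_{k,h}\bigl(f_k(s_{k;h},a_{k;h})-\Exp[f_k(s_{k;h},a_{k;h})\mid\mathcal{H}_{k,h}]\bigr)$; either fix changes how the lemma is invoked in the proof of Theorem~\ref{thm:tabular}.
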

\begin{proof}
Denote the random variable $v\kh =  f_k\left( s\kh, a\kh \right)$. Denote $\Exp{\kh}$ as the conditional expectation that is conditioned on all history from the beginning to time step $h$ (not including step $h$) at episode $k$. Note that we have: $\Exp\kh\left[ v_k \right] = \Exp^{\boldpi_k}\left(f_k\left(s\kh, a\kh \right)\right)$. Note that $| v\kh | \leq C$ for any $k,h$ by the assumption on $f_k$. Hence, $\{v\kh\}_{k,h}$ forms a sequence of Martingales. Applying Hoeffding's inequality, we have with probability at least $1-\delta$,
\begin{align*}
\left\lvert \sum_{k=1}^K\sum_{h=1}^H v\kh - \sum_{k=1}^K \Exp^{\boldpi_k}\left( \sum_{h=1}^H f_k(\vs(h), \va(h)) \right) \right\rvert \leq C \sqrt{ 2\ln(2/\delta) KH} = C\sqrt{2\ln(2/\delta)H K}.
\end{align*}Assigning failure probability $\delta/k^2$ for each episode $k$ and using a union bound over all episodes conclude the proof. 
\end{proof}






\end{document}